\documentclass{article} 
\usepackage{iclr2027_conference,times}

\usepackage[utf8]{inputenc} 
\usepackage[T1]{fontenc}    
\usepackage[hidelinks]{hyperref}       
\usepackage{url}            
\usepackage{booktabs}       
\usepackage{multirow}       
\usepackage{enumitem}       
\usepackage{amsfonts}       
\usepackage{nicefrac}       
\usepackage{microtype}      
\usepackage{xcolor}         
\usepackage{graphicx}       

\usepackage{amsmath}
\usepackage{amsthm}
\usepackage{algorithm}
\usepackage{algpseudocode}
\usepackage{cleveref}
\usepackage{glossaries}
\glsdisablehyper
\AddToHook{cmd/appendix/before}{%
    \crefalias{section}{appendix}%
    \crefalias{subsection}{appendix}
}

\newcommand{\drift}{b}
\newcommand{\priordrift}{h}
\newcommand{\postdrift}{f}
\newcommand{\odedrift}{\bar f}
\newcommand{\vol}{g}
\newcommand{\flow}{\Psi}
\newcommand{\obs}{X}
\newcommand{\gauge}{\Omega}

\newcommand{\tr}{\operatorname{tr}}
\newcommand{\gl}{\operatorname{GL}}

\newcommand{\skewset}{\mathfrak{so}}
\newcommand{\ltri}{\operatorname{ltri}}
\newcommand{\KL}[2]{D_{\mathrm{KL}}\left(#1 \,\middle\|\, #2\right)}
\newcommand{\E}{\mathbb{E}}
\newcommand{\loss}{\mathcal{L}}

\newcommand{\diff}{{\mathrm{d}}}

\newcommand{\Norm}{\mathcal{N}}
\newcommand{\reals}{\mathbb{R}}

\newacronym{ode}{ODE}{Ordinary Differential Equation}
\newacronym{sde}{SDE}{Stochastic Differential Equation}
\newacronym{pde}{PDE}{Partial Differential Equation}
\newacronym{fpk}{FPK}{Fokker--Planck--Kolmogorov}
\newacronym{vi}{VI}{Variational Inference}
\newacronym{elbo}{ELBO}{Evidence Lower Bound}
\newacronym{kl}{KL}{Kullback--Leibler}

\newtheorem{theorem}{Theorem}

\newtheorem{proposition}[theorem]{Proposition}

\newtheorem{corollary}[theorem]{Corollary}

\crefname{theorem}{Theorem}{Theorems}
\crefname{proposition}{Proposition}{Propositions}
\crefname{lemma}{Lemma}{Lemmas}
\crefname{corollary}{Corollary}{Corollaries}
\crefname{definition}{Definition}{Definitions}
\crefname{remark}{Remark}{Remarks}
\AddToHook{env/proposition/begin}{\crefalias{theorem}{proposition}}
\AddToHook{env/lemma/begin}{\crefalias{theorem}{lemma}}
\AddToHook{env/corollary/begin}{\crefalias{theorem}{corollary}}
\AddToHook{env/definition/begin}{\crefalias{theorem}{definition}}
\AddToHook{env/remark/begin}{\crefalias{theorem}{remark}}

\title{Gaussian Flow Dynamics: \\ Simulation-Free Neural SDE Learning Beyond One-Time Marginals}

\author{%
  Grigory Bartosh \\
  Informatics Institute \\
  University of Amsterdam \\
  \texttt{g.bartosh@uva.nl} \\
  \And
  Christian A. Naesseth \\
  Informatics Institute \\
  University of Amsterdam \\
  \texttt{christian.a.naesseth@uva.nl} \\
}

\newif\ifpreprint
\preprinttrue 
\ifpreprint\iclrfinalcopy\fi
\begin{document}

\maketitle
\ifpreprint\lhead{}\renewcommand{\headrulewidth}{0pt}\vspace{-\baselineskip}\fi

\begin{abstract}
Simulation-free training of latent \glspl{sde} relies on a variational posterior process whose one-time marginals are tractable, typically Gaussian. Such marginals, however, do not determine the underlying dynamics: many processes share the same marginals while differing in their temporal structure, and existing parameterizations fix this structure implicitly, which restricts the posterior family and biases the learned model. We introduce Gaussian flow dynamics, which construct stochastic processes directly from smoothly evolving Gaussian marginals while making the marginal-preserving, or gauge, degrees of freedom explicit and parameterizable. The construction admits state-dependent diffusion coefficients and recovers every linear \gls{sde} with additive noise and a non-degenerate Gaussian initial distribution. Building on it, we propose Gauge Matching, a simulation-free method for latent \gls{sde} learning that combines Gaussian flow dynamics with the \gls{sde} Matching objective. Gauge Matching costs at most quadratically in the latent dimension per step, like \gls{sde} Matching, but learns the temporal structure of the posterior beyond its one-time marginals. It comes within a nat of Helmholtz-\gls{sde}, which computes the gauge from the prior Jacobian at cubic cost, on the linear benchmark where the exact posterior is known, matches it on nonlinear systems, and applies where Helmholtz-\gls{sde} does not, to state-dependent noise.
\end{abstract}
\glsresetall
\ifpreprint\vspace{-\baselineskip}\fi

\section{Introduction}

Neural or latent \glspl{sde} model a time series as noisy observations of an unobserved state that evolves according to a learnable \gls{sde} \citep{movellan2002monte,tzen2019neural,hodgkinson2020stochastic,li2020scalable}, a natural choice for irregularly sampled, inherently stochastic dynamics, from neuroscience to fluid dynamics \citep{course2023state,smith2026closing}. Training them requires the posterior over latent trajectories, which \gls{vi} approximates with another \gls{sde}, the \emph{posterior process}, whose drift depends on the data. Simulation-based methods integrate this process numerically at every gradient step \citep{li2020scalable}; simulation-free methods \citep{course2023amortized,bartosh2025sde} instead parameterize its one-time marginals, which can be sampled at any time, so that the objective becomes an expectation over a random time and state and a gradient step costs the same for any length of the series.

The price of this efficiency is that the marginals underdetermine the process, and a method that parameterizes marginals still commits to one of the many processes that share them, implicitly. For Gaussian marginals, \citet{course2023amortized,course2023state} take the affine drift whose linear part is symmetric, and \gls{sde} Matching \citep{bartosh2025sde} takes the drift induced by a particular square root of the covariance. \citet{smith2026closing} recently showed that this choice, the \emph{gauge}, is not benign: a fixed gauge restricts the variational family to a strict subset of the processes available to simulation-based methods, which degrades both inference and the learned dynamics when the posterior is uncertain. Their remedy, Helmholtz-\gls{sde}, keeps the marginals of \gls{sde} Matching and adds to its drift the marginal-preserving component of the difference between the prior and posterior drifts; the correction needs the Jacobian of the prior drift, costs at least cubically in the latent dimension, and requires state-independent noise.

We make the gauge part of the parameterization. We introduce \emph{Gaussian flow dynamics}, which build a stochastic process from a prescribed path of Gaussian marginals. Factorizing the square root of the covariance into a Cholesky factor and an orthogonal matrix separates the marginals from the gauge, which enters the dynamics only through a skew-symmetric generator that can be parameterized directly, at only quadratic cost and with state-dependent noise. We prove that the family is complete: it recovers every linear \gls{sde} with state-independent noise and a non-degenerate Gaussian initial distribution, and every marginal-preserving affine field is generated by such a skew-symmetric matrix. Building on this, we propose \emph{Gauge Matching}, which combines Gaussian flow dynamics with the \gls{sde} Matching objective, so that the gauge is selected by the objective rather than by the parameterization. We characterize the selected gauge, in closed form for state-independent noise, and show that Helmholtz-\gls{sde} computes it for a linearized prior process with state-independent diffusion. Experimentally, the learned gauge removes most of the gap of the fixed gauges to the true posterior process in the linear-Gaussian (conjugate) setting, where that gap is available in closed form, matches Helmholtz-\gls{sde} on nonlinear systems at lower cost, improves on it when the prior drift is known and its linearization is poor, and extends to state-dependent noise, where Helmholtz-\gls{sde} does not apply. \Cref{sec:relatedwork} discusses related work.

\section{Background}

\subsection{Dynamics and Their Marginals}
Let $z_t$ solve the \gls{ode} $\diff z_t = \drift(t,z_t) \diff t$ on $[0, T]$ with \emph{vector field} $\drift : [0,T] \times \reals^d \to \reals^d$. Its \emph{flow map} sends an initial condition $z_0$ to the solution $z_t$ at time $t$; conversely, a continuously differentiable flow map $\Phi : [0,T] \times \reals^d \to \reals^d$ with $\Phi(t, \cdot)$ invertible for every $t$ determines its vector field: the trajectories $z_t = \Phi(t, z_0)$ solve the \gls{ode} with
\begin{align}
    \drift(t, z) &= \partial_t \Phi(t, z_0) \big|_{z_0 = \Phi^{-1}(t, z)},
    \label{eq:flowvectorfield}
\end{align}
where the inverse is taken with respect to the second argument \citep[Fundamental Theorem on Time-Dependent Flows, Theorem 9.48]{lee2012smooth}. If the initial condition is random with density $p(0, z)$, then under regularity assumptions the solution has a density $p(t, z)$ satisfying the continuity equation
\begin{align}
    \partial_t p(t, z) &= -\nabla \cdot [\drift(t,z) p(t, z)],
    \label{eq:continuity}
\end{align}
where $\nabla \cdot v = \sum_{i=1}^d \partial_{z^i} v^i$ is the divergence of a vector field $v = (v^1, \ldots, v^d)^\top$. Adding noise gives a diffusion process $Z_t$, defined by the \gls{sde} $\diff Z_t = \drift(t,Z_t) \diff t + \vol(t,Z_t)\diff W_t$ with $Z_0 \sim p(0, z)$, \emph{drift coefficient} $\drift$, \emph{diffusion coefficient} $\vol : [0,T] \times \reals^d \to \reals^{d\times d'}$ and a $d'$-dimensional Brownian motion $W_t$. Its marginal densities satisfy the \gls{fpk} equation
\begin{align}
    \partial_t p(t, z) = -\nabla \cdot [\drift(t,z) p(t, z) ] + \frac{1}{2} \nabla \cdot\nabla \cdot[\vol(t,z) \vol(t,z)^\top p(t, z)] ,
    \label{eq:fpk}
\end{align}
where the divergence of a matrix-valued function is taken column-wise.

\subsection{Simulation-Free Latent SDE Learning}
\label{sec:simfree}

\paragraph{Latent SDEs}
A latent \gls{sde} model explains observations $\obs = (x_{t_1}, \ldots, x_{t_N})$ at times $0 \le t_1 < \cdots < t_N \le T$ through an unobserved latent process: the \emph{prior process} is a diffusion, and the observations are conditionally independent given the latent path, $x_{t_i} \sim p_\theta(x \mid z_{t_i})$. Since the marginal likelihood $p_\theta(\obs)$ is intractable, the model is trained with \gls{vi}, using a \emph{posterior process} whose drift coefficient depends on the observations but whose diffusion coefficient is shared with the prior process,
\begin{align}
    \text{prior:}\quad \diff Z_t &= \priordrift_\theta(t, Z_t) \diff t + \vol_\theta(t, Z_t) \diff W_t, \quad Z_0 \sim p_\theta(0, z), \label{eq:prior} \\
    \text{posterior:}\quad \diff Z_t &= \postdrift_\phi(t, Z_t, \obs) \diff t + \vol_\theta(t, Z_t) \diff W_t, \quad Z_0 \sim q_\phi(0, z \mid \obs). \label{eq:posterior}
\end{align}
Writing $q_\phi(t, z \mid \obs)$ for the marginal density of the posterior process, the negative \gls{elbo} is
\begin{align}
    -\log p_\theta(\obs) \le \loss &= \KL{q_\phi(0, z \mid \obs)}{p_\theta(0, z)} + \frac{1}{2} \int_0^T \E_{q_\phi(t, z \mid \obs)} \left[ \left\| r_{\theta, \phi}(t, Z_t, \obs) \right\|_2^2 \right] \diff t \nonumber \\
    &\quad - \sum_{i=1}^N \E_{q_\phi(t_i, z \mid \obs)} \left[ \log p_\theta(x_{t_i} \mid Z_{t_i}) \right], \label{eq:elbo}
\end{align}
where the residual $r_{\theta, \phi}$ is defined by $\vol_\theta(t, z) r_{\theta, \phi}(t, z, \obs) = \priordrift_\theta(t, z) - \postdrift_\phi(t, z, \obs)$; for simplicity we assume throughout that $\vol_\theta(t, z)$ is square and invertible, so that $r_{\theta, \phi}$ is unique. The first two terms together are the \gls{kl} divergence between the path measures of the posterior and prior processes, given by Girsanov's theorem \citep{oksendal2003stochastic} and finite because the two processes share a diffusion coefficient. We call the second term the \emph{path-space \gls{kl} term}; it is the only term of $\loss$ that depends on the posterior process through more than its marginals.

\paragraph{Simulation-Free Estimation}
If $q_\phi(t, z \mid \obs)$ can be sampled directly for any $t$, every term of \cref{eq:elbo} is an expectation that can be estimated by sampling a time $t$ uniformly on $[0, T]$ and a state $Z_t \sim q_\phi(t, z \mid \obs)$, without simulating the posterior process \citep{course2023amortized,bartosh2025sde}. \gls{sde} Matching \citep{bartosh2025sde} achieves this in three steps: (i) define the marginals through a map of noise, $Z_t = F_\phi(t, \varepsilon, \obs)$ with $\varepsilon \sim \Norm(0, I)$; (ii) read $F_\phi$ as a flow map with $\varepsilon$ in the role of $z_0$, and obtain from \cref{eq:flowvectorfield} the vector field $\odedrift_\phi$ of the \gls{ode} that generates these marginals; (iii) convert the \gls{ode} into an \gls{sde} with the diffusion coefficient of the prior process, preserving the marginals \citep{song2021score},
\begin{align}
    \postdrift_{\phi, \theta}(t, z, \obs) &= \odedrift_\phi(t, z, \obs) + \frac{1}{2} \Sigma_\theta(t, z) \nabla_z \log q_\phi(t, z \mid \obs) + \frac{1}{2} \nabla \cdot \Sigma_\theta(t, z),
    \label{eq:odetosde}
\end{align}
where $\Sigma_\theta = \vol_\theta \vol_\theta^\top$; we write $\postdrift_\phi$ for short when the dependence on $\theta$ is not needed. \citet{bartosh2025sde} take $F_\phi$ affine in $\varepsilon$, so that the posterior marginals are Gaussian. The marginals, however, fix the posterior process only up to a marginal-preserving vector field, so this construction implicitly selects one compatible drift among many; making the selection explicit is the subject of the next section.

\section{Gaussian Flow Dynamics}
\label{sec:gaussianflowdynamics}

\emph{Gaussian flow dynamics} are deterministic and stochastic dynamics constructed directly from a time-dependent affine transformation of a standard Gaussian, so that their marginals are the Gaussian laws it defines; the flow-map perspective exposes the degrees of freedom that are invisible from the marginals alone. Throughout this section we write $q$ for the marginal densities, anticipating their role as posterior marginals in \cref{sec:gaugematching}, and suppress observations and parameters.

\paragraph{Gaussian Flow}
We define the \emph{Gaussian flow map} $\flow$ as an affine transformation of Gaussian noise,
\begin{align}
    \flow(t, \varepsilon) &= m(t)+S(t)\varepsilon, \quad \varepsilon \sim \Norm(0,I),
    \label{eq:flowmap}
\end{align}
for continuously differentiable $m : [0,T] \to \reals^d$ and invertible, continuously differentiable ${S : [0,T] \to \reals^{d\times d}}$. The random variable $Z_t = \flow(t, \varepsilon)$ has density
\begin{align}
    q(t, z) = \Norm(z \mid m(t), S(t)S(t)^\top),
    \label{eq:gaussianmarginals}
\end{align}
and by \cref{eq:flowvectorfield} its trajectories solve the \gls{ode}
\begin{align}
    \diff Z_t &= \odedrift(t,Z_t) \diff t, \qquad
    \odedrift(t,z) = \dot m(t)+\dot S(t) S(t)^{-1}(z-m(t)),
    \label{eq:gaussianflow}
\end{align}
with $Z_0 \sim \Norm(m(0),S(0)S(0)^\top)$, which we call the \emph{Gaussian flow}. Its vector field is affine in $z$, and its marginals \cref{eq:gaussianmarginals} solve the continuity equation \cref{eq:continuity} for this vector field (\cref{sec:proofcontinuity}).

\paragraph{Gaussian Flow Diffusion}
We define the \emph{Gaussian flow diffusion} as the \gls{sde} induced by $\flow$ and a diffusion coefficient $\vol(t,z)$ via the conversion \cref{eq:odetosde},
\begin{align}
    \diff Z_t &= \postdrift(t,Z_t) \diff t+ \vol(t,Z_t)\diff W_t, \quad Z_0 \sim \Norm(m(0),S(0)S(0)^\top),
    \label{eq:gaussianflowdiffusion} \\
    \postdrift(t,z) &= \dot m(t)+\left(\dot S(t)S(t)^{-1} -\frac{1}{2}\Sigma(t,z) S(t)^{-\top}S(t)^{-1}\right)(z-m(t))+\frac{1}{2}\nabla \cdot \Sigma(t, z), \nonumber
\end{align}
with $\Sigma(t, z) = \vol(t,z)\vol(t,z)^\top$. Its marginals are again the Gaussians \cref{eq:gaussianmarginals}, which solve the \gls{fpk} equation \cref{eq:fpk} for this drift and diffusion coefficient (\cref{sec:proofFPK}); it is the posterior process that \gls{sde} Matching associates with the affine map $F = \flow$. When $\Sigma(t, z) \equiv \Sigma(t)$ is state-independent, the drift coefficient is affine in $z$ and the process is Gaussian; for a state-dependent diffusion coefficient its joint distributions are in general not Gaussian, although its marginals are.

\paragraph{Representation of Linear SDEs}
Conversely, every linear \gls{sde} with state-independent noise,
\begin{align}
    \diff Z_t &= \left(A(t) Z_t + u(t)\right) \diff t+ \vol(t)\diff W_t, \qquad Z_0 \sim \Norm(m_0, S_0 S_0^\top), \label{eq:linearSDE}
\end{align}
is a Gaussian flow diffusion. We write $\gl(d)$ for the group of invertible $d \times d$ matrices.
\begin{theorem}[Representation of Linear \glspl{sde}]
    \label{thm:representation}
    Let $A$, $u$, $\vol$ in \cref{eq:linearSDE} be continuous on $[0,T]$ and $S_0\in \gl(d)$. Then there exist unique continuously differentiable $m : [0, T] \to \reals^d$ and $S : [0, T] \to \gl(d)$ with $m(0)=m_0$ and $S(0)=S_0$ such that the Gaussian flow diffusion \cref{eq:gaussianflowdiffusion} with flow map $m(t) + S(t) \varepsilon$ and diffusion coefficient $\vol$ coincides with \cref{eq:linearSDE} (proof in \cref{sec:proofgaussian}).
\end{theorem}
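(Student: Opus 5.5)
Matching the drift of \cref{eq:gaussianflowdiffusion}, with $\Sigma(t)=\vol(t)\vol(t)^\top$ state-independent so that $\nabla\cdot\Sigma=0$, against $A(t)z+u(t)$ for all $z\in\reals^d$ reduces the statement to a pair of matrix \glspl{ode} for $(m,S)$. The plan is to derive these equations, show they have unique global solutions on $[0,T]$ with $S(t)$ staying invertible, and conclude that the two \glspl{sde} coincide.

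Both drifts are affine in $z$, so they agree for all $z$ exactly when the linear parts and the constant parts agree. The linear parts give
\begin{align}
    \dot S(t)S(t)^{-1}-\tfrac12\Sigma(t)S(t)^{-\top}S(t)^{-1}=A(t),
\end{align}
and comparing at $z=m(t)$ gives $\dot m(t)=A(t)m(t)+u(t)$. So the conditions are equivalent to
\begin{align}
    \dot m = A m+u,\qquad \dot S = A S+\tfrac12\Sigma S^{-\top},
\end{align}
with $m(0)=m_0$ and $S(0)=S_0$. The initial laws agree automatically, since $\Norm(m(0),S(0)S(0)^\top)=\Norm(m_0,S_0S_0^\top)$. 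The equation for $m$ is linear with continuous coefficients, so it has a unique $C^1$ solution on $[0,T]$. Uniqueness of $(m,S)$ follows because any pair making the drifts coincide must satisfy these \glspl{ode}, which have unique solutions.

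The equation for $S$ is the main obstacle: it is nonlinear, and its right-hand side is only locally Lipschitz on the open set $\gl(d)$. Picard--Lindelöf gives a unique local $C^1$ solution, continuous in $t$ and locally Lipschitz in $S$ on $\gl(d)$. That solution extends until it either reaches $t=T$ or leaves every compact subset of $\gl(d)$. To exclude blow-up and degeneration, consider $P=SS^\top$. A direct computation gives
\begin{align}
    \dot P = AP+PA^\top+\Sigma,
\end{align}
the linear Lyapunov equation, whose solution exists on $[0,T]$ and is given explicitly via the state-transition matrix $\Phi_A$:
\begin{align}
    P(t)=\Phi_A(t,0)S_0S_0^\top\Phi_A(t,0)^\top+\int_0^t\Phi_A(t,s)\Sigma(s)\Phi_A(t,s)^\top\diff s.
\end{align}
The first term is positive definite and the integral is positive semidefinite, so $P(t)\succeq \lambda(t) I$ with $\lambda$ continuous and positive on $[0,T]$. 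This bounds $\|S\|^2=\|P\|$ from above on $[0,T]$. It also bounds $\|S^{-1}\|^2=\|P^{-1}\|\le 1/\lambda$. Hence $S$ stays in a compact subset of $\gl(d)$, the solution extends to all of $[0,T]$, and it is $C^1$ and invertible.

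Finally, with these $(m,S)$ the drift and diffusion coefficients of \cref{eq:gaussianflowdiffusion} equal those of \cref{eq:linearSDE}, and the initial distributions agree, so the two \glspl{sde} coincide. Their marginals are then $\Norm(m(t),S(t)S(t)^\top)$, consistent with the result of \cref{sec:proofFPK}.
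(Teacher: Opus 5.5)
Your proof is correct. It differs from the paper's in how it obtains a global solution $S:[0,T]\to\gl(d)$ of $\dot S = AS + \tfrac12\Sigma S^{-\top}$.

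\textbf{The paper's route.} The paper never runs a continuation argument. It first solves the Lyapunov equation for $P$ and takes its Cholesky factor $L$. It then defines $\gauge = L^{-1}[AL + \tfrac12\Sigma L^{-\top} - \dot L]$ and checks that $\gauge$ is skew-symmetric. Next it solves the \emph{linear} equation $\dot O = \gauge O$ with $O(0) = L(0)^{-1}S_0$, and verifies that $S = LO$ solves the nonlinear equation. Global existence then follows from linearity, the orthogonality of $O$ and the invertibility of $L$. Local Lipschitz continuity on $\gl(d)$ is used only for uniqueness.

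\textbf{Your route.} You take the local Picard--Lindel\"{o}f solution and use the same Lyapunov equation as an a priori estimate. On the maximal interval, $SS^\top$ must coincide with the explicit positive definite $P$. Hence $\|S\|$ and $\|S^{-1}\|$ stay bounded, the solution remains in a compact subset of $\gl(d)$, and it extends to all of $[0,T]$.

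\textbf{Comparison.} Your argument is shorter and purely ODE-theoretic: it needs neither the smoothness of the Cholesky map nor the skew-symmetry computation. The paper's construction yields more. It identifies the gauge generator of the representation explicitly as $\gauge = L^{-1}[AL + \tfrac12\Sigma L^{-\top} - \dot L]$. The paper reuses this formula in the proof of \cref{cor:conjugate}, to show that the true posterior's gauge is continuous between observations and to compute its jump at each observation time. You can recover the formula from your solution a posteriori. Set $O = L^{-1}S$, which is orthogonal because $SS^\top = LL^\top$. A direct computation with $S^{-\top} = L^{-\top}O$ then gives $\dot O O^\top = L^{-1}[AL + \tfrac12\Sigma L^{-\top} - \dot L]$.
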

Replacing $S_0$ by another square root $S_0 R$ of the initial covariance only relabels the trajectories, since $\flow(t, R \varepsilon)$ defines the same process; up to this relabeling, the representation is unique.

\paragraph{Marginals and Gauge}
By \cref{thm:representation}, distinct linear \glspl{sde} with the same marginal covariance path are represented by distinct square roots $S(t)$: the square root carries more information than the covariance. To make this explicit, we factorize
\begin{align}
    S(t) &= L(t) O(t),
    \label{eq:factorization}
\end{align}
where $L(t)$ is the Cholesky factor of the covariance, lower triangular with positive diagonal, and $O(t)$ is orthogonal; both inherit continuous differentiability from $S(t)$. The pair $(m, L)$ determines the marginals, while $O$ leaves them invariant. To see how $O$ enters the dynamics, define $\gauge(t) = \dot O(t) O(t)^\top$, which is skew-symmetric because $O O^\top \equiv I$; we call it the \emph{gauge generator}. Then $\dot S S^{-1} = \dot L L^{-1} + L \gauge L^{-1}$, and since the drift coefficients in \cref{eq:gaussianflow,eq:gaussianflowdiffusion} depend on $S$ only through $\dot S S^{-1}$ and $S S^\top = L L^\top$, the orthogonal factor enters the dynamics only through the affine vector field
\begin{align}
    v(t, z) &= L(t) \gauge(t) L(t)^{-1} (z - m(t)),
    \label{eq:gaugevectorfield}
\end{align}
added to the drift coefficient of the process with the same marginals and $O \equiv I$. Since the \gls{fpk} equation \cref{eq:fpk} depends on the drift coefficient only through $\nabla \cdot [\postdrift(t, z) q(t, z)]$, adding a field $v$ to the drift coefficient leaves the marginals unchanged if and only if $v$ is \emph{$q$-divergence-free}, $\nabla \cdot [v(t, z) q(t, z)] = 0$ \citep{smith2026closing}; the fields \cref{eq:gaugevectorfield} are exactly the affine ones.
\begin{proposition}[Affine Gauge Fields]
\label{prop:gauge}
Let $q(t, z) = \Norm(z \mid m(t), L(t) L(t)^\top)$ with $L(t)$ invertible. An affine vector field $v(t, z) = M(t) z + c(t)$ is $q$-divergence-free if and only if $v(t, z) = L(t) \gauge(t) L(t)^{-1} (z - m(t))$ for a skew-symmetric $\gauge(t)$, in which case $\gauge(t) = L(t)^{-1} M(t) L(t)$ (proof in \cref{sec:proofgauge}; the Lyapunov form of this statement is Proposition 3 of \citealp{smith2026closing}).
\end{proposition}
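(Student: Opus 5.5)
The plan is to compute $\nabla \cdot [v q]$ explicitly for the Gaussian $q$ and an affine $v$, and to read off necessary and sufficient conditions by comparing polynomial coefficients. Write $C = L L^\top$ and $y = z - m$. By the product rule, $\nabla\cdot[vq] = q\,(\nabla\cdot v + v^\top \nabla \log q)$ with $\nabla \cdot v = \tr M$ and $\nabla\log q = -C^{-1} y$. Since $q>0$ everywhere, $q$-divergence-freeness is equivalent to the identity
\begin{align*}
\tr M - (M y + M m + c)^\top C^{-1} y = 0 \quad \text{for all } y \in \reals^d .
\end{align*}
Here I have rewritten $v = M y + (M m + c)$ as an affine function of $y$.

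The left-hand side is a polynomial in $y$, so the identity holds if and only if each homogeneous part vanishes.
\begin{itemize}
\item The quadratic part $-y^\top M^\top C^{-1} y$ vanishes for all $y$ iff $C^{-1} M$ is skew-symmetric.
\item The linear part $-(Mm+c)^\top C^{-1} y$ vanishes iff $c = -M m$, using invertibility of $C$.
\item The constant part requires $\tr M = 0$.
\end{itemize}

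Next I set $\gauge = L^{-1} M L$ and check that skew-symmetry of $C^{-1}M$ is equivalent to skew-symmetry of $\gauge$. We have $C^{-1} M = L^{-\top} L^{-1} M$. Congruence by $L^\top$ gives $L^\top (C^{-1} M) L = L^{-1} M L = \gauge$. Congruence preserves skew-symmetry in both directions because $L$ is invertible, so the two conditions are equivalent.

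The trace condition then comes for free. Since $\tr M = \tr(L \gauge L^{-1}) = \tr \gauge = 0$ for skew $\gauge$, the constant term imposes no extra constraint. This observation is the only point requiring care.

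Combining these, $v = M(z - m) = L \gauge L^{-1}(z-m)$ with $\gauge$ skew and $\gauge = L^{-1} M L$, which proves the forward direction. Conversely, if $v$ has this form with skew $\gauge$, then $M = L\gauge L^{-1}$ and $c = -Mm$. The three coefficient conditions then hold by the same computations read backwards, so $\nabla\cdot[vq]=0$. Everything is pointwise in $t$, so no regularity in time is needed.

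I expect no genuine obstacle. The main bookkeeping is the congruence step linking $C^{-1}M$ to $\gauge$, together with noting that the trace condition is implied rather than independent.
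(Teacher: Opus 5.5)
Your proof is correct and follows the paper's argument: you reduce $q$-divergence-freeness to the vanishing of the quadratic polynomial $\tr M + v^\top \nabla \log q$, then read off the quadratic, linear and constant parts, with the trace condition implied by skew-symmetry. The only difference is that the paper works directly in the whitened coordinate $\eta = L^{-1}(z - m)$, so the quadratic part gives skew-symmetry of $L^{-1} M L$ at once, whereas you work in $z - m$ and pass from $C^{-1} M$ to $\gauge$ by a congruence step.
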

Consequently, for fixed marginals \cref{eq:gaussianmarginals} and diffusion coefficient $\vol$, letting $O$ range over all continuously differentiable orthogonal functions of time produces exactly the \glspl{sde} with these marginals and this $\vol$ whose drift differs from that of $O \equiv I$ by an affine field; for state-independent $\vol$, these are all the Gaussian processes with these marginals and this $\vol$ (\cref{thm:representation}). Following \citet{smith2026closing}, we call $\gauge$ the \emph{gauge}. Existing methods commit to a particular $\gauge$; we call their choices \emph{fixed gauges}: the \emph{square root gauge} of \gls{sde} Matching \citep{bartosh2025sde}, the \emph{symmetric gauge} of \citet{course2023amortized}, and the simplest, $O \equiv I$, that is, $\gauge \equiv 0$, which we call the \emph{Cholesky gauge} (\cref{sec:fixedgauges}).

\paragraph{Parameterization}
The dynamics depend on $O$ only through $\gauge$, and sampling from the marginals requires only $m$ and $L$, so the orthogonal factor never has to be formed: we parameterize Gaussian flow dynamics directly by $m(t)$, $L(t)$ and $\gauge(t)$, with neural networks that output $d^2 + d$ numbers, the entries of $m$, the nonzero entries of $L$ with a positive transform on its diagonal, and the entries of $\gauge$ above its diagonal. In the whitened state $\eta = L(t)^{-1}(z - m(t))$, the drift coefficient \cref{eq:gaussianflowdiffusion} reads
\begin{align}
    \postdrift(t, z) &= \dot m(t) + \dot L(t) \eta + L(t) \gauge(t) \eta - \frac{1}{2} \Sigma(t, z) L(t)^{-\top} \eta + \frac{1}{2} \nabla \cdot \Sigma(t, z).
    \label{eq:driftwhitened}
\end{align}
Its evaluation requires $\dot m$ and $\dot L$ (from forward-mode differentiation), two triangular solves and a few matrix--vector products, $\mathcal{O}(d^2)$ in total, also for state-dependent diffusion coefficients whenever $\Sigma(t, z) w$ and $\nabla \cdot \Sigma(t, z)$ cost $\mathcal{O}(d^2)$, as for the diagonal structures common in latent \gls{sde} models \citep{li2020scalable,bartosh2025sde}. Alternatively, $O(t)$ itself can be parameterized, for instance as a Householder flow \citep{tomczak2016improving}; see \cref{sec:fixedgauges}.

\section{Gauge Matching}
\label{sec:gaugematching}

\emph{Gauge Matching} is \gls{sde} Matching with a Gaussian flow diffusion as posterior process, so that its marginals and its gauge are both learned, while the objective \cref{eq:elbo} and its simulation-free estimator are unchanged. After specifying the method, we characterize the gauge that the objective selects and relate it to the correction of \citet{smith2026closing}.

\paragraph{Posterior Process and Training}
Given observations $\obs$, the posterior process is the Gaussian flow diffusion \cref{eq:gaussianflowdiffusion} with the diffusion coefficient $\vol_\theta$ of the prior process, parameterized as in \cref{sec:gaussianflowdynamics} by a mean $m_\phi(t, \obs)$, a Cholesky factor $L_\phi(t, \obs)$ and a gauge generator $\gauge_\phi(t, \obs)$, neural networks of $t$ and a representation of $\obs$ such as the output of a sequence model \citep{bartosh2025sde}; its marginals are $q_\phi(t, z \mid \obs) = \Norm(z \mid m_\phi, L_\phi L_\phi^\top)$ and its drift coefficient is \cref{eq:driftwhitened} with $\Sigma_\theta = \vol_\theta \vol_\theta^\top$. \gls{sde} Matching is the member of this family with diagonal $L_\phi$, for which its square root gauge has $\gauge_\phi \equiv 0$; the fixed gauges of \cref{sec:fixedgauges} prescribe $\gauge_\phi$ as a function of $L_\phi$, $\dot L_\phi$ and $\Sigma_\theta$; Gauge Matching leaves $\gauge_\phi$ free. Training minimizes \cref{eq:elbo} jointly over $\theta$ and $\phi$ with the simulation-free estimator of \cref{sec:simfree}: the path-space \gls{kl} term is estimated at a single time $t \sim \mathcal{U}[0, T]$ and the reconstruction term at a single index $i \sim \mathcal{U}\{1, \ldots, N\}$ (\cref{alg:gaugematching} in \cref{sec:expdetails}). Sampling $Z_t = m_\phi + L_\phi \varepsilon$ with $\varepsilon \sim \Norm(0, I)$ makes the whitened state in \cref{eq:driftwhitened} simply $\eta = \varepsilon$, and an iteration costs the network evaluations, one forward-mode derivative of $m_\phi$ and $L_\phi$ in $t$, and $\mathcal{O}(d^2)$ linear algebra, whatever the number of observations and the length of the time interval.

\paragraph{The Optimal Gauge}
The gauge enters \cref{eq:elbo} only through the path-space \gls{kl} term; this term alone therefore selects the posterior process among those compatible with the learned marginals. To see which, fix the prior process and the marginals. The integrand at time $t$ depends on the gauge only through $\gauge_\phi(t, \obs)$, so the minimization decouples across time and we study it at a fixed $t$, suppressing observations and parameters as in \cref{sec:gaussianflowdynamics}. Let $\postdrift_0$ denote the drift coefficient of the Cholesky gauge; by \cref{eq:gaugevectorfield}, the posterior process with gauge $\gauge$ has drift coefficient $\postdrift_0 + L \gauge L^{-1}(z - m)$, and with $\| v \|_{\Sigma^{-1}}^2 = v^\top \Sigma^{-1} v$ the squared residual in \cref{eq:elbo} is $\| r \|_2^2 = \| \priordrift - \postdrift \|_{\Sigma^{-1}}^2$, a quadratic function of $\gauge$ whose minimizer can be found explicitly.
\begin{proposition}[Optimal Gauge]
\label{prop:optimalgauge}
Fix $t \in [0, T]$ and $m$, $L$, $\dot m$, $\dot L$ at $t$. Let $\priordrift(t, \cdot)$ and $\vol(t, \cdot)$ be continuously differentiable, with $\priordrift$, its Jacobian $\nabla_z \priordrift$ (entries $\partial \priordrift_i / \partial z_j$), $\Sigma = \vol \vol^\top$ and $\nabla \cdot \Sigma$ of at most polynomial growth and $\Sigma$ uniformly positive definite. The integrand of the path-space \gls{kl} term,
\begin{align}
    \mathcal{J}(\gauge) &= \frac{1}{2} \E_{q(t, z)} \left[ \left\| \priordrift(t, Z_t) - \postdrift_0(t, Z_t) - L \gauge L^{-1}(Z_t - m) \right\|_{\Sigma(t, Z_t)^{-1}}^2 \right], \label{eq:Jgauge}
\end{align}
is a strictly convex quadratic function of the skew-symmetric matrix $\gauge$ and has a unique minimizer $\gauge^\star$, the \emph{optimal gauge}. If $\Sigma$ is state-independent, $\gauge^\star$ is the unique solution of
\begin{align}
    G \gauge^\star + \gauge^\star G &= L^\top \left( \Sigma^{-1} \bar J - \bar J^\top \Sigma^{-1} \right) L, \quad G = L^\top \Sigma^{-1} L, \quad \bar J = \E_{q(t, z)} \left[ \nabla_z \priordrift(t, Z_t) \right] - \dot L L^{-1}.
    \label{eq:optimalgauge}
\end{align}
\end{proposition}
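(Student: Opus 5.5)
Write $Z_t = m + L\varepsilon$ with $\varepsilon \sim \Norm(0,I)$, so that $L\gauge L^{-1}(Z_t-m) = L\gauge\varepsilon$. Set $e(\varepsilon) = \priordrift - \postdrift_0$ evaluated at $Z_t$, and $P(\varepsilon) = \Sigma(t,Z_t)^{-1}$. Then
\begin{align*}
\mathcal{J}(\gauge) = \tfrac12 \E\left[(e - L\gauge\varepsilon)^\top P (e - L\gauge\varepsilon)\right].
\end{align*}
The growth assumptions, together with the uniform positive definiteness of $\Sigma$ (which bounds $P$), make all moments finite. $\mathcal{J}$ is therefore a finite quadratic polynomial in $\gauge$ on the vector space $\skewset(d)$.

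\textbf{Strict convexity.} The quadratic part is $\tfrac12 \E[\varepsilon^\top \gauge^\top L^\top P L \gauge \varepsilon]$. Uniform positive definiteness gives $P \succeq \lambda I$ for some $\lambda > 0$, so this part is at least
\begin{align*}
\tfrac{\lambda}{2}\,\E\|L\gauge\varepsilon\|^2 = \tfrac{\lambda}{2}\,\|L\gauge\|_F^2 \ge c\,\|\gauge\|_F^2, \qquad c>0,
\end{align*}
using that $L$ is invertible. A positive definite quadratic form plus a linear term and a constant has a unique minimizer $\gauge^\star$ on the finite-dimensional space $\skewset(d)$. This is characterized by the vanishing of the directional derivative in every skew direction $\Delta$:
\begin{align*}
\E\left[\varepsilon^\top \Delta^\top L^\top P (L\gauge^\star\varepsilon - e)\right] = 0, \qquad\text{equivalently}\qquad \tr\left(\Delta^\top K\right) = 0,
\end{align*}
where $K = \E[L^\top P (L\gauge^\star\varepsilon - e)\varepsilon^\top]$. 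Since $\Delta$ ranges over all skew matrices, the condition becomes $K - K^\top = 0$.

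\textbf{State-independent $\Sigma$.} Now $P = \Sigma^{-1}$ is constant, and $\E[\varepsilon\varepsilon^\top] = I$ gives $\E[L^\top P L\gauge^\star\varepsilon\varepsilon^\top] = G\gauge^\star$. The remaining term is $\E[e\,\varepsilon^\top]$, which I compute by Gaussian integration by parts (Stein's lemma): $\E[\varphi(\varepsilon)\varepsilon^\top] = \E[\nabla_\varepsilon \varphi]$. This is justified by the polynomial growth of $\priordrift$ and its Jacobian. From \cref{eq:gaussianflowdiffusion} with constant $\Sigma$,
\begin{align*}
\postdrift_0 = \dot m + (\dot L L^{-1} - \tfrac12 \Sigma L^{-\top}L^{-1})(z-m),
\end{align*}
since $\nabla\cdot\Sigma = 0$. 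Differentiating in $\varepsilon$ then gives
\begin{align*}
\E[e\,\varepsilon^\top] = \E[\nabla_z\priordrift]\,L - \dot L + \tfrac12 \Sigma L^{-\top} = \bar J L + \tfrac12 \Sigma L^{-\top}.
\end{align*}
Hence
\begin{align*}
K = G\gauge^\star - L^\top\Sigma^{-1}\bar J L - \tfrac12 I .
\end{align*}
The identity term is symmetric and drops out of $K - K^\top$. Using $(G\gauge^\star)^\top = -\gauge^\star G$, the stationarity condition becomes
\begin{align*}
G\gauge^\star + \gauge^\star G = L^\top(\Sigma^{-1}\bar J - \bar J^\top\Sigma^{-1})L,
\end{align*}
which is \cref{eq:optimalgauge}.

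\textbf{Uniqueness of the solution.} This Lyapunov/Sylvester equation has a unique solution because $G \succ 0$: the spectra of $G$ and $-G$ are disjoint. Its solution is skew, since the right-hand side is skew and transposing the equation shows that $-\gauge^{\star\top}$ solves it too. This also agrees with the uniqueness of the minimizer.

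The main obstacle is the Stein step. It requires checking integrability and the sign bookkeeping of the $-\tfrac12\Sigma L^{-\top}L^{-1}$ term, specifically that this contribution reduces to a symmetric multiple of the identity and so disappears. Everything else is routine finite-dimensional quadratic optimization.
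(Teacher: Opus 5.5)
Your strict-convexity step uses the bound on $P = \Sigma^{-1}$ in the wrong direction. Uniform positive definiteness of $\Sigma$ means $\Sigma(t, z) \succeq \mu I$ for all $z$, hence $P \preceq \mu^{-1} I$. That is an upper bound, which is what you correctly used for finiteness of the moments. It is not the lower bound $P \succeq \lambda I$ that you then invoke. A lower bound on $P$ would need $\Sigma$ to be bounded above, and the hypotheses only give polynomial growth. For the multiplicative noise $\Sigma(z) = \sigma^2 \operatorname{diag}(1 + z_i^2)$ used in the paper's experiments, $\inf_z \lambda_{\min}(P(z)) = 0$, so no such $\lambda$ exists. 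As written, your argument therefore gives strict convexity and a unique minimizer only for bounded $\Sigma$, in particular the state-independent case. It does not cover the state-dependent generality the proposition asserts.

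The repair is the paper's argument. Since $P(z) \succ 0$ for every $z$, for $\gauge \neq 0$ the integrand $(L \gauge \varepsilon)^\top P (L \gauge \varepsilon)$ is nonnegative. Because $L$ is invertible, it is strictly positive outside the proper subspace $\ker \gauge$, which is a Gaussian null set. The quadratic part is therefore strictly positive on $\skewset(d) \setminus \{0\}$. On this finite-dimensional space, compactness of the unit sphere then yields $c > 0$ with the quadratic part at least $c \|\gauge\|_F^2$. Alternatively, restrict the expectation to a ball $\|\varepsilon\| \le R$, on which the continuous $\Sigma$ is bounded above; this also rescues your bound.

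Everything else coincides with the paper's proof and is correct:
\begin{itemize}
\item the stationarity condition via orthogonality of symmetric and skew-symmetric matrices under the trace inner product, where your $K$ is the paper's $G \gauge - L^\top \Sigma^{-1} \Xi$;
\item Stein's lemma giving $\E[e\,\varepsilon^\top] = \bar J L + \tfrac12 \Sigma L^{-\top}$, with the resulting $-\tfrac12 I$ dropping out of $K - K^\top$;
\item uniqueness and skew-symmetry of the Sylvester solution, both following from $G \succ 0$.
\end{itemize}
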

The proof is in \cref{sec:proofoptimalgauge}. To interpret \cref{eq:optimalgauge}, note that $\bar J$ is, up to a term that cancels on the right-hand side, the Jacobian of the residual $\priordrift - \postdrift_0$ averaged over the posterior marginal, and that the right-hand side is the skew-symmetric part of $\Sigma^{-1} \bar J$ transformed by $L$: the objective selects the gauge that reproduces the rotational component of the prior dynamics as seen from the posterior marginals, which the marginals themselves cannot represent. For the Ornstein--Uhlenbeck spiral of \cref{sec:exp:gauge}, the stationary linear \gls{sde} with drift $(-\alpha I + \omega J) z$ and $J = \bigl(\begin{smallmatrix} 0 & -1 \\ 1 & 0 \end{smallmatrix}\bigr)$, \cref{eq:optimalgauge} gives $\gauge^\star = \omega J$, for which the posterior process coincides with the prior process. The same computation tells us when a fixed gauge is already optimal (\cref{sec:optimalgaugeconsequences}): the Cholesky gauge if and only if $\Sigma^{-1} \bar J$ is symmetric, and, for isotropic $\Sigma$, the symmetric gauge if and only if $\E_{q(t, z)}[\nabla_z \priordrift(t, Z_t)]$ is symmetric, as for a gradient prior drift; neither holds for prior dynamics with a rotational component.

\paragraph{Relation to Helmholtz-SDE}
Helmholtz-\gls{sde} \citep{smith2026closing} starts from a fixed-gauge posterior process with a state-independent diffusion coefficient and adds to its drift coefficient the $q$-divergence-free component of the residual $\priordrift - \postdrift$, which minimizes the path-space \gls{kl} term over all marginal-preserving fields; to make it computable, the residual is replaced by its Taylor expansion about the mean $m(t)$. At order $\ell = 1$, used in all their experiments, the correction is affine, hence a gauge field by \cref{prop:gauge}, and the corrected gauge is, irrespective of the fixed gauge it starts from, the solution of \cref{eq:optimalgauge} with the Jacobian at the mean, $\nabla_z \priordrift(t, m)$, in place of its expectation (\cref{sec:optimalgaugeconsequences}). The two gauges coincide whenever $\nabla_z \priordrift$ is affine in $z$, as for the quadratic Lorenz drift, and differ otherwise: Helmholtz-\gls{sde} finds the optimal gauge of the linearized prior process, whereas Gauge Matching minimizes the exact objective over the same family; only corrections of order $\ell \ge 2$ leave it, and a state-dependent gauge generator recovers those as well, still simulation-free (\cref{sec:statedependentgauge}). The larger difference is computational. Solving \cref{eq:optimalgauge} at every evaluation requires the Jacobian of the prior drift network and $\mathcal{O}(d^3)$ linear algebra, and for a state-dependent diffusion coefficient it becomes a linear system whose coefficients must be estimated numerically (\cref{sec:optimalgaugeconsequences}). Gauge Matching learns $\gauge_\phi$ by stochastic gradient descent, at $\mathcal{O}(d^2)$, without derivatives of the prior drift coefficient and for state-dependent diffusion coefficients (\cref{tab:methods}). Since $\mathcal{J}$ is strictly convex in $\gauge$, the optimal gauge is a unique target at every $(t, \obs)$; $\gauge_\phi$ may still have local optima in $\phi$, as any network does, but in \cref{sec:experiments} gradient descent reaches the optimal gauge.

\begin{table}[t]
    \centering
    \caption{Posterior processes with Gaussian marginals in simulation-free latent \gls{sde} learning. Cost is that of evaluating the drift coefficient at one $(t, z)$ beyond the network evaluations, for a full $d \times d$ covariance (\gls{sde} Matching: diagonal or triangular $S$). The Cholesky gauge ($\gauge \equiv 0$) is Gauge Matching without the gauge.}
    \label{tab:methods}
    \small
    \begin{tabular}{llll}
        \toprule
        Method & Gauge & Cost & State-dependent $\vol$ \\
        \midrule
        \citet{course2023amortized,course2023state} & symmetric, fixed & $\mathcal{O}(d^3)$ & no \\
        \gls{sde} Matching \citep{bartosh2025sde} & square root, fixed & $\mathcal{O}(d^2)$ & yes \\
        Cholesky gauge (ablation) & $\gauge \equiv 0$, fixed & $\mathcal{O}(d^2)$ & yes \\
        Helmholtz-\gls{sde} \citep{smith2026closing} & order-one correction, computed & $\mathcal{O}(d^3)$ & no \\
        Gauge Matching (ours) & learned & $\mathcal{O}(d^2)$ & yes \\
        \bottomrule
    \end{tabular}
\end{table}

\paragraph{Exactness in the Conjugate Setting}
When the prior process is linear and the observation model is linear-Gaussian, the true posterior process is a Gaussian process, the Kalman--Bucy smoother \citep{Särkkä_Solin_2019}, which the square root and symmetric gauges cannot represent \citep{smith2026closing}. The family with a learned gauge can.
\begin{corollary}[Conjugate Setting]
\label{cor:conjugate}
Let the prior process be the linear \gls{sde} \cref{eq:linearSDE} with $S_0 \in \gl(d)$ and $\Sigma(t)$ positive definite, and let $p_\theta(x \mid z) = \Norm(x \mid C z, R)$ with $R$ positive definite. Then (i) between consecutive observation times, the true posterior process is a Gaussian flow diffusion with diffusion coefficient $\vol$, continuously differentiable $m$, $L$ and continuous $\gauge$; and (ii) the infimum of the \gls{kl} divergence from Gauge Matching posterior processes to the true one is zero (proof in \cref{sec:proofconjugate}).
\end{corollary}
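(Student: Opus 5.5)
Part (i) follows from the structure of the Kalman--Bucy smoother together with \cref{thm:representation}. Part (ii) then follows by approximating the resulting $(m, L, \gauge)$ within the Gauge Matching family.

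For (i), I would use the standard fact that the conditional law of the latent path given $\obs$ is Markov. On each interval $(t_i, t_{i+1})$ it is the prior process conditioned on the future observations through the backward filter $\beta_t(z) = p(x_{t_{i+1}}, \ldots, x_{t_N} \mid Z_t = z)$. By Doob's $h$-transform, the posterior drift is $A z + u + \Sigma \nabla_z \log \beta_t(z)$. For linear-Gaussian observations, $\log\beta_t$ is a quadratic in $z$ whose coefficients solve linear/Riccati backward ODEs with continuous coefficients on each open interval, so they are continuously differentiable there. The posterior between observations is therefore a linear SDE with state-independent noise $\vol$. Its initial law at the left endpoint is the smoothing distribution, which is Gaussian with positive definite covariance: the prior marginal covariance is positive definite because $S_0 \in \gl(d)$, and conditioning on finitely many noisy linear observations with $R \succ 0$ keeps it so.

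Applying \cref{thm:representation} on each interval gives $C^1$ functions $m$ and $S$. Factorizing $S = LO$ as in \cref{eq:factorization} yields $C^1$ factors $L$ and $O$, since Gram--Schmidt is smooth on $\gl(d)$. Hence $\gauge = \dot O O^\top$ is continuous, and $(m, L, \gauge)$ defines the true posterior as a Gaussian flow diffusion. I would also handle the interval $[0, t_1]$ and the interval after $t_N$, where $\beta \equiv 1$.

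For (ii), I would write the KL divergence from a Gauge Matching posterior $(m_\phi, L_\phi, \gauge_\phi)$ to the true posterior via Girsanov. It splits into an initial KL term plus $\tfrac12\int \E\|\postdrift_\phi - \postdrift^\star\|^2_{\Sigma^{-1}}\,\diff t$, with the expectation taken under the approximating process. Both drifts are affine in $z$, and the marginals are Gaussian with bounded moments. The integrand is therefore controlled by the sup-norm errors in $m, \dot m, L, \dot L, L^{-1}, \gauge$ on compact subintervals.

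By universal approximation in $C^1$ for $m, L$ (keeping the diagonal of $L$ positive via the positive transform) and in $C^0$ for $\gauge$, these errors can be made arbitrarily small, so the KL divergence can be made arbitrarily small. The jumps of the true posterior at observation times are handled by treating the quantity per interval. Equivalently, the infimum of the path-space KL term over each segment is zero, and the observation-time marginals are matched exactly in the limit.

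The main obstacle is this approximation step near observation times. There the true $m$ and $L$ are only piecewise $C^1$: the mean and covariance jump when conditioning on $x_{t_i}$, unless one views the smoother as continuous. I expect the smoothing marginals to be continuous in $t$ but to have kinks at the $t_i$. A single smooth network then cannot match $\dot m$ and $\dot L$ exactly, so the drift error concentrates on small neighbourhoods of the $t_i$.

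To deal with this, I would first smooth the kinks over windows of width $\delta$. The derivatives stay bounded while the mismatch is supported on a set of measure $O(\delta)$, so its contribution to the integral is $O(\delta)$. Sending $\delta \to 0$ after the network approximation then gives infimum zero. Dominated convergence, using the polynomial moments of Gaussians with uniformly bounded covariance, justifies the interchange of limits.
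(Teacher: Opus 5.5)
Your proposal is correct and follows essentially the same route as the paper. Doob's $h$-transform gives an affine drift on each inter-observation interval, \cref{thm:representation} turns it into a Gaussian flow diffusion, and Girsanov's theorem combined with smoothing on $\delta$-windows around the observation times bounds the divergence by a constant times $N\delta$; your positivity argument (via the independent observation noise with $R \succ 0$) and your LQ factorization of $S$ are valid substitutes for the paper's bridge argument and its explicit formula for $\gauge$. The details to make explicit are: the smoothing must also interpolate $\gauge$ across its jumps, since the per-interval remark alone does not give an admissible process; $L_\delta$ must stay lower triangular with positive diagonal and $\|L_\delta^{-1}\|$ bounded uniformly in $\delta$, which follows from $\sup_t \|L_\delta(t) - L^\dagger(t)\| = \mathcal{O}(\delta)$ and the uniform positive definiteness of the continuous $P^\dagger$ on $[0,T]$; and the paper takes the admissible class to be all continuously differentiable $m$, $L$ and continuous $\gauge$, so your universal-approximation step is an optional extra rather than a needed one.
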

At each observation time, $m$ and $L$ of the true posterior process have kinks and $\gauge$ jumps, whereas a neural network in $t$ is continuously differentiable; the infimum is therefore approached rather than attained, as for Helmholtz-\gls{sde} \citep[Theorem 1]{smith2026closing}.

\section{Experiments}
\label{sec:experiments}

We ask whether gradient descent on the path-space \gls{kl} term finds the optimal gauge of \cref{prop:optimalgauge}; whether the learned gauge closes the gap of the fixed gauges in the conjugate setting and how it compares with the exact gauge of Helmholtz-\gls{sde} there; whether it matches Helmholtz-\gls{sde} on nonlinear systems and extends beyond its scope; and how it scales with $d$. All posterior processes share the \gls{sde} Matching architecture \citep{bartosh2025sde} and differ only in $L_\phi$ and the gauge: the square root gauge with a diagonal $L_\phi$, which is \gls{sde} Matching; the Cholesky ($\gauge_\phi \equiv 0$) and symmetric gauges with a full $L_\phi$; Helmholtz-\gls{sde} at order one, which we reimplement; and the learned gauge of Gauge Matching (\cref{sec:expdetails}). We call each independent observation sequence $\obs$ a \emph{trial}.

\begin{figure}[t]
    \centering
    \includegraphics[width=\linewidth]{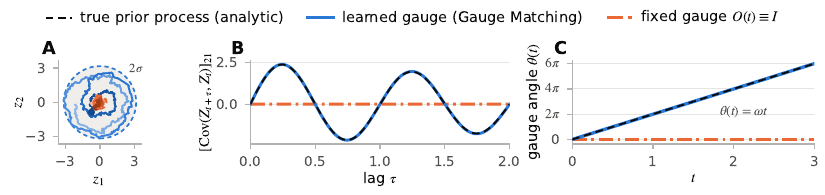}
    \caption{Learning the gauge of the stationary Ornstein--Uhlenbeck spiral ($\alpha = 0.2$, $\omega = 2\pi$) without observations. \emph{A:} one sample path of each fitted model, learned gauge (blue) and fixed gauge $O(t) \equiv I$ (orange), over the true $2\sigma$ disk (shaded) and each model's own $2\sigma$ circle (dashed): the fixed gauge cannot circulate and shrinks the marginals instead. \emph{B:} lagged cross-covariance $[\mathrm{Cov}(Z_{t+\tau}, Z_t)]_{21}$ of the true and the two fitted processes. \emph{C:} rotation angle $\theta(t)$ of the learned $O(t)$ against $\omega t$.}
    \label{fig:gauge}
\end{figure}

\begin{figure}[t]
    \centering
    \includegraphics[width=\linewidth]{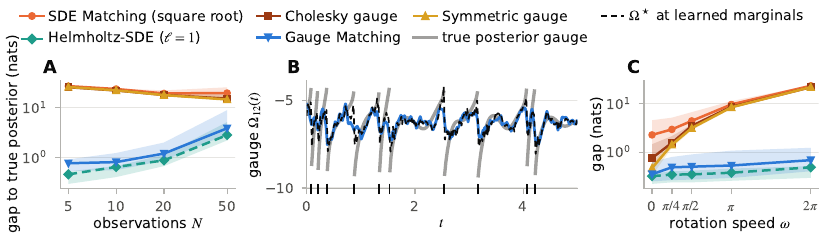}
    \caption{Gauge Matching in the conjugate setting ($\sigma = 0.3$; median and interquartile range over 16 trials). \emph{A:} \gls{kl} divergence to the true posterior process against the number of observations $N$ at $\omega = 2\pi$; the learned gauge is $4$--$34$ times below the fixed gauges and within a nat of Helmholtz-\gls{sde}, whose gauge is exact here. \emph{B:} learned gauge $\gauge_{12}(t)$ for one trial with $N = 10$, against the optimal gauge \cref{eq:optimalgauge} at the learned marginals and the gauge of the true posterior process, which jumps at the observation times (ticks). \emph{C:} \gls{kl} divergence against the rotation speed $\omega$ at $N = 10$.}
    \label{fig:conjugate}
\end{figure}

\subsection{Linear Dynamics}
\label{sec:exp:gauge}
\paragraph{Learning the Gauge}
We take as prior process the two-dimensional linear \gls{sde} \cref{eq:linearSDE} with $A = -\alpha I + \omega J$, $\Sigma = I$ and the stationary initial distribution $\Norm(0, I / (2 \alpha))$, the Ornstein--Uhlenbeck spiral of \citet{smith2026closing}. Its drift is a contracting gradient field plus a rotation that preserves the stationary law, so the marginals carry no information about $\omega$: the temporal structure lies in the gauge alone. Without observations, the posterior process should recover the prior process. We fit a Gaussian flow diffusion by minimizing the first two terms of \cref{eq:elbo}, with $O(t)$ a Householder flow with rotation angle $\theta(t)$. The learned gauge recovers the prior process: the marginals converge to the stationary law, $\theta(t)$ to $\omega t$, and the path-space \gls{kl} term drops to $0.007$ nats over $[0, 3]$ (\cref{fig:gauge}). The fixed gauge $O(t) \equiv I$, which represents all three fixed gauges here (\cref{sec:fixedgauges}), cannot rotate and shrinks the marginals to compensate, from the true variance $2.5$ to $0.080$, at a cost of about $18$ nats.
\paragraph{Conjugate Setting}
We next add observations $\Norm(x \mid C z + c, \sigma^2 I)$ with a random matrix $C$ and offset $c$ at $N$ random times in $[0, T]$, with $T = 5$ and $\alpha = 0.2$, and fix the generative model at the truth. Since all drift coefficients are affine, the \gls{kl} divergence of each fitted posterior process to the true posterior process, which we call the \emph{gap}, is available in closed form (\cref{sec:expdetails}). For $\omega = 2\pi$ and $\sigma = 0.3$ (\cref{fig:conjugate}A), the three fixed gauges incur a gap of $14$--$26$ nats, within about $2$ nats of the best their variational families can do; the Cholesky gauge improves little on the square root gauge, so it is the gauge, not the covariance, that is missing. Gauge Matching reduces the gap to $0.8$--$3.9$ nats. Helmholtz-\gls{sde}, whose gauge is exact here and costs nothing to compute (the prior Jacobian is the constant $A$), reaches $0.5$--$2.8$ nats with the same network and budget: knowing the optimal gauge is worth $0.2$--$1$ nat over learning it, and what remains for both is the error of a smooth posterior at the kinks of the true one (\cref{cor:conjugate}). The learned $\gauge_\phi(t)$ tracks the optimal gauge \cref{eq:optimalgauge} at the learned marginals and, up to the jumps, the gauge of the true posterior process (\cref{fig:conjugate}B); as $\omega$ grows (\cref{fig:conjugate}C), the gap of the fixed gauges rises from $0.6$--$2.3$ to $22$--$23$ nats, linearly in $\omega T$ \citep[Theorem 4]{smith2026closing}, while that of the learned gauge stays below $0.8$.
\begin{table}[t]
    \centering
    \caption{Noisy Lorenz attractor \citep{smith2026closing}, $d = 4$; mean $\pm 2$ standard errors over 5 seeds on held-out trials; arrows give the better direction. \emph{Negative \gls{elbo}}: per trial, nats. \emph{Latent LL}: log-density of the true state under the posterior predictive. \emph{Autocorr.\ error}: distance between the lagged correlation matrices of the learned and the true prior process. \emph{Gauge excess}: $\mathcal{J}(\gauge) - \mathcal{J}(\gauge^\star)$ of \cref{eq:Jgauge} at the method's own gauge and marginals, nats per unit time (\cref{sec:expdetails}).}
    \label{tab:lorenz}
    \small
    \setlength{\tabcolsep}{5pt}
    \begin{tabular}{lcccc}
        \toprule
        Posterior process & Negative \gls{elbo} $\downarrow$ & Latent LL $\uparrow$ & Autocorr.\ error $\downarrow$ & Gauge excess $\downarrow$ \\
        \midrule
        \gls{sde} Matching (square root) & $77.9 \pm 0.6$ & $-0.22 \pm 0.01$ & $0.134 \pm 0.009$ & $2.84 \pm 0.08$ \\
        Cholesky gauge & $75.5 \pm 0.8$ & $-0.20 \pm 0.01$ & $0.111 \pm 0.007$ & $2.52 \pm 0.09$ \\
        Symmetric gauge & $76.9 \pm 0.8$ & $-0.22 \pm 0.01$ & $0.111 \pm 0.006$ & $2.61 \pm 0.11$ \\
        Helmholtz-\gls{sde} ($\ell = 1$) & $53.0 \pm 0.6$ & $-0.06 \pm 0.01$ & $0.052 \pm 0.006$ & $0.001 \pm 0.000$ \\
        Gauge Matching & $53.2 \pm 0.7$ & $-0.06 \pm 0.01$ & $0.056 \pm 0.009$ & $0.044 \pm 0.003$ \\
        \midrule
        Gauge Matching, diagonal $L_\phi$ & $55.3 \pm 1.9$ & $-0.10 \pm 0.02$ & $0.064 \pm 0.006$ & $0.022 \pm 0.001$ \\
        \bottomrule
    \end{tabular}
\end{table}

\subsection{Nonlinear Dynamics}
\label{sec:exp:nonlinear}
\paragraph{Noisy Lorenz Attractor}
We follow \citet{smith2026closing}: the stochastic Lorenz system on $[0, 5]$, with process noise large enough that the posterior does not collapse onto the true trajectory, is observed every $\Delta t = 0.25$ through a noisy linear map, for 1024 training trials; the latent \gls{sde} has $d = 4$, a neural prior drift and a fixed diffusion coefficient, and the prior and observation model are learned jointly with the posterior process. In \cref{tab:lorenz}, the fixed gauges reach a negative \gls{elbo} of $75$--$78$ nats per trial, whether with diagonal or with full covariance. Gauge Matching and Helmholtz-\gls{sde} reach $53$ nats; the difference lies in the path-space \gls{kl} term, and their marginals assign a higher likelihood to the true latent state. The learned dynamics improve too: the autocorrelation error of the prior process drops from $0.11$--$0.13$ to $0.05$--$0.06$, and only these two models reproduce the outward spiral within each lobe of the attractor (\cref{fig:lorenzsamples}). The gauge excess confirms that gradient descent finds the optimal gauge, to within $0.05$ nats per unit time against $2.5$--$2.8$ for the fixed gauges, and the last row shows that the gain is the gauge, not the covariance: a diagonal $L_\phi$ with a learned gauge reaches $55.3$ nats, a full $L_\phi$ without one stays at $75.5$. The picture holds across observation spacings and noise levels (\cref{sec:expdetails}), and the agreement of Gauge Matching and Helmholtz-\gls{sde} is what \cref{sec:gaugematching} predicts: the fitted drift networks are nearly affine over the posterior support.
\paragraph{Known Nonlinear Drift}
Gauge Matching and Helmholtz-\gls{sde} differ when the prior drift is known and strongly nonlinear: on a Stuart--Landau oscillator with shear and the generative model fixed at the truth (\cref{tab:hopf} in \cref{sec:expdetails}), the gauge excess of the learned gauge is $10$ times smaller than that of the order-one correction, and its negative \gls{elbo} is lower by $1.6 \pm 0.3$ nats, on every seed.
\paragraph{State-Dependent Noise}
The stochastic Lotka--Volterra system \citep{vadillo2019comparing,bartosh2025sde} has multiplicative noise, $\vol(z) = \sigma \operatorname{diag}(z)$, which Helmholtz-\gls{sde} does not support; in log coordinates the noise becomes additive and the drift acquires exponentials. We learn a latent \gls{sde} from sparse, noisy observations in each coordinate system, with the true diffusion coefficient in log coordinates and a learned state-dependent one in the original ones, and evaluate the learned prior process by the period of its cycle, the predator's lag behind the prey and the autocorrelation error (\cref{tab:lv} in \cref{sec:expdetails}). In log coordinates the posterior is narrow relative to the curvature of the drift, so the order-one correction is already optimal: Helmholtz-\gls{sde} and Gauge Matching agree, as \cref{sec:gaugematching} predicts, with gauge excess $0.000$ and $0.004$ against $0.215$ for the fixed gauge. In the original coordinates, where Helmholtz-\gls{sde} does not apply, Gauge Matching reduces the period error from $22\%$ to $13\%$ and improves all three metrics on every seed (\cref{fig:lv}). The same holds in $d = 8$: on a rotating Ornstein--Uhlenbeck process with multiplicative noise $\vol(z) = \sigma \operatorname{diag}(\sqrt{1 + z_i^2})$, Gauge Matching lowers the negative \gls{elbo} of the fixed gauges by $6.6 \pm 0.8$ nats with the generative model known and $4.8 \pm 0.3$ with a learned prior, at an $18$--$29$ times smaller gauge excess (\cref{tab:rousd}).
\paragraph{Motion Capture}
On the motion capture benchmark of \citet{gan2015deep}, 50-dimensional recordings with 300 dense, nearly noise-free observations each, Gauge Matching with $d = 6$ reaches a test mean squared error of $4.52 \pm 0.29$, on par with \gls{sde} Matching ($4.50 \pm 0.32$) and, like it, behind the simulation-based latent \gls{sde} of \citet{li2020scalable} ($4.03 \pm 0.20$): the posterior concentrates on the dense observations, so the gauge has little to add, and nothing is lost.

\paragraph{Scaling in the Latent Dimension}
\label{sec:exp:dimsweep}
The learned gauge adds $d(d-1)/2$ network outputs, $2016$ at $d = 64$; Helmholtz-\gls{sde} pays $d$ forward-mode passes through the prior drift network and an $\mathcal{O}(d^3)$ solve of \cref{eq:optimalgauge}. On a rotating Ornstein--Uhlenbeck process in $d \in \{2, 4, \ldots, 64\}$ dimensions with the generative model known, the gap of the learned gauge stays within a factor of two of that of Helmholtz-\gls{sde}, which is exact here, and $4$--$27$ times below the fixed gauges at every $d$: gradient descent finds the optimal gauge in $2016$ dimensions (\cref{fig:dimsweep} in \cref{sec:expdetails}). With a learned generative model and the state fully observed, the fixed gauges are $3$--$9$ nats worse than Helmholtz-\gls{sde} at $d = 8$ and, for a full $L_\phi$, at every $d \le 32$ on every seed, while Gauge Matching recovers most of the difference. Per training iteration on a GPU, Gauge Matching costs $1.05$--$1.5$ times as much as \gls{sde} Matching for $d \le 128$, Helmholtz-\gls{sde} $1.3$--$3.2$ times.
\section{Related Work}
\label{sec:relatedwork}

\paragraph{Variational Inference for Latent SDEs}
Simulation-based \gls{vi} parameterizes the posterior process by its drift coefficient and optimizes the gauge implicitly, at the cost of one numerical integration per gradient step: with an affine drift the marginals follow from moment \glspl{ode} \citep{archambeau2007gaussian,archambeau2007variational,duncker2019learning,wildner2021moment,kohs2021variational,verma2024variational,hu2025sing,park2025amortized}; with a neural drift the process is simulated and differentiated through \citep{movellan2002monte,ryder2018black,tzen2019neural,hodgkinson2020stochastic,li2020scalable,kidger2021efficient}, as in latent \glspl{ode} \citep{chen2018neural,rubanova2019latent}.

\paragraph{Simulation-Free Methods and the Gauge}
Diffusions with prescribed marginals go back to \citet{brigo2000sdes,brigo2002general} and were used for variational path estimation by \citet{sutter2016variational}. Simulation-free \gls{vi} with Gaussian marginals \citep{course2023amortized,course2023state,bartosh2025sde} fixes the gauge (\cref{sec:fixedgauges}); \citet{kiyohara2025neural} use a mean-field family, and \citet{neklyudov2023action,zhang2024trajectory} learn simulation-free dynamics in observation space. \citet{smith2026closing} identified a post-hoc gauge correction (\cref{sec:gaugematching}) and briefly mention learning $\gauge$ in their Appendix A.5 without developing or evaluating it; they suggest this sacrifices \gls{sde} Matching's closed-form structure, whereas our $O(t)$ construction retains its explicit closed-form flow map.

\paragraph{Marginal-Preserving Fields}
The same freedom appears in generative modeling, where the probability-flow \gls{ode} \citep{song2021score} is one of many transports between the marginals of a diffusion model: \citet{horvat2024gauge} analyze this freedom, \citet{petrovic2025curly} exploit it in flow matching \citep{lipman2023flow} to learn non-gradient dynamics, and \citet{hua2025simulation} parameterize a density and a dynamics jointly so that the continuity equation holds by construction. Helmholtz--Hodge decompositions also structure Gaussian-process models of vector fields \citep{berlinghieri2023gaussian,xu2024hhd}. There the divergence-free component is a modeling choice; here it is a coordinate of the variational family that the marginals leave free. Diffusion models with learned noising processes \citep{singhal2023where,bartosh2024neural,nielsen2024diffenc,sahoo2024diffusion,bartosh2024neuralflow,du2024doob} are latent \glspl{sde} with one observation and a learned posterior.

\section{Conclusion}
We introduced Gaussian flow dynamics, which turn the marginal-preserving freedom of a process with Gaussian marginals into an explicit skew-symmetric generator, at quadratic cost and for state-dependent diffusion coefficients, and Gauge Matching, which lets the \gls{sde} Matching objective select this generator. The learned gauge removes most of the gap of the fixed gauges in the conjugate setting, matches Helmholtz-\gls{sde} on nonlinear systems at lower cost and up to $d = 64$, improves on it when the prior drift is known and strongly nonlinear, and extends to state-dependent noise, where Helmholtz-\gls{sde} does not apply. A state-dependent gauge generator extends the construction to general nonlinear marginal-preserving fields, beyond the polynomial gauges and state-independent diffusion of Helmholtz-\gls{sde} (\cref{sec:statedependentgauge}); its evaluation and non-Gaussian marginals are left for future work.

\ifpreprint\else
\subsection*{AI use statement}
Large language models (Claude, Anthropic) were used in the preparation of this paper as a writing and coding assistant under the direction of the authors: for drafting and editing prose, for writing analysis and plotting code, and for checking derivations numerically. All mathematical results, experimental designs and conclusions were specified and verified by the authors, who take full responsibility for the content of the paper.

\subsection*{Ethics statement}
This is methodological work on variational inference for latent \glspl{sde}. Its experiments use synthetic data generated from known dynamical systems and one public benchmark, the CMU motion capture recordings of \citet{gan2015deep} with the preprocessing of \citet{wang2007gaussian}; no data were collected from human subjects for this work. We foresee no direct negative societal impact beyond the general dual use of time-series models.

\subsection*{Reproducibility statement}
All theoretical results are proved in the appendix: \cref{sec:proofs} for \cref{sec:gaussianflowdynamics}, \cref{sec:fixedgauges} for the gauges of existing methods, and \cref{sec:proofsgm} for \cref{sec:gaugematching}. \Cref{sec:expdetails} specifies the architectures, hyperparameters, data generation and metric definitions of every experiment. Code for all experiments, including the baselines and our implementation of Helmholtz-\gls{sde}, together with the data generation scripts, will be published upon acceptance.

The remaining items of the required disclosure, such as the use of human subjects, personal data or proprietary datasets, do not apply to this work.
\fi



\bibliography{references}

@article{brigo2000sdes,
  title={On {SDEs} with marginal laws evolving in finite-dimensional exponential families},
  author={Brigo, Damiano},
  journal={Statistics \& Probability Letters},
  volume={49},
  number={2},
  pages={127--134},
  year={2000},
  publisher={Elsevier}
}

@article{brigo2002general,
  title={The general mixture-diffusion {SDE} and its relationship with an uncertain-volatility option model with volatility-asset decorrelation},
  author={Brigo, Damiano},
  journal={arXiv preprint arXiv:0812.4052},
  note={Working paper dated September 2002},
  year={2002}
}

@article{sutter2016variational,
  title={A variational approach to path estimation and parameter inference of hidden diffusion processes},
  author={Sutter, Tobias and Ganguly, Arnab and Koeppl, Heinz},
  journal={Journal of Machine Learning Research},
  volume={17},
  number={190},
  pages={1--37},
  year={2016}
}

@article{course2023state,
  title={State estimation of a physical system with unknown governing equations},
  author={Course, Kevin and Nair, Prasanth B},
  journal={Nature},
  volume={622},
  number={7982},
  pages={261--267},
  year={2023},
  publisher={Nature Publishing Group UK London}
}

@article{course2023amortized,
  title={Amortized reparametrization: efficient and scalable variational inference for latent {SDEs}},
  author={Course, Kevin and Nair, Prasanth B},
  journal={Advances in Neural Information Processing Systems},
  volume={36},
  pages={78296--78318},
  year={2023}
}

@inproceedings{bartosh2024neural,
  author    = {Bartosh, Grigory and Vetrov, Dmitry and Naesseth, Christian A.},
  title     = {Neural Diffusion Models},
  booktitle = {Proceedings of the 41st International Conference on Machine Learning},
  series    = {Proceedings of Machine Learning Research},
  volume    = {235},
  pages     = {3073--3095},
  publisher = {PMLR},
  year      = {2024}
}

@article{bartosh2024neuralflow,
  title={Neural flow diffusion models: Learnable forward process for improved diffusion modelling},
  author={Bartosh, Grigory and Vetrov, Dmitry and Naesseth, Christian A},
  journal={Advances in Neural Information Processing Systems},
  volume={37},
  pages={73952--73985},
  year={2024}
}

@inproceedings{bartosh2025sde,
  author    = {Bartosh, Grigory and Vetrov, Dmitry and Naesseth, Christian A.},
  title     = {{SDE} {M}atching: Scalable and Simulation-Free Training of Latent Stochastic Differential Equations},
  booktitle = {Proceedings of the 42nd International Conference on Machine Learning},
  series    = {Proceedings of Machine Learning Research},
  volume    = {267},
  pages     = {3054--3070},
  publisher = {PMLR},
  year      = {2025}
}

@article{smith2026closing,
  title={Closing the Approximation Gap in Simulation-free Latent {SDEs}},
  author={Smith, Henry D and Trippe, Brian L and Linderman, Scott W},
  journal={arXiv preprint arXiv:2606.16138},
  year={2026}
}

@book{Särkkä_Solin_2019,
  title={Applied Stochastic Differential Equations},
  author={S{\"a}rkk{\"a}, Simo and Solin, Arno},
  publisher={Cambridge University Press},
  address={Cambridge},
  series={Institute of Mathematical Statistics Textbooks},
  year={2019}
}

@article{tomczak2016improving,
  title={Improving variational auto-encoders using {H}ouseholder flow},
  author={Tomczak, Jakub M and Welling, Max},
  journal={arXiv preprint arXiv:1611.09630},
  year={2016}
}

@book{grove2002classical,
  title={Classical groups and geometric algebra},
  author={Grove, Larry C},
  series={Graduate Studies in Mathematics},
  volume={39},
  year={2002},
  publisher={American Mathematical Society}
}

@book{lee2012smooth,
  title={Introduction to Smooth Manifolds},
  author={Lee, John M},
  series={Graduate Texts in Mathematics},
  volume={218},
  edition={2},
  publisher={Springer},
  address={New York},
  year={2012}
}

@book{oksendal2003stochastic,
  title={Stochastic Differential Equations: An Introduction with Applications},
  author={{\O}ksendal, Bernt},
  edition={6},
  publisher={Springer},
  address={Berlin},
  year={2003}
}

@inproceedings{li2020scalable,
  title={Scalable gradients for stochastic differential equations},
  author={Li, Xuechen and Wong, Ting-Kam Leonard and Chen, Ricky T. Q. and Duvenaud, David},
  booktitle={International Conference on Artificial Intelligence and Statistics},
  pages={3870--3882},
  year={2020},
  organization={PMLR}
}

@inproceedings{song2021score,
  title={Score-based generative modeling through stochastic differential equations},
  author={Song, Yang and Sohl-Dickstein, Jascha and Kingma, Diederik P and Kumar, Abhishek and Ermon, Stefano and Poole, Ben},
  booktitle={International Conference on Learning Representations},
  year={2021}
}

@inproceedings{gan2015deep,
  title={Deep temporal sigmoid belief networks for sequence modeling},
  author={Gan, Zhe and Li, Chunyuan and Henao, Ricardo and Carlson, David E and Carin, Lawrence},
  booktitle={Advances in Neural Information Processing Systems},
  volume={28},
  year={2015}
}

@article{wang2007gaussian,
  title={Gaussian process dynamical models for human motion},
  author={Wang, Jack M and Fleet, David J and Hertzmann, Aaron},
  journal={IEEE Transactions on Pattern Analysis and Machine Intelligence},
  volume={30},
  number={2},
  pages={283--298},
  year={2008}
}

@inproceedings{hu2025sing,
  title={{SING}: {SDE} inference via natural gradients},
  author={Hu, Amber and Smith, Henry and Linderman, Scott},
  booktitle={Advances in Neural Information Processing Systems},
  year={2025}
}

@inproceedings{kingma2015adam,
  title={Adam: A method for stochastic optimization},
  author={Kingma, Diederik P and Ba, Jimmy},
  booktitle={International Conference on Learning Representations},
  year={2015}
}

@article{vadillo2019comparing,
  title={Comparing stochastic {L}otka--{V}olterra predator-prey models},
  author={Vadillo, Fernando},
  journal={Applied Mathematics and Computation},
  volume={360},
  pages={181--189},
  year={2019}
}

@inproceedings{archambeau2007gaussian,
  title={Gaussian process approximations of stochastic differential equations},
  author={Archambeau, C{\'e}dric and Cornford, Dan and Opper, Manfred and Shawe-Taylor, John},
  booktitle={Gaussian Processes in Practice},
  pages={1--16},
  year={2007},
  organization={PMLR}
}

@inproceedings{archambeau2007variational,
  title={Variational inference for diffusion processes},
  author={Archambeau, C{\'e}dric and Opper, Manfred and Shen, Yuan and Cornford, Dan and Shawe-Taylor, John},
  booktitle={Advances in Neural Information Processing Systems},
  volume={20},
  year={2007}
}

@inproceedings{duncker2019learning,
  title={Learning interpretable continuous-time models of latent stochastic dynamical systems},
  author={Duncker, Lea and Bohner, Gergo and Boussard, Julien and Sahani, Maneesh},
  booktitle={International Conference on Machine Learning},
  year={2019}
}

@inproceedings{verma2024variational,
  title={Variational {G}aussian process diffusion processes},
  author={Verma, Prakhar and Adam, Vincent and Solin, Arno},
  booktitle={International Conference on Artificial Intelligence and Statistics},
  year={2024}
}

@inproceedings{kohs2021variational,
  title={Variational inference for continuous-time switching dynamical systems},
  author={K{\"o}hs, Lukas and Alt, Bastian and Koeppl, Heinz},
  booktitle={Advances in Neural Information Processing Systems},
  year={2021}
}

@inproceedings{wildner2021moment,
  title={Moment-based variational inference for stochastic differential equations},
  author={Wildner, Christian and Koeppl, Heinz},
  booktitle={International Conference on Artificial Intelligence and Statistics},
  year={2021}
}

@inproceedings{park2025amortized,
  title={Amortized control of continuous state space {F}eynman-{K}ac model for irregular time series},
  author={Park, Byoungwoo and Lee, Hyungi and Lee, Juho},
  booktitle={International Conference on Learning Representations},
  year={2025}
}

@inproceedings{ryder2018black,
  title={Black-box variational inference for stochastic differential equations},
  author={Ryder, Tom and Golightly, Andrew and McGough, A Stephen and Prangle, Dennis},
  booktitle={International Conference on Machine Learning},
  pages={4423--4432},
  year={2018},
  organization={PMLR}
}

@article{movellan2002monte,
  title={A {M}onte {C}arlo {EM} approach for partially observable diffusion processes: Theory and applications to neural networks},
  author={Movellan, Javier R and Mineiro, Paul and Williams, Ruth J},
  journal={Neural Computation},
  volume={14},
  number={7},
  pages={1507--1544},
  year={2002},
  publisher={MIT Press}
}

@article{hodgkinson2020stochastic,
  title={Stochastic normalizing flows},
  author={Hodgkinson, Liam and van der Heide, Chris and Roosta, Fred and Mahoney, Michael W},
  journal={arXiv preprint arXiv:2002.09547},
  year={2020}
}

@inproceedings{hua2025simulation,
  title={Simulation-free differential dynamics through neural conservation laws},
  author={Hua, Mengjian and Vanden-Eijnden, Eric and Chen, Ricky T. Q.},
  booktitle={Proceedings of the Forty-first Conference on Uncertainty in Artificial Intelligence},
  series={Proceedings of Machine Learning Research},
  volume={286},
  pages={1730--1744},
  publisher={PMLR},
  year={2025}
}

@article{tzen2019neural,
  title={Neural stochastic differential equations: Deep latent {G}aussian models in the diffusion limit},
  author={Tzen, Belinda and Raginsky, Maxim},
  journal={arXiv preprint arXiv:1905.09883},
  year={2019}
}

@inproceedings{kidger2021efficient,
  title={Efficient and accurate gradients for neural {SDEs}},
  author={Kidger, Patrick and Foster, James and Li, Xuechen and Lyons, Terry},
  booktitle={Advances in Neural Information Processing Systems},
  volume={34},
  pages={18747--18761},
  year={2021}
}

@inproceedings{chen2018neural,
  title={Neural ordinary differential equations},
  author={Chen, Ricky T. Q. and Rubanova, Yulia and Bettencourt, Jesse and Duvenaud, David K},
  booktitle={Advances in Neural Information Processing Systems},
  volume={31},
  year={2018}
}

@inproceedings{rubanova2019latent,
  title={Latent ordinary differential equations for irregularly-sampled time series},
  author={Rubanova, Yulia and Chen, Ricky T. Q. and Duvenaud, David K},
  booktitle={Advances in Neural Information Processing Systems},
  volume={32},
  year={2019}
}

@inproceedings{kiyohara2025neural,
  title={Neural stochastic flows: Solver-free modelling and inference for {SDE} solutions},
  author={Kiyohara, Naoki and Johns, Edward and Li, Yingzhen},
  booktitle={Advances in Neural Information Processing Systems},
  year={2025}
}

@inproceedings{neklyudov2023action,
  title={Action matching: Learning stochastic dynamics from samples},
  author={Neklyudov, Kirill and Brekelmans, Rob and Severo, Daniel and Makhzani, Alireza},
  booktitle={International Conference on Machine Learning},
  year={2023}
}

@inproceedings{zhang2024trajectory,
  title={Trajectory flow matching with applications to clinical time series modelling},
  author={Zhang, Xi and Pu, Yuan and Kawamura, Yuki and Loza, Andrew and Bengio, Yoshua and Shung, Dennis L and Tong, Alexander},
  booktitle={Advances in Neural Information Processing Systems},
  volume={37},
  year={2024}
}

@inproceedings{lipman2023flow,
  title={Flow matching for generative modeling},
  author={Lipman, Yaron and Chen, Ricky T. Q. and Ben-Hamu, Heli and Nickel, Maximilian and Le, Matt},
  booktitle={International Conference on Learning Representations},
  year={2023}
}

@inproceedings{horvat2024gauge,
  title={On gauge freedom, conservativity and intrinsic dimensionality estimation in diffusion models},
  author={Horvat, Christian and Pfister, Jean-Pascal},
  booktitle={International Conference on Learning Representations},
  year={2024}
}

@inproceedings{petrovic2025curly,
  title={Curly flow matching for learning non-gradient field dynamics},
  author={Petrovi{\'c}, Katarina and Atanackovic, Lazar and Moro, Viggo and Kapu{\'s}niak, Kacper and Ceylan, Ismail Ilkan and Bronstein, Michael and Bose, Avishek Joey and Tong, Alexander},
  booktitle={Advances in Neural Information Processing Systems},
  year={2025}
}

@inproceedings{berlinghieri2023gaussian,
  title={Gaussian processes at the helm(holtz): A more fluid model for ocean currents},
  author={Berlinghieri, Renato and Trippe, Brian L and Burt, David R and Giordano, Ryan James and Srinivasan, Kaushik and {\"O}zg{\"o}kmen, Tamay and Xia, Junfei and Broderick, Tamara},
  booktitle={International Conference on Machine Learning},
  year={2023}
}

@inproceedings{xu2024hhd,
  title={{HHD-GP}: Incorporating {H}elmholtz-{H}odge decomposition into {G}aussian processes for learning dynamical systems},
  author={Xu, Hao and Pan, Jia},
  booktitle={Advances in Neural Information Processing Systems},
  volume={37},
  year={2024}
}

@inproceedings{singhal2023where,
  title={Where to diffuse, how to diffuse, and how to get back: Automated learning for multivariate diffusions},
  author={Singhal, R. and Goldstein, M. and Ranganath, R.},
  booktitle={International Conference on Learning Representations},
  year={2023}
}

@inproceedings{nielsen2024diffenc,
  title={{DiffEnc}: Variational diffusion with a learned encoder},
  author={Nielsen, Beatrix M. G. and Christensen, Anders and Dittadi, Andrea and Winther, Ole},
  booktitle={International Conference on Learning Representations},
  year={2024}
}

@inproceedings{sahoo2024diffusion,
  title={Diffusion models with learned adaptive noise},
  author={Sahoo, Subham Sekhar and Gokaslan, Aaron and {De Sa}, Christopher and Kuleshov, Volodymyr},
  booktitle={Advances in Neural Information Processing Systems},
  volume={37},
  pages={105730--105779},
  year={2024}
}

@inproceedings{du2024doob,
  title={Doob's {L}agrangian: A sample-efficient variational approach to transition path sampling},
  author={Du, Yuanqi and Plainer, Michael and Brekelmans, Rob and Duan, Chenru and No{\'e}, Frank and Gomes, Carla P. and Aspuru-Guzik, Al{\'a}n and Neklyudov, Kirill},
  booktitle={Advances in Neural Information Processing Systems},
  volume={37},
  pages={65791--65822},
  year={2024}
}
\bibliographystyle{iclr2027_conference}

\appendix
\newpage
\section{Proofs for \texorpdfstring{\Cref{sec:gaussianflowdynamics}}{Section 3}}\label{sec:proofs}

This appendix proves that the Gaussian flow and the Gaussian flow diffusion have the marginals \cref{eq:gaussianmarginals} (\cref{prop:continuity,prop:fpk}), and then proves \cref{thm:representation,prop:gauge}.

\subsection{Gaussian Flow and the Continuity Equation}\label{sec:proofcontinuity}

\begin{proposition}
\label{prop:continuity}
The Gaussian flow \cref{eq:gaussianflow} satisfies the continuity equation $\partial_t q(t, z) = -\nabla \cdot [\odedrift(t,z) q(t, z)]$ for $q(t, z) = \Norm(z \mid m(t), S(t)S(t)^\top)$.
\end{proposition}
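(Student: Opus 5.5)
I will verify the continuity equation by direct computation, writing everything in terms of $\Sigma_q(t) = S(t)S(t)^\top$, the centred state $y = z - m(t)$ and $B(t) = \dot S(t) S(t)^{-1}$, so that $\odedrift(t,z) = \dot m + B y$ and $\dot \Sigma_q = B\Sigma_q + \Sigma_q B^\top$. The strategy is to divide both sides of the equation by $q(t,z) > 0$. Each side then becomes an explicit polynomial of degree at most two in $y$, and I will match the constant, linear and quadratic parts separately.

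For the left-hand side I differentiate $\log q = -\tfrac12 \log\det(2\pi \Sigma_q) - \tfrac12 y^\top \Sigma_q^{-1} y$ in $t$. Jacobi's formula gives $\partial_t \log\det \Sigma_q = \tr(\Sigma_q^{-1}\dot\Sigma_q) = 2\tr B$. The chain rule gives $\partial_t y = -\dot m$, and $\partial_t \Sigma_q^{-1} = -\Sigma_q^{-1}\dot\Sigma_q \Sigma_q^{-1}$. Hence
\begin{align*}
\partial_t \log q = -\tr B + \dot m^\top \Sigma_q^{-1} y + \tfrac12 y^\top \Sigma_q^{-1}(B\Sigma_q + \Sigma_q B^\top)\Sigma_q^{-1} y ,
\end{align*}
and the quadratic term equals $y^\top \Sigma_q^{-1} B y$ by symmetry of the quadratic form.

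For the right-hand side I expand $-\nabla\cdot[\odedrift q] = -q\,\nabla\cdot\odedrift - \odedrift^\top \nabla q$. Here $\nabla\cdot\odedrift = \tr B$ and $\nabla q = -q\,\Sigma_q^{-1} y$. Dividing by $q$ gives
\begin{align*}
-\tr B + (\dot m + B y)^\top \Sigma_q^{-1} y = -\tr B + \dot m^\top \Sigma_q^{-1} y + y^\top B^\top \Sigma_q^{-1} y ,
\end{align*}
which agrees term by term with the left-hand side once one notes $y^\top B^\top \Sigma_q^{-1} y = y^\top \Sigma_q^{-1} B y$. Both $q$ and $\odedrift$ are continuously differentiable in $(t,z)$ because $m$ and $S$ are $C^1$ and $S$ is invertible, so every derivative above exists classically.

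No step is a real obstacle; the only points that need care are the bookkeeping ones. First, the quadratic term requires symmetrizing $\Sigma_q^{-1}(B\Sigma_q+\Sigma_q B^\top)\Sigma_q^{-1}$ inside the quadratic form, which produces $2\Sigma_q^{-1}B$ up to an antisymmetric part that contributes nothing. Second, the sign of the $\dot m$ term comes from $\partial_t y = -\dot m$ and must be tracked correctly.
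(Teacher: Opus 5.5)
Your proposal is correct and follows essentially the same route as the paper: divide by $q>0$, compute $\partial_t \log q$, $\nabla\cdot\odedrift = \tr(\dot S S^{-1})$ and $\odedrift^\top \nabla \log q$, and check that the constant, linear and quadratic parts in $z-m$ cancel separately. The only difference is notational: you work with $B=\dot S S^{-1}$ and the identity $\dot\Sigma_q = B\Sigma_q + \Sigma_q B^\top$, whereas the paper expands everything directly in $S$ and $S^{-1}$.
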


\begin{proof}
Since $q > 0$, the continuity equation \cref{eq:continuity} for $q$ is equivalent to the equation obtained by dividing it by $q(t, z)$. Using the product rule for the divergence, $\nabla \cdot [\odedrift q] = (\nabla \cdot \odedrift) q + \odedrift^\top \nabla q$, it becomes
\begin{align}
    \partial_t \log q(t, z) + \nabla \cdot \odedrift(t,z) + \odedrift(t,z)^\top \nabla \log q(t, z) &= 0. \label{eq:logcontinuity}
\end{align}
We compute the three terms in turn and show that they sum to zero. To lighten the notation we drop the argument $t$ from $m$, $S$ and their derivatives, and write $P = S S^\top$ for the covariance, so that $P^{-1} = S^{-\top} S^{-1}$, and $\bar z = z - m$.

\emph{First term.} Since $\log q(t, z) = -\frac{1}{2} \log \det P - \frac{1}{2} \bar z^\top P^{-1} \bar z + \text{const}$,
\begin{align*}
    \partial_t \log q(t, z) &= -\frac{1}{2} \tr\left(P^{-1} \dot P\right) + \dot m^\top P^{-1} \bar z - \frac{1}{2} \bar z^\top \partial_t \left(P^{-1}\right) \bar z,
\end{align*}
where we used the derivative of the log-determinant, $\partial_t \log \det P = \tr(P^{-1} \dot P)$, the product rule for the quadratic form, and the symmetry of $P^{-1}$. The two remaining derivatives, $\tr(P^{-1} \dot P)$ and $\partial_t(P^{-1})$, are, with $\dot P = \dot S S^\top + S \dot S^\top$,
\begin{align*}
    \tr\left(P^{-1} \dot P\right) &= \tr\left(S^{-\top} S^{-1} \dot S S^\top\right) + \tr\left(S^{-\top} S^{-1} S \dot S^\top\right) = 2 \tr\left(S^{-1} \dot S\right), \\
    \partial_t \left(P^{-1}\right) &= -P^{-1} \dot P P^{-1} = -S^{-\top} S^{-1} \dot S S^{-1} - S^{-\top} \dot S^\top S^{-\top} S^{-1},
\end{align*}
where the first line uses the cyclic property of the trace and the second the derivative of a matrix inverse. Substituting, and using that the two terms of $-\frac{1}{2} \bar z^\top \partial_t(P^{-1}) \bar z$ are transposes of each other and hence equal,
\begin{align*}
    \partial_t \log q(t, z) &= -\tr\left(S^{-1} \dot S\right) + \dot m^\top S^{-\top} S^{-1} \bar z + \bar z^\top S^{-\top} \dot S^\top S^{-\top} S^{-1} \bar z.
\end{align*}

\emph{Second term.} The vector field $\odedrift(t, z) = \dot m + \dot S S^{-1} \bar z$ is affine in $z$, so its divergence is the trace of its linear part,
\begin{align*}
    \nabla \cdot \odedrift(t,z) &= \tr\left(\dot S S^{-1}\right) = \tr\left(S^{-1} \dot S\right).
\end{align*}

\emph{Third term.} With $\nabla \log q(t, z) = -S^{-\top} S^{-1} \bar z$,
\begin{align*}
    \odedrift(t,z)^\top \nabla \log q(t, z) &= - \dot m^\top S^{-\top} S^{-1} \bar z - \bar z^\top S^{-\top} \dot S^\top S^{-\top} S^{-1} \bar z.
\end{align*}

Summing the three terms, the traces cancel, the terms linear in $\bar z$ cancel, and the quadratic forms cancel, so \cref{eq:logcontinuity} holds.
\end{proof}

\subsection{Gaussian Flow Diffusion and the FPK Equation}\label{sec:proofFPK}

\begin{proposition}
\label{prop:fpk}
The Gaussian flow diffusion \cref{eq:gaussianflowdiffusion} satisfies the \gls{fpk} equation $\partial_t q(t, z) = -\nabla \cdot [\postdrift(t,z) q(t, z)] + \frac{1}{2} \nabla \cdot\nabla \cdot[\vol(t,z) \vol(t,z)^\top q(t, z)]$ for $q(t, z) = \Norm(z \mid m(t), S(t)S(t)^\top)$.
\end{proposition}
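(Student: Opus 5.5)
The plan is to reduce the \gls{fpk} equation to the continuity equation already established in \cref{prop:continuity}. Write the drift of \cref{eq:gaussianflowdiffusion} as
\begin{align*}
    \postdrift(t, z) = \odedrift(t, z) + \frac{1}{2} \Sigma(t, z) \nabla_z \log q(t, z) + \frac{1}{2} \nabla \cdot \Sigma(t, z).
\end{align*}
This uses the identity $\nabla_z \log q(t, z) = -S^{-\top} S^{-1}(z - m)$, which turns the middle term of the stated drift into $-\frac{1}{2}\Sigma S^{-\top}S^{-1}(z-m)$. The first step is therefore simply to verify that \cref{eq:gaussianflowdiffusion} is exactly the conversion \cref{eq:odetosde} applied to the Gaussian flow \cref{eq:gaussianflow}.

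Next I would expand the second-order term of the \gls{fpk} equation. Taking the divergence column-wise and using the product rule gives
\begin{align*}
    \nabla \cdot [\Sigma q] = (\nabla \cdot \Sigma)\, q + \Sigma \nabla q = q\left( \nabla \cdot \Sigma + \Sigma \nabla \log q \right),
\end{align*}
where the symmetry of $\Sigma$ is used to identify the row and column conventions. Hence
\begin{align*}
    \frac{1}{2}\nabla\cdot\nabla\cdot[\Sigma q] = \nabla \cdot \left[ \left(\frac{1}{2}\Sigma\nabla\log q + \frac{1}{2}\nabla\cdot\Sigma\right) q \right].
\end{align*}
Substituting the decomposition of $\postdrift$, the correction terms in $-\nabla\cdot[\postdrift q]$ cancel exactly against this diffusion term. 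The right-hand side of the \gls{fpk} equation then collapses to $-\nabla\cdot[\odedrift q]$, and by \cref{prop:continuity} this equals $\partial_t q$.

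The main point requiring care is the index bookkeeping in $\nabla\cdot\nabla\cdot[\Sigma q]$: the convention that the divergence of a matrix is taken column-wise, so that $(\nabla\cdot M)_j = \sum_i \partial_{z^i} M_{ij}$, combined with the symmetry $\Sigma = \Sigma^\top$. I would write out this step in coordinates,
\begin{align*}
    \sum_{i,j} \partial_i \partial_j (\Sigma_{ij} q) = \sum_j \partial_j \Big( \sum_i \big( \partial_i \Sigma_{ij}\, q + \Sigma_{ij} \partial_i q \big) \Big),
\end{align*}
to make the identification unambiguous. I would also note the regularity needed: $\vol$ must be continuously differentiable in $z$ so that $\nabla\cdot\Sigma$ exists, and twice differentiable, or else the equation is read in the weak sense. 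Beyond this, everything is routine because $q>0$ is smooth and Gaussian.
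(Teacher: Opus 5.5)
Your proposal is correct and follows the paper's proof essentially step for step. You rewrite the drift as $\odedrift + \frac{1}{2}\Sigma\nabla_z\log q + \frac{1}{2}\nabla\cdot\Sigma$, use the column-wise product rule and the symmetry of $\Sigma$ to identify $q$ times the correction with $\frac{1}{2}\nabla\cdot[\Sigma q]$, and let it cancel the diffusion term, so the \gls{fpk} equation reduces to the continuity equation of \cref{prop:continuity}. Your remark on regularity (that $\vol$ must be differentiable enough for $\nabla\cdot\Sigma$ and the second-order term to exist, or else the equation is read weakly) is a point the paper leaves implicit.
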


\begin{proof}
The idea is that the two terms by which the drift coefficient \cref{eq:gaussianflowdiffusion} differs from the Gaussian flow \cref{eq:gaussianflow} are designed to cancel the diffusion term of the \gls{fpk} equation, so that the equation reduces to the continuity equation of \cref{prop:continuity}. Throughout, $\Sigma = \vol \vol^\top$.

\emph{Step 1: rewrite the drift coefficient.} By the conversion \cref{eq:odetosde} that defines the Gaussian flow diffusion,
\begin{align*}
     \postdrift(t,z) &= \odedrift(t,z) + \frac{1}{2}\Sigma(t,z) \nabla_z \log q(t, z) + \frac{1}{2}\nabla \cdot \Sigma(t, z),
\end{align*}
and, since $\nabla_z \log q = \nabla_z q / q$, the last two terms combine into
\begin{align*}
     \frac{1}{2} \Sigma(t,z) \frac{\nabla_z q(t, z)}{q(t,z)} + \frac{1}{2} \nabla \cdot \Sigma(t, z)
     &= \frac{1}{2 q(t, z)} \left(\Sigma(t,z)\nabla_z q(t, z) + q(t, z)\nabla \cdot \Sigma(t, z) \right) \\
     &= \frac{1}{2 q(t, z)} \nabla \cdot[\Sigma(t, z) q(t, z)],
\end{align*}
where the last equality is the product rule for the divergence applied to each column of the symmetric matrix $\Sigma q$. Hence
\begin{align*}
     \postdrift(t,z) q(t, z) &= \odedrift(t,z) q(t, z) + \frac{1}{2} \nabla \cdot[\Sigma(t, z) q(t, z)].
\end{align*}
\emph{Step 2: substitute into the \gls{fpk} equation.} Using the linearity of the divergence,
\begin{align*}
    &-\nabla \cdot [\postdrift(t,z) q(t, z)] + \frac{1}{2} \nabla \cdot\nabla \cdot[\Sigma(t,z) q(t, z)] \\
    &\qquad = -\nabla \cdot [\odedrift(t,z) q(t, z)] - \frac{1}{2} \nabla \cdot \nabla \cdot[\Sigma(t, z) q(t, z)] + \frac{1}{2} \nabla \cdot\nabla \cdot[\Sigma(t,z) q(t, z)] \\
    &\qquad = -\nabla \cdot [\odedrift(t,z) q(t, z)].
\end{align*}
The \gls{fpk} equation for $\postdrift$ and $\vol$ is therefore equivalent to the continuity equation for $\odedrift$, which $q(t, z) = \Norm(z \mid m(t), S(t)S(t)^\top)$ satisfies by \cref{prop:continuity}.
\end{proof}

\subsection{Proof of \texorpdfstring{\cref{thm:representation}}{Theorem 1}}\label{sec:proofgaussian}

We show that for continuous $A$, $u$, $\vol$ and $S_0 \in \gl(d)$ there are unique continuously differentiable $m$ and $S$ with $m(0) = m_0$, $S(0) = S_0$ such that the Gaussian flow diffusion with flow map $m + S \varepsilon$ and diffusion coefficient $\vol$ is the linear \gls{sde} \cref{eq:linearSDE}.

\begin{proof}
For a state-independent diffusion coefficient $\vol(t)$, with $\Sigma(t) = \vol(t) \vol(t)^\top$, the drift coefficient \cref{eq:gaussianflowdiffusion} of the Gaussian flow diffusion is
\begin{align*}
    \postdrift(t, z) &= \dot m(t)+\left(\dot S(t)S(t)^{-1} -\frac{1}{2}\Sigma(t) S(t)^{-\top}S(t)^{-1}\right)(z-m(t)).
\end{align*}
It coincides with the drift coefficient $A(t) z + u(t)$ of \cref{eq:linearSDE} for all $z$ if and only if the linear parts and the constant parts agree,
\begin{align}
    \dot S(t)S(t)^{-1} -\frac{1}{2}\Sigma(t) S(t)^{-\top}S(t)^{-1} &= A(t), \label{eq:statematrix}\\
    \dot m(t) - \left(\dot S(t)S(t)^{-1} -\frac{1}{2}\Sigma(t) S(t)^{-\top}S(t)^{-1}\right) m(t) &= u(t). \label{eq:driver}
\end{align}
The initial distributions agree by the choice $m(0) = m_0$, $S(0) = S_0$. It therefore remains to show that \cref{eq:statematrix,eq:driver} have unique continuously differentiable solutions $S : [0, T] \to \gl(d)$ and $m : [0, T] \to \reals^d$ with these initial values. We proceed in three steps: we construct $S$ from the covariance of \cref{eq:linearSDE} and an orthogonal factor, show that this $S$ is the unique solution, and then solve for $m$.

\emph{Step 1: construction of $S$.} Multiplied by $S(t)$ from the right, \cref{eq:statematrix} is equivalent to the matrix \gls{ode}
\begin{align}
    \dot S(t) &= A(t) S(t) + \frac{1}{2}\Sigma(t) S(t)^{-\top}. \label{eq:statematrixequiv}
\end{align}
Let $P_0 = S_0 S_0^\top \succ 0$ and let $P(t)$ solve the differential Lyapunov equation
\begin{align*}
    \dot P(t) &= A(t) P(t) + P(t) A(t)^\top + \Sigma(t), \quad P(0) = P_0,
\end{align*}
which is the covariance equation of the linear \gls{sde} \cref{eq:linearSDE}. For continuous $A$ and $\Sigma \succeq 0$ it has the unique solution
\begin{align*}
    P(t) &= \Phi_A(t, 0) P_0 \Phi_A(t,0)^\top + \int_0^t \Phi_A(t, s)\Sigma(s)\Phi_A(t,s)^\top \diff s,
\end{align*}
where $\Phi_A$ is the state-transition matrix of $A$, the solution of $\partial_t \Phi_A(t, s) = A(t) \Phi_A(t, s)$ with $\Phi_A(s, s) = I$ \citep{Särkkä_Solin_2019}. Since $P_0 \succ 0$ and $\Phi_A(t, 0)$ is invertible, $P(t) \succ 0$ for all $t$. Let $P(t) = L(t)L(t)^\top$ be its Cholesky factorization; since the Cholesky factor is a smooth function of a positive definite matrix, $L$ is continuously differentiable and invertible. Define
\begin{align*}
    \Omega(t) &= L(t)^{-1}\left[A(t) L(t) +\frac{1}{2}\Sigma(t) L(t)^{-\top}-\dot L(t)\right].
\end{align*}
To see that $\Omega$ is skew-symmetric, multiply by $L$ on the left and by $L^\top$ on the right:
\begin{align*}
    L \Omega L^\top + \left(L \Omega L^\top\right)^\top &= \left(A P + P A^\top + \Sigma\right) - \left(\dot L L^\top + L \dot L^\top\right) = \dot P - \dot P = 0,
\end{align*}
using the Lyapunov equation for $\dot P$ and $\dot P = \dot L L^\top + L \dot L^\top$; since $L$ is invertible, $\Omega + \Omega^\top = 0$. Now let $O(t)$ solve
\begin{align*}
    \dot O(t) &= \Omega(t) O(t), \quad O(0) = L(0)^{-1} S_0.
\end{align*}
The initial condition is orthogonal, since $O(0) O(0)^\top = L(0)^{-1} P_0 L(0)^{-\top} = I$, and skew-symmetry of $\Omega$ implies $\frac{\diff}{\diff t}(O O^\top) = 0$, so $O(t)$ is orthogonal for all $t$. Hence $S(t) = L(t) O(t)$ is invertible with $S(0) = S_0$, and
\begin{align*}
    \dot S &= \dot L O + L \Omega O = \dot L O + \left(A L + \tfrac{1}{2} \Sigma L^{-\top} - \dot L\right) O = A S + \tfrac{1}{2} \Sigma S^{-\top},
\end{align*}
since $L^{-\top} O = S^{-\top}$ for orthogonal $O$; so $S$ satisfies \cref{eq:statematrixequiv}. Here $\Omega = \dot O O^\top$ is exactly the gauge generator of \cref{sec:gaussianflowdynamics} for the factorization $S = L O$; \cref{sec:proofconjugate} uses this identification.

\emph{Step 2: uniqueness of $S$.} The right-hand side of \cref{eq:statematrixequiv} is locally Lipschitz in $S$ on the open set $\gl(d)$, so the solution with $S(0) = S_0$ is unique as long as it exists; the solution constructed in Step 1 exists on all of $[0, T]$, so it is the unique one.

\emph{Step 3: the mean.} Substituting \cref{eq:statematrix} into \cref{eq:driver} gives the linear \gls{ode}
\begin{align*}
    \dot m(t) &= A(t) m(t) + u(t), \quad m(0) = m_0,
\end{align*}
which has a unique continuously differentiable solution since $A$ and $u$ are continuous \citep{Särkkä_Solin_2019}.
\end{proof}

\subsection{Proof of \texorpdfstring{\cref{prop:gauge}}{Proposition 2}}\label{sec:proofgauge}

We show that an affine vector field is $q$-divergence-free for a Gaussian $q$ with Cholesky factor $L$ if and only if it is a gauge field $L \gauge L^{-1}(z - m)$ with skew-symmetric $\gauge$.

\begin{proof}
We write the $q$-divergence-free condition in whitened coordinates, where it becomes a polynomial identity in the whitened state, and read off the constraints on $M$ and $c$ degree by degree. Fix $t$ and drop it from the notation. Let $q(z) = \Norm(z \mid m, L L^\top)$ with $L$ invertible, and let $v(z) = M z + c$ be an affine vector field. By the product rule for the divergence,
\begin{align*}
    \nabla \cdot [v(z) q(z)] &= \left(\nabla \cdot v(z) + v(z)^\top \nabla \log q(z)\right) q(z),
\end{align*}
so $v$ is $q$-divergence-free if and only if $\nabla \cdot v(z) + v(z)^\top \nabla \log q(z) = 0$ for all $z$. Since $\nabla \cdot v(z) = \tr(M)$ and $\nabla \log q(z) = -L^{-\top} L^{-1}(z - m)$, in the whitened coordinate $\eta = L^{-1}(z - m)$ this condition reads
\begin{align}
    \tr(M) - \left(L^{-1} M L \, \eta + L^{-1}(M m + c)\right)^\top \eta &= 0 \quad \text{for all } \eta \in \reals^d. \label{eq:gaugecondition}
\end{align}
The left-hand side is a polynomial of degree two in $\eta$, and it vanishes identically if and only if each of its homogeneous parts vanishes. The quadratic part, $-\eta^\top L^{-1} M L \, \eta$, vanishes for all $\eta$ if and only if $L^{-1} M L$ is skew-symmetric. The linear part, $-(M m + c)^\top L^{-\top} \eta$, vanishes for all $\eta$ if and only if $M m + c = 0$. The constant part requires $\tr(M) = \tr(L^{-1} M L) = 0$, which skew-symmetry already implies.

Hence $v$ is $q$-divergence-free if and only if $c = -M m$ and $\gauge = L^{-1} M L$ is skew-symmetric, that is, if and only if $v(z) = L \gauge L^{-1}(z - m)$ for a skew-symmetric $\gauge$. Both directions of \cref{prop:gauge} follow, and $\gauge = L^{-1} M L$ is determined by $v$.
\end{proof}

The condition \cref{eq:gaugecondition} does not involve the diffusion coefficient. Since the continuity and \gls{fpk} equations depend on the drift coefficient only through $\nabla \cdot [\postdrift q]$, adding a $q$-divergence-free $v$ to the drift coefficient of \cref{eq:gaussianflow} or \cref{eq:gaussianflowdiffusion} leaves both equations, and hence the marginal laws, unchanged, for any $\vol$, state-dependent or not.

\section{Fixed Gauges of Existing Methods}\label{sec:fixedgauges}

This appendix identifies the gauges that existing simulation-free methods use, so that they can be placed in the family of \cref{sec:gaussianflowdynamics}. Existing methods with Gaussian marginals $q(t, z) = \Norm(z \mid m(t), P(t))$ and a state-independent diffusion coefficient use a posterior process with an affine drift coefficient $\postdrift(t, z) = A(t) z + u(t)$; we call $A(t)$ its \emph{state matrix}. Comparing with \cref{eq:gaussianflowdiffusion}, the state matrix has the form
\begin{align}
    A(t) &= K(t) - \frac{1}{2} \Sigma(t) P(t)^{-1}, \qquad \dot P(t) = K(t) P(t) + P(t) K(t)^\top, \label{eq:lyapunovK}
\end{align}
where $K(t) = \dot S(t) S(t)^{-1}$, and the Lyapunov equation on the right follows by differentiating $P = S S^\top$. Conversely, every solution $K(t)$ of this Lyapunov equation is realized by some square root $S(t)$ \citep{smith2026closing}. In terms of the factorization $S = L O$ of \cref{sec:gaussianflowdynamics}, $K(t) = \dot L(t) L(t)^{-1} + L(t) \gauge(t) L(t)^{-1}$ with $L(t)$ the Cholesky factor of $P(t)$. Each method therefore corresponds to a particular choice of the skew-symmetric $\gauge(t)$ as a function of $(P, \dot P, \Sigma)$. The choice $O \equiv I$, that is, $\gauge \equiv 0$ and $K = \dot L L^{-1}$, is the Cholesky gauge of \cref{sec:gaussianflowdynamics}.

\paragraph{Square Root Gauge}
\gls{sde} Matching \citep{bartosh2025sde} defines the posterior process through the affine map $F(t, \varepsilon) = m(t) + S(t) \varepsilon$ for a learned square root $S(t)$, so that $K = \dot S S^{-1}$ directly. Which gauge this selects depends on how $S$ is parameterized. For the symmetric square root $S = P^{1/2}$, \citet{smith2026closing} call the resulting choice the \emph{square root gauge}. In the coordinates $S = L O$ it corresponds to $\gauge = L^{-1} D L$ with $D = \frac{\diff}{\diff t} P^{1/2} \, P^{-1/2} - \dot L L^{-1}$. Both terms of $D$ solve the Lyapunov equation in \cref{eq:lyapunovK}, so their difference $D$ satisfies $D P + P D^\top = 0$; hence $D (z - m)$ is $q$-divergence-free, and $\gauge$ is skew-symmetric by \cref{prop:gauge}. The implementation of \citet{bartosh2025sde} uses a diagonal $S(t)$. The marginal covariance is then diagonal, the symmetric square root and the Cholesky factor coincide, and the square root and Cholesky gauges agree. Parameterizing a general $S(t)$ would in principle span every gauge, but it conflates the marginals with the gauge; the factorization $S = L O$ separates them.

\paragraph{Symmetric Gauge}
\citet{course2023amortized,course2023state} choose $A(t)$ to be the unique symmetric solution of $\dot P - \Sigma = A P + P A^\top$; we call this the \emph{symmetric gauge}. To express it in our coordinates, write $A = A_0 + L \gauge L^{-1}$, where $A_0 = \dot L L^{-1} - \frac{1}{2} \Sigma P^{-1}$ is the state matrix of the Cholesky gauge. Symmetry of $A$ is then equivalent to the Sylvester equation
\begin{align*}
    G_0 \gauge + \gauge G_0 &= -L^\top (A_0 - A_0^\top) L, \qquad G_0 = L^\top L,
\end{align*}
which has a unique skew-symmetric solution because $G_0$ is positive definite and the right-hand side is skew-symmetric; it has the form of \cref{eq:optimalgauge} with $\Sigma = I$. Unlike the square root and Cholesky gauges, the symmetric gauge depends on $\Sigma$, and it does not extend to state-dependent diffusion coefficients: when $\Sigma$ depends on $z$, the drift coefficient of the Cholesky gauge is no longer affine in $z$, and no state-independent $\gauge$ can make its Jacobian symmetric for all $z$ \citep{smith2026closing}.

\paragraph{Learned Gauge}
For all three fixed gauges, Cholesky, square root and symmetric, the gauge is a deterministic function of the marginals and $\Sigma$, chosen independently of the objective. In Gauge Matching, $\gauge(t)$ is instead a free function, and \cref{prop:gauge} shows that no other affine choice exists: every fixed gauge is a point in the family we optimize over. One special case matters for \cref{sec:exp:gauge}. When the marginals are stationary, $\dot P = 0$, the Cholesky and square root gauges give $K = 0$, and the symmetric gauge gives $K = 0$ as well whenever $\Sigma$ and $P$ commute, as in the isotropic setting there. The fixed gauge $O \equiv I$ used in \cref{sec:exp:gauge} therefore represents all three fixed gauges at once. \Cref{tab:methods} summarizes how the fixed gauges, the order-one Helmholtz correction of \cref{sec:gaugematching} and the learned gauge compare in cost and scope.

\paragraph{Householder Parameterization of the Gauge}
Instead of parameterizing the generator $\gauge$ directly, the orthogonal factor $O(t)$ of \cref{eq:factorization} can be parameterized itself, for instance as a Householder flow \citep{tomczak2016improving} of $n = 2\lfloor d/2 \rfloor$ reflections, $O(t) = \prod_{i=1}^{n} H(v_i(t))$ with $H(v) = I - 2 v v^\top / \|v\|^2$, which represents every rotation \citep[Theorem 6.6]{grove2002classical}; the gauge is then $\gauge = \dot O O^\top$, obtained by forward-mode differentiation of the $v_i$. This recovers an explicit flow map $\flow(t, \varepsilon) = m(t) + L(t) O(t) \varepsilon$ at the same asymptotic cost, at the price of a redundant parameterization, since $O$ and $O R$ for constant orthogonal $R$ give the same process. We use it in \cref{sec:exp:gauge}, where the gauge angle of $O(t)$ can be read off directly.

\subsection{State-Dependent Gauge Generators}\label{sec:statedependentgauge}

The gauge of \cref{sec:gaussianflowdynamics} is a skew-symmetric matrix $\gauge(t)$, and \cref{prop:gauge} shows that it spans every \emph{affine} marginal-preserving field. Letting the generator depend on the state extends the construction to nonlinear fields. We record the extension because it places the higher-order corrections of \citet{smith2026closing} inside the gauge framework and keeps the simulation-free structure of Gauge Matching; we do not evaluate it experimentally.

\paragraph{Marginal-Preserving Fields from Skew-Symmetric Fields}
Let $q(t, z) > 0$ be a smooth density and $\gauge(t, z)$ a continuously differentiable skew-symmetric matrix field. With the column-wise divergence of \cref{eq:fpk}, $(\nabla \cdot \gauge)_i = \sum_j \partial_j \gauge_{ji}$, define
\begin{align}
    v(t, z) &= \frac{1}{q(t, z)} \nabla \cdot \left[ q(t, z) \gauge(t, z) \right] = \nabla \cdot \gauge(t, z) - \gauge(t, z) \nabla_z \log q(t, z), \label{eq:skewfield}
\end{align}
where the second expression is the product rule, $\sum_j \partial_j (q \gauge_{ji}) = q (\nabla \cdot \gauge)_i + \sum_j \gauge_{ji} \partial_j q$, together with $\gauge_{ji} = -\gauge_{ij}$. Then $v$ is $q$-divergence-free,
\begin{align*}
    \nabla \cdot [v q] &= \sum_{i, j} \partial_i \partial_j \left( q \gauge_{ji} \right) = 0,
\end{align*}
because the second derivatives are symmetric in $(i, j)$ and $\gauge_{ji}$ is antisymmetric. Adding $v$ to the drift coefficient of any process with marginals $q$ therefore leaves the marginals unchanged, for any diffusion coefficient, as in \cref{sec:proofgauge}. For a state-independent $\gauge(t)$ and the Gaussian marginals \cref{eq:gaussianmarginals}, $\nabla \cdot \gauge = 0$ and $\nabla_z \log q = -L^{-\top} L^{-1}(z - m)$, so $v = \gauge L^{-\top} L^{-1}(z - m)$, which is the gauge field \cref{eq:gaugevectorfield} after the change of variable $\gauge \mapsto L \gauge L^\top$. Conversely, every $q$-divergence-free field is of the form \cref{eq:skewfield}: $q v$ is a divergence-free field on $\reals^d$ and hence the divergence of an antisymmetric matrix field, $q v = \nabla \cdot B$ with $B^\top = -B$, by the Poincar\'e lemma applied to the closed $(d-1)$-form that $q v$ defines \citep[Chapter 17]{lee2012smooth}; $\gauge = B / q$ is skew-symmetric and recovers $v$. In dimension $d = 1$ the only skew-symmetric matrix is zero, and indeed no nonzero marginal-preserving field exists.

\paragraph{Whitened Form}
For the Gaussian marginals \cref{eq:gaussianmarginals} it is convenient to define the generator in the whitened state $\eta = L(t)^{-1}(z - m(t))$. If $\gauge(t, z) = L(t) \tilde\gauge(t, \eta) L(t)^\top$ for a skew-symmetric field $\tilde\gauge(t, \eta)$, then \cref{eq:skewfield} becomes
\begin{align}
    v(t, z) &= L(t) \, u(t, \eta), \qquad u(t, \eta) = \nabla_\eta \cdot \tilde\gauge(t, \eta) + \tilde\gauge(t, \eta) \eta, \label{eq:whitenedskewfield}
\end{align}
and $u$ is divergence-free with respect to the standard Gaussian, $\nabla_\eta \cdot u - \eta^\top u = 0$. The drift coefficient of the posterior process is \cref{eq:driftwhitened} with $L \gauge \eta$ replaced by $L u$,
\begin{align}
    \postdrift(t, z) &= \dot m + \dot L \eta + L \left( \nabla_\eta \cdot \tilde\gauge(t, \eta) + \tilde\gauge(t, \eta) \eta \right) - \frac{1}{2} \Sigma(t, z) L^{-\top} \eta + \frac{1}{2} \nabla \cdot \Sigma(t, z), \label{eq:driftwhitenedsd}
\end{align}
which for a state-independent $\tilde\gauge = \gauge$ is \cref{eq:driftwhitened} itself. The proof of \cref{prop:fpk} applies verbatim, since it uses only that the added field is $q$-divergence-free. For a polynomial $u$ the generator is explicit, $\tilde\gauge = \nabla \psi - (\nabla \psi)^\top$ with $\psi = (\mathcal{N} + I)^{-1} u$ componentwise, where $\mathcal{N} = -\Delta + \eta^\top \nabla$ is the Ornstein--Uhlenbeck number operator, which acts diagonally on Hermite polynomials.

\paragraph{Flow Map and Sampling}
With a state-independent generator, the whitened flow $\dot O = \gauge O$ is a rotation, which preserves $\Norm(0, I)$; this is why the posterior process is a Gaussian flow diffusion with flow map $m + L O \varepsilon$. With a state-dependent generator, the whitened flow is the \gls{ode} $\dot \eta = u(t, \eta)$, whose solution map $O_t$ is a nonlinear diffeomorphism that still preserves the standard Gaussian, because $u$ is divergence-free with respect to it. The flow map is $\flow(t, \varepsilon) = m(t) + L(t) O_t(\varepsilon)$, the marginals are unchanged, and the joint distributions at several times are no longer Gaussian, even for a state-independent diffusion coefficient. Sampling and training remain simulation-free: since $O_t(\varepsilon) \sim \Norm(0, I)$, the state $Z_t = m + L \eta$ with $\eta \sim \Norm(0, I)$ is a sample from the marginal, and the drift coefficient \cref{eq:driftwhitenedsd} is evaluated at this $\eta$ without integrating the \gls{ode}. The map $O_t$ is never needed, just as $O(t)$ is never formed in \cref{sec:gaussianflowdynamics}.

\paragraph{Parameterization and Cost}
A network $U_\phi(t, \eta, \obs)$ with $d(d-1)/2$ outputs, placed above the diagonal of an otherwise zero matrix, gives $\tilde\gauge = U_\phi - U_\phi^\top$, with the same number of free entries at every state as the constant gauge. The only new quantity in \cref{eq:driftwhitenedsd} is the divergence $\nabla_\eta \cdot \tilde\gauge$, a vector of $d$ sums of first derivatives of the network. Computed exactly it costs $d$ forward-mode passes; the unbiased estimate $\nabla_\eta \cdot \tilde\gauge = \E_\xi[(\partial_\xi \tilde\gauge)^\top \xi]$ with $\xi \sim \Norm(0, I)$, where $\partial_\xi \tilde\gauge$ is the directional derivative of $\tilde\gauge$ along $\xi$, costs a single Jacobian--vector product, so that an iteration stays at the cost of a few network evaluations. The squared residual in \cref{eq:elbo} is quadratic in the estimated divergence, so the resulting estimator of the path-space \gls{kl} term is biased upward by the variance of the divergence estimate; the bias vanishes as the number of directions $\xi$ grows. We have verified the identities of this section numerically for a neural $\tilde\gauge$: \cref{eq:skewfield,eq:whitenedskewfield}, the invariance of $\Norm(0, I)$ under the flow of $u$, the divergence estimator, and the explicit generator of a polynomial field.

\paragraph{Relation to Higher-Order Helmholtz Corrections}
The order-$\ell$ correction of \citet{smith2026closing} is the $q$-divergence-free component of a polynomial residual of degree $\ell$, and for Gaussian $q$ it is itself a polynomial field of degree $\ell$. By the converse above, it is a field \cref{eq:whitenedskewfield} with a polynomial generator $\tilde\gauge$ of degree $\ell - 1$, and conversely every such generator gives a $q$-divergence-free polynomial field of degree at most $\ell$; the two spaces coincide (we checked that their dimensions agree for $d \in \{2, 3\}$ and $\ell \le 3$). Order one is the constant generator of \cref{sec:gaussianflowdynamics}. A state-dependent generator therefore spans the corrections of every order with the outputs of a single network, where the construction of \citet{smith2026closing} forms the derivative tensors of the prior drift and solves a linear system whose size grows as $d^{\ell + 1}$, and it applies to state-dependent diffusion coefficients, which their construction excludes. Which of these fields the objective selects is again the strictly convex problem of \cref{prop:optimalgauge} on a larger linear space, so the minimizer is unique. Whether it is worth learning is an empirical question that we leave open; \citet{smith2026closing} report no notable improvement from orders $\ell \ge 2$ in their experiments.

\section{Proofs for \texorpdfstring{\Cref{sec:gaugematching}}{Section 4}}\label{sec:proofsgm}

\subsection{Proof of \texorpdfstring{\cref{prop:optimalgauge}}{Proposition 3}}\label{sec:proofoptimalgauge}

The proof has two parts. We first show that $\mathcal{J}$ is a strictly convex quadratic on the skew-symmetric matrices, which gives existence and uniqueness of the minimizer for any diffusion coefficient. We then derive the stationarity condition, which for a state-independent diffusion coefficient reduces to \cref{eq:optimalgauge} by Stein's lemma. After the proof, \cref{sec:optimalgaugeconsequences} draws the consequences used in \cref{sec:gaugematching}: the case of a state-dependent diffusion coefficient, the optimality criterion for fixed gauges, the stationary linear example, and the identification of the order-one Helmholtz correction.

\begin{proof}
Fix $t$ and drop it from the notation, so that $q(z) = \Norm(z \mid m, L L^\top)$, and write $Z = m + L \varepsilon$ with $\varepsilon \sim \Norm(0, I)$. In these coordinates the gauge field \cref{eq:gaugevectorfield} is $L \gauge L^{-1}(Z - m) = L \gauge \varepsilon$, and with the residual of the Cholesky gauge $r_0(z) = \priordrift(z) - \postdrift_0(z)$ the objective reads
\begin{align*}
    \mathcal{J}(\gauge) &= \frac{1}{2} \E_\varepsilon \left[ \left\| r_0(Z) - L \gauge \varepsilon \right\|_{\Sigma(Z)^{-1}}^2 \right].
\end{align*}
Under the hypotheses of \cref{prop:optimalgauge}, the divergence term of $\postdrift_0$ is defined, $r_0$ has at most polynomial growth, $\Sigma(z)^{-1}$ is bounded because $\Sigma$ is uniformly positive definite, and $\mathcal{J}(\gauge)$ is finite for every $\gauge$.

\paragraph{Strict Convexity}
Expanding the square,
\begin{align*}
    \mathcal{J}(\gauge) &= \mathcal{J}(0) - \E_\varepsilon \left[ r_0(Z)^\top \Sigma(Z)^{-1} L \gauge \varepsilon \right] + \frac{1}{2} \E_\varepsilon \left[ \varepsilon^\top \gauge^\top L^\top \Sigma(Z)^{-1} L \gauge \varepsilon \right],
\end{align*}
a quadratic function of $\gauge$. Its quadratic part is nonnegative because $\Sigma(z)^{-1}$ is positive definite for every $z$, and it vanishes only if $L \gauge \varepsilon = 0$ almost surely, that is, only if $\gauge = 0$, since $\varepsilon$ has full support and $L$ is invertible. The quadratic part is therefore positive definite on the linear space $\skewset(d)$ of skew-symmetric matrices. Hence $\mathcal{J}$ is strictly convex and coercive on $\skewset(d)$ and has a unique minimizer $\gauge^\star$. This part of the argument uses no structure of $\Sigma$ beyond positive definiteness; in particular, it holds for state-dependent diffusion coefficients.

\paragraph{State-Independent Diffusion Coefficient}
Let $\Sigma$ be constant. Using $\E_\varepsilon[\varepsilon^\top X \varepsilon] = \tr(X)$ for any matrix $X$ and writing $\Xi = \E_\varepsilon[r_0(Z) \varepsilon^\top]$,
\begin{align*}
    \mathcal{J}(\gauge) &= \mathcal{J}(0) - \tr\left( \gauge \Xi^\top \Sigma^{-1} L \right) + \frac{1}{2} \tr\left( \gauge^\top G \gauge \right), \qquad G = L^\top \Sigma^{-1} L.
\end{align*}
The directional derivative of $\mathcal{J}$ in a skew-symmetric direction $E$ is $\tr( E^\top [ G \gauge - L^\top \Sigma^{-1} \Xi ] )$, where we used that $G$ is symmetric. A matrix $X$ satisfies $\tr(E^\top X) = 0$ for all $E \in \skewset(d)$ if and only if $X$ is symmetric, since the symmetric and skew-symmetric matrices are orthogonal complements under the trace inner product. The derivative therefore vanishes in every skew-symmetric direction if and only if $G \gauge - L^\top \Sigma^{-1} \Xi$ is symmetric, that is, if and only if it equals its transpose $-\gauge G - \Xi^\top \Sigma^{-1} L$, which rearranges to
\begin{align}
    G \gauge + \gauge G &= L^\top \Sigma^{-1} \Xi - \Xi^\top \Sigma^{-1} L. \label{eq:optimalgaugeC}
\end{align}
The right-hand side is skew-symmetric. Since $G$ is positive definite, the Sylvester operator $\gauge \mapsto G \gauge + \gauge G$ has eigenvalues $\lambda_i + \lambda_j > 0$, where $\lambda_i$ are the eigenvalues of $G$, and is therefore invertible on $\reals^{d \times d}$; so \cref{eq:optimalgaugeC} has exactly one solution. Transposing \cref{eq:optimalgaugeC} and using that its right-hand side is skew-symmetric shows that $-\gauge^\top$ solves it whenever $\gauge$ does, so by uniqueness the solution is skew-symmetric.

It remains to compute $\Xi$. Stein's lemma states that $\E_\varepsilon[\varphi(\varepsilon) \varepsilon^\top] = \E_\varepsilon[\nabla_\varepsilon \varphi(\varepsilon)]$ for continuously differentiable $\varphi$ of at most polynomial growth, where $\nabla_\varepsilon \varphi$ is the Jacobian with entries $\partial \varphi_i / \partial \varepsilon_j$, the convention used for $\nabla_z \priordrift$ throughout. Applied to $\varphi(\varepsilon) = r_0(m + L \varepsilon)$, it gives
\begin{align*}
    \Xi &= \E_\varepsilon\left[\nabla_z r_0(Z)\right] L = \left( \E_{q(z)}\left[\nabla_z \priordrift(Z)\right] - A_0 \right) L, \qquad A_0 = \dot L L^{-1} - \frac{1}{2} \Sigma L^{-\top} L^{-1},
\end{align*}
where $A_0$ is the state matrix of $\postdrift_0$ in \cref{eq:gaussianflowdiffusion}, whose divergence term vanishes for constant $\Sigma$. Substituting into \cref{eq:optimalgaugeC}, the term $\frac{1}{2} \Sigma L^{-\top} L^{-1}$ of $A_0$ drops out, since $\Sigma^{-1} \cdot \Sigma L^{-\top} L^{-1} = L^{-\top} L^{-1}$ is symmetric, and \cref{eq:optimalgauge} follows with $\bar J = \E_{q(z)}[\nabla_z \priordrift(Z)] - \dot L L^{-1}$.
\end{proof}

\subsection{Consequences of \texorpdfstring{\cref{prop:optimalgauge}}{Proposition 3}}\label{sec:optimalgaugeconsequences}

\paragraph{State-Dependent Diffusion Coefficient}
For a general $\Sigma(z)$ the minimizer is still the solution of a linear system, since $\mathcal{J}$ is a strictly convex quadratic. Writing $\gauge = \sum_{k} \gamma_k E_k$ in a basis $E_1, \ldots, E_{d(d-1)/2}$ of $\skewset(d)$, the stationarity conditions $\partial \mathcal{J} / \partial \gamma_j = 0$ read $\sum_k Q_{jk} \gamma_k = b_j$ with
\begin{align*}
    Q_{jk} &= \E_\varepsilon\left[ \varepsilon^\top E_j^\top L^\top \Sigma(Z)^{-1} L E_k \varepsilon \right], \qquad
    b_j = \E_\varepsilon\left[ r_0(Z)^\top \Sigma(Z)^{-1} L E_j \varepsilon \right],
\end{align*}
and $Q$ is positive definite by the strict convexity argument above. Unlike in the state-independent case, these coefficients do not reduce to a Jacobian: the factor $\Sigma(Z)^{-1}$ inside the expectations depends on $Z$, so Stein's lemma no longer gives the same simplification. $Q$ and $b$ therefore have to be estimated at every evaluation, by Monte Carlo sampling or, in low dimension, by numerical quadrature. Gauge Matching sidesteps this by learning $\gauge_\phi$ directly.

\paragraph{Optimality of Fixed Gauges}
The stationarity condition also tells us when a given gauge is optimal. Let $\gauge$ be any gauge, with state matrix $A_\gauge = A_0 + L \gauge L^{-1}$. By the computation above, $\gauge$ is optimal if and only if $G \gauge - L^\top \Sigma^{-1} \Xi$ is symmetric. Substituting $G = L^\top \Sigma^{-1} L$ and $\Xi = (\E_{q(z)}[\nabla_z \priordrift(Z)] - A_0) L$,
\begin{align*}
    G \gauge - L^\top \Sigma^{-1} \Xi &= L^\top \Sigma^{-1} \left( L \gauge L^{-1} + A_0 - \E_{q(z)}[\nabla_z \priordrift(Z)] \right) L \\
    &= L^\top \Sigma^{-1} \left( A_\gauge - \E_{q(z)}[\nabla_z \priordrift(Z)] \right) L,
\end{align*}
and since congruence by the invertible $L$ preserves symmetry, $\gauge$ is optimal if and only if
\begin{align}
    \Sigma^{-1} \left( A_\gauge - \E_{q(z)}\left[\nabla_z \priordrift(Z)\right] \right) \text{ is symmetric.} \label{eq:optimalitycriterion}
\end{align}
Two cases are used in \cref{sec:gaugematching}. For the Cholesky gauge, $A_\gauge = A_0$, and \cref{eq:optimalitycriterion} holds if and only if $\Sigma^{-1} \bar J$ is symmetric, since the term $\frac{1}{2} \Sigma L^{-\top} L^{-1}$ of $A_0$ drops out as before. For the symmetric gauge, $A_\gauge$ is symmetric by definition; if in addition $\Sigma = \sigma^2 I$ is isotropic, then \cref{eq:optimalitycriterion} holds if and only if $\E_{q(z)}[\nabla_z \priordrift(Z)]$ is symmetric. This is the case for a gradient prior drift coefficient $\priordrift = -\nabla U$, whose Jacobian $-\nabla^2 U$ is symmetric at every $z$.

\paragraph{The Stationary Linear SDE of \cref{sec:exp:gauge}}
Let $\priordrift(z) = A z$ with $A = -\alpha I + \omega J$, $J = \bigl(\begin{smallmatrix} 0 & -1 \\ 1 & 0 \end{smallmatrix}\bigr)$, $\Sigma = I$, and let the marginals be the stationary law $m = 0$, $L L^\top = I / (2 \alpha)$, so $\dot L = 0$ and $L = I / \sqrt{2 \alpha}$. Then $G = I / (2 \alpha)$, $\bar J = A$, and the right-hand side of \cref{eq:optimalgauge} is $(A - A^\top) / (2 \alpha) = \omega J / \alpha$, so $\gauge^\star = \omega J$. Here $A_0 = \dot L L^{-1} - \frac{1}{2} \Sigma L^{-\top} L^{-1} = -\alpha I$, so the drift coefficient of the optimal gauge is $A_0 z + L \gauge^\star L^{-1} z = -\alpha z + \omega J z = \priordrift(z)$: the optimal gauge recovers the prior process, and the path-space \gls{kl} term vanishes.

\paragraph{The Order-One Helmholtz Correction}
We now show that the order-one correction of Helmholtz-\gls{sde} \citep{smith2026closing} is the optimal gauge of the linearized prior process, as claimed in \cref{sec:gaugematching}. Helmholtz-\gls{sde} starts from a reference posterior process with a fixed gauge $\gauge_{\mathrm{ref}}$, whose drift coefficient $\postdrift_{\mathrm{ref}} = \postdrift_0 + L \gauge_{\mathrm{ref}} L^{-1}(z - m)$ is affine, and adds to it the $q$-divergence-free component of the residual $\priordrift - \postdrift_{\mathrm{ref}}$. This component is defined by the $q$-weighted Helmholtz decomposition of \citet{smith2026closing}: it is the field $v$ that minimizes $\E_{q(z)}[\| \priordrift(Z) - \postdrift_{\mathrm{ref}}(Z) - v(Z) \|_{\Sigma^{-1}}^2]$ over all $q$-divergence-free fields. At order $\ell = 1$, the residual is replaced by its linearization about $m$; since $\postdrift_{\mathrm{ref}}$ is already affine, this amounts to replacing $\priordrift$ by its linearization $\tilde\priordrift(z) = \priordrift(m) + \nabla_z \priordrift(m)(z - m)$.

Three observations identify the resulting gauge.
\begin{enumerate}[nosep,leftmargin=*]
    \item The Helmholtz decomposition of an affine field with respect to a Gaussian $q$ has affine components \citep{smith2026closing}, so the correction is an affine $q$-divergence-free field, that is, by \cref{prop:gauge}, a gauge field $L \gauge_{\mathrm{H}} L^{-1}(z - m)$.
    \item The correction minimizes over all $q$-divergence-free fields and, by the first observation, lies in the subset of gauge fields; it is therefore also the minimizer over gauge fields. Since $\postdrift_{\mathrm{ref}} + L \gauge_{\mathrm{H}} L^{-1}(z - m) = \postdrift_0 + L (\gauge_{\mathrm{ref}} + \gauge_{\mathrm{H}}) L^{-1}(z - m)$, the gauge $\gauge_{\mathrm{ref}} + \gauge_{\mathrm{H}}$ is the unique minimizer of $\mathcal{J}$ with $\priordrift$ replaced by $\tilde\priordrift$.
    \item $\nabla_z \tilde\priordrift \equiv \nabla_z \priordrift(m)$, so by \cref{prop:optimalgauge} this minimizer is the solution of \cref{eq:optimalgauge} with $\bar J$ replaced by $\nabla_z \priordrift(m) - \dot L L^{-1}$.
\end{enumerate}
The corrected gauge therefore does not depend on $\gauge_{\mathrm{ref}}$, and it coincides with $\gauge^\star$ whenever $\E_{q(z)}[\nabla_z \priordrift(Z)] = \nabla_z \priordrift(m)$, in particular whenever $\nabla_z \priordrift$ is affine in $z$. The order-$\ell \ge 2$ corrections of \citet{smith2026closing} are marginal-preserving fields of higher polynomial degree, which lie outside the family of \cref{sec:gaugematching}; \cref{sec:statedependentgauge} identifies them as gauge fields with a state-dependent generator.

\subsection{Proof of \texorpdfstring{\cref{cor:conjugate}}{Corollary 4}}\label{sec:proofconjugate}

We show that for a linear prior process and a linear-Gaussian observation model the true posterior process is a Gaussian flow diffusion between observation times, and that Gauge Matching posterior processes approach it in \gls{kl} divergence. Let $Z$ follow the linear \gls{sde} \cref{eq:linearSDE} with $Z_0 \sim \Norm(m_0, S_0 S_0^\top)$, $S_0 \in \gl(d)$, and $\Sigma(t) = \vol(t) \vol(t)^\top$ positive definite, and let $x_{t_i} \sim \Norm(C Z_{t_i}, R)$ with $R$ positive definite. We take the observation times to be interior, $0 < t_1 < \cdots < t_N < T$, and write $t_0 = 0$ and $t_{N+1} = T$; an observation at an endpoint introduces no kink and only simplifies the argument. The Gauge Matching posterior processes over which the infimum is taken are those with $m_\phi$, $L_\phi$ continuously differentiable and $\gauge_\phi$ continuous on $[0, T]$, the regularity that neural networks in $t$ provide.

The proof has three steps. We first describe the true posterior process and show that its covariance is positive definite. We then apply \cref{thm:representation} on each interval between observations, which gives the first claim of the corollary and an explicit formula for the jump of the gauge at an observation time. Finally, we smooth the resulting $m^\dagger$, $L^\dagger$, $\gauge^\dagger$ in small neighborhoods of the observation times and show that the \gls{kl} divergence of the smoothed process to the true posterior process can be made arbitrarily small. Throughout, a dagger marks quantities of the true posterior process, to distinguish them from the optimal gauge $\gauge^\star$ of \cref{prop:optimalgauge}.

\begin{proof}

\emph{Step 1: the true posterior process.} Conditioned on $\obs$, the latent path is a Markov process with initial law $q^\dagger(0, z) \propto p_\theta(0, z) \, p_\theta(x_{t_1}, \ldots, x_{t_N} \mid Z_0 = z)$ and a drift coefficient given by Doob's $h$-transform \citep[Appendix B]{bartosh2025sde}: on the open interval $(t_i, t_{i+1})$,
\begin{align*}
    \postdrift^\dagger(t, z) &= A(t) z + u(t) + \Sigma(t) \nabla_z \log p_\theta(x_{t_{i+1}}, \ldots, x_{t_N} \mid Z_t = z).
\end{align*}
The likelihood of the future observations given the state is an unnormalized Gaussian function of $z$, so the last term is affine in $z$. Its coefficients are continuous functions of the transition matrix and the process noise covariance of \cref{eq:linearSDE} \citep{Särkkä_Solin_2019}, hence continuous in $t$ on $[t_i, t_{i+1}]$. We conclude that $\postdrift^\dagger(t, z) = A^\dagger(t) z + u^\dagger(t)$ with $A^\dagger$, $u^\dagger$ continuous on each closed interval $[t_i, t_{i+1}]$. Across each $t_i$, however, the set of future observations changes, and with it $A^\dagger$ and $u^\dagger$ change discontinuously. The marginals of the true posterior process are the smoothing distributions $q^\dagger(t, z) = \Norm(z \mid m^\dagger(t), P^\dagger(t))$, which are continuous in $t$ on $[0, T]$.

We also need that $P^\dagger(t) \succ 0$ for every $t$. Conditioning on more information can only decrease a Gaussian covariance, so it suffices to exhibit a finer conditioning under which the covariance of $Z_t$ is still positive definite. Condition, in addition to $\obs$, on the latent states $Z_{t_j}$ at all observation times $t_j \neq t$; given these states, the observations $x_{t_j}$ carry no further information about $Z_t$. The covariance of $Z_t$ is then that of a Gaussian bridge of \cref{eq:linearSDE} over an interval of positive length, or of the unconditioned process before $t_1$ or after $t_N$; both are positive definite since $\Sigma \succ 0$ and $S_0 S_0^\top \succ 0$. If $t$ is itself an observation time, conditioning on $x_t$ adds the finite precision $C^\top R^{-1} C$, which keeps the covariance positive definite. Hence $P^\dagger(t) \succ 0$.

\emph{Step 2: representation between observations.} On $[t_i, t_{i+1}]$ the true posterior process is the linear \gls{sde} with coefficients $A^\dagger$, $u^\dagger$, $\vol$ started from $\Norm(m^\dagger(t_i), P^\dagger(t_i))$ with $P^\dagger(t_i) \succ 0$. By \cref{thm:representation} it is a Gaussian flow diffusion on this interval, whose marginal parameters are necessarily $m = m^\dagger$ and $L = L^\dagger$, the Cholesky factor of $P^\dagger$. Both are continuously differentiable on $[t_i, t_{i+1}]$: $m^\dagger$ and $P^\dagger$ solve $\dot m = A^\dagger m + u^\dagger$ and $\dot P = A^\dagger P + P A^{\dagger\top} + \Sigma$ with continuous coefficients, and the Cholesky factorization is a smooth map on positive definite matrices. By the proof in \cref{sec:proofgaussian}, the gauge generator of this representation is
\begin{align*}
    \gauge^\dagger(t) &= L^\dagger(t)^{-1} \left[ A^\dagger(t) L^\dagger(t) + \frac{1}{2} \Sigma(t) L^\dagger(t)^{-\top} - \dot L^\dagger(t) \right],
\end{align*}
which is continuous on $[t_i, t_{i+1}]$. This proves the first claim of \cref{cor:conjugate}.

Across an observation time, $m^\dagger$ and $L^\dagger$ are continuous, but in general their derivatives jump, and so does $\gauge^\dagger$. The jump is explicit. The last term of $\postdrift^\dagger$ is $\Sigma$ times the gradient of the log-likelihood of the future observations, so $A^\dagger = A - \Sigma \Lambda$, where $\Lambda$ is the negative Hessian of this log-likelihood in $z$, the precision of the future observations about $Z_t$. This precision includes $C^\top R^{-1} C$ for $t < t_i$, when $x_{t_i}$ is still in the future, and not for $t > t_i$. Writing $\Delta$ for the jump from $t_i^-$ to $t_i^+$, $\Lambda$ drops by $C^\top R^{-1} C$ and the state matrix therefore jumps by $\Delta A^\dagger = \Sigma C^\top R^{-1} C$, and the derivative $\dot P^\dagger = A^\dagger P^\dagger + P^\dagger A^{\dagger\top} + \Sigma$ jumps by $\Delta A^\dagger P^\dagger + P^\dagger \Delta A^{\dagger\top}$.

\emph{The jump of $\dot L^\dagger$.} Let $\ltri(X)$ denote the strictly lower triangular part of $X$ plus half its diagonal; differentiating $P = L L^\top$ and solving for the lower triangular $\dot L$ gives $\dot L = L \, \ltri(L^{-1} \dot P L^{-\top})$. Since $L^\dagger$ is continuous across $t_i$, only $\dot L^\dagger$ jumps. With $B = L^{\dagger -1} \Delta A^\dagger L^\dagger = L^{\dagger -1} \Sigma C^\top R^{-1} C L^\dagger$ we have $L^{\dagger -1} \Delta \dot P^\dagger L^{\dagger -\top} = B + B^\top$, so the jump of $\dot L^\dagger$ is $L^\dagger \, \ltri(B + B^\top)$. Substituting into the formula for $\gauge^\dagger$,
\begin{align*}
    \Delta \gauge^\dagger &= L^{\dagger -1} \Delta A^\dagger L^\dagger - L^{\dagger -1} \Delta \dot L^\dagger = B - \ltri(B + B^\top).
\end{align*}
This matrix is skew-symmetric, because $\ltri(X) + \ltri(X)^\top = X$ for symmetric $X$. It vanishes if and only if $B = \ltri(B + B^\top)$, that is, if and only if $B$ is lower triangular. For $\Sigma = I$ and an isotropic added precision $C^\top R^{-1} C \propto I$, as under the observation model of \citet{smith2026closing}, $B$ is diagonal and the gauge does not jump (\cref{sec:expdetails}).

\emph{Step 3: the infimum is zero.} Let $Q^\dagger$ be the law of the true posterior process and $Q_\phi$ that of a Gauge Matching posterior process with the same diffusion coefficient. By Girsanov's theorem,
\begin{align*}
    \KL{Q_\phi}{Q^\dagger} &= \KL{q_\phi(0, z \mid \obs)}{q^\dagger(0, z)} \\
    &\quad + \frac{1}{2} \int_0^T \E_{q_\phi(t, z \mid \obs)} \left[ \left\| \postdrift_\phi(t, Z_t, \obs) - \postdrift^\dagger(t, Z_t) \right\|_{\Sigma(t)^{-1}}^2 \right] \diff t,
\end{align*}
and this divergence equals the slack $\loss + \log p_\theta(\obs)$ of the bound \cref{eq:elbo}, so its infimum is zero if and only if the bound is tight. The integral on the second line has the form of the path-space \gls{kl} term of \cref{eq:elbo} with the prior drift coefficient replaced by $\postdrift^\dagger$; we refer to it as the path-space term of this divergence. We construct a family of Gauge Matching posterior processes $Q_\delta$, indexed by $\delta > 0$, with $\KL{Q_\delta}{Q^\dagger} \to 0$.

\emph{Smoothing.} Fix $\delta > 0$ smaller than half the smallest spacing between consecutive elements of $\{t_0, \ldots, t_{N+1}\}$, and let $U_\delta = \bigcup_{i=1}^N (t_i - \delta, t_i + \delta)$ be the union of $\delta$-neighborhoods of the observation times. Define $m_\delta$ and $L_\delta$ to agree with $m^\dagger$ and $L^\dagger$ outside $U_\delta$ and, on each component of $U_\delta$, to be the cubic Hermite interpolants that match the values and one-sided derivatives of $m^\dagger$, $L^\dagger$ at the endpoints of the component. Define $\gauge_\delta$ analogously by linear interpolation of $\gauge^\dagger$. Then $m_\delta$, $L_\delta$ are continuously differentiable and $\gauge_\delta$ is continuous on $[0, T]$, so $Q_\delta$, the Gauge Matching posterior process with these functions, is admissible.

\emph{Uniform bounds.} Since $m^\dagger$, $L^\dagger$ are Lipschitz and $\dot m^\dagger$, $\dot L^\dagger$, $\gauge^\dagger$ are bounded on $[0, T]$, the interpolants have derivatives $\dot m_\delta$, $\dot L_\delta$ and values $\gauge_\delta$ bounded by a constant independent of $\delta$. Moreover, $\sup_t \| L_\delta(t) - L^\dagger(t) \| = \mathcal{O}(\delta)$, and $\lambda_{\min}(P^\dagger(t))$ is bounded away from zero on $[0, T]$, since $P^\dagger$ is positive definite by Step 1 and continuous on the compact interval $[0, T]$. For small $\delta$, therefore, $L_\delta(t)$ is lower triangular with positive diagonal and $\| L_\delta(t)^{-1} \|$ is bounded uniformly in $t$ and $\delta$.

\emph{The divergence.} The drift coefficient of $Q_\delta$, which is \cref{eq:driftwhitened} evaluated at $(m_\delta, L_\delta, \gauge_\delta)$, agrees with $\postdrift^\dagger$ outside $U_\delta$, so the integrand of the path-space term vanishes there. On $U_\delta$, both drift coefficients are affine in $z$ with coefficients bounded uniformly in $\delta$, and the second moment $\E_{q_\delta(t, z \mid \obs)}[\| Z_t \|^2] = \| m_\delta(t) \|^2 + \tr(L_\delta(t) L_\delta(t)^\top)$ is bounded uniformly as well. The integrand is therefore bounded on $U_\delta$ by a constant $c$ independent of $\delta$, and since $U_\delta$ has length at most $2 N \delta$, the path-space term, with its factor $\frac{1}{2}$, is at most $c N \delta$. The initial term vanishes since $0 \notin U_\delta$. Hence $\KL{Q_\delta}{Q^\dagger} \le c N \delta \to 0$ as $\delta \to 0$, which proves the second claim of \cref{cor:conjugate}. The argument is that of \citet[Theorem 1]{smith2026closing}, with the gauge smoothed alongside the marginals.
\end{proof}

\section{Experimental Details}\label{sec:expdetails}

\Cref{alg:gaugematching} summarizes one training iteration of Gauge Matching, the simulation-free estimator of \cref{sec:simfree} applied to the posterior process of \cref{sec:gaugematching}.

\begin{algorithm}[H]
\caption{Gauge Matching (one training iteration)}
\label{alg:gaugematching}
\begin{algorithmic}[1]
\Require observations $\obs$; generative model $p_\theta(0, z)$, $\priordrift_\theta$, $\vol_\theta$, $p_\theta(x \mid z)$; posterior process $m_\phi$, $L_\phi$, $\gauge_\phi$
\State $\hat\loss \gets \KL{q_\phi(0, z \mid \obs)}{p_\theta(0, z)}$ \Comment{initial \gls{kl} term}
\State $t \sim \mathcal{U}[0, T]$, \; $\varepsilon \sim \Norm(0, I)$, \; $Z_t \gets m_\phi(t, \obs) + L_\phi(t, \obs) \varepsilon$
\State $\postdrift \gets \dot m_\phi + \dot L_\phi \varepsilon + L_\phi \gauge_\phi \varepsilon - \frac{1}{2} \Sigma_\theta(t, Z_t) L_\phi^{-\top} \varepsilon + \frac{1}{2} \nabla \cdot \Sigma_\theta(t, Z_t)$ \Comment{\cref{eq:driftwhitened} with $\eta = \varepsilon$}
\State solve $\vol_\theta(t, Z_t) \, r = \priordrift_\theta(t, Z_t) - \postdrift$, \; $\hat\loss \gets \hat\loss + \frac{T}{2} \| r \|_2^2$ \Comment{path-space \gls{kl} term}
\State $i \sim \mathcal{U}\{1, \ldots, N\}$, \; $\varepsilon' \sim \Norm(0, I)$, \; $Z_{t_i} \gets m_\phi(t_i, \obs) + L_\phi(t_i, \obs) \varepsilon'$
\State $\hat\loss \gets \hat\loss - N \log p_\theta(x_{t_i} \mid Z_{t_i})$ \Comment{reconstruction}
\State gradient step on $(\theta, \phi)$ with respect to $\hat\loss$
\end{algorithmic}
\end{algorithm}

All methods are implemented in a common code base (see the reproducibility statement). This appendix lists the choices shared by all experiments and then, for each experiment, the data, the model, deviations of the posterior process from the shared architecture, the training schedule, and the definition of every reported metric.

\textbf{Posterior architecture.} All posterior processes share the amortized architecture of \citet{bartosh2025sde}: a GRU processes the observations of a trial into per-observation embeddings (when the observation times are irregular, it also receives the times and their spacings); the embedding at a query time $t$ is a softmax-weighted average of these, with weights centered at $t$ over the observation times; and an MLP with two hidden layers of width 100 and SiLU activations maps the embedding and $t$ to the posterior parameters. For Gauge Matching these are the $d$ entries of $m_\phi$, the $d(d+1)/2$ entries of $L_\phi$, with a softplus transform on the diagonal, and the $d(d-1)/2$ entries of $\gauge_\phi$ above the diagonal, the last from a separate head of the same MLP.

\textbf{Baselines.} The three fixed gauges use the same network without the gauge head: \gls{sde} Matching, that is, the square root gauge, with a diagonal $L_\phi$, and the Cholesky and symmetric gauges with a full $L_\phi$. Helmholtz-\gls{sde} uses the network of the Cholesky gauge and, at every evaluation, sets $\gauge$ to the solution of \cref{eq:optimalgauge} with $\nabla_z \priordrift_\theta(t, m_\phi)$ in place of the expected Jacobian. By \cref{sec:optimalgaugeconsequences}, the resulting posterior process is the one that the order-one correction produces from any fixed gauge with these marginals, so no separate reference gauge is needed. The Jacobian is computed by $d$ forward-mode passes through the prior drift network, the Sylvester equation \cref{eq:optimalgauge} is solved by the Kronecker-sum formula for $d \le 4$ and by the eigendecomposition described in \cref{sec:expdetails:linear} for larger $d$, and gradients flow through the solve.

\textbf{Prior process.} The prior drift coefficient is an MLP with one hidden layer of width 100 and softplus activations, as in \citet{li2020scalable} and \citet{smith2026closing}. The diffusion coefficient is state-independent and diagonal unless stated otherwise.

\textbf{Optimization.} All models are trained with Adam \citep{kingma2015adam} and gradient clipping at norm 5, using one sample each of $t$, $\varepsilon$ and $i$ per trial and iteration as in \cref{alg:gaugematching}; in the gauge-learning, conjugate and Protocol 1 experiments the batch consists of samples rather than trials. Learning rates, batch sizes and iteration counts are listed per experiment. Unless stated otherwise, every experiment is repeated over 5 seeds, and we report the mean and twice the standard error over seeds.

\paragraph{Gap and Gauge Excess}
Two quantities recur in the experiments. The \emph{gap} of a fitted posterior process is its \gls{kl} divergence to the true posterior process (\cref{sec:exp:gauge}). For two processes with a common diffusion coefficient this divergence is, by Girsanov's theorem, an initial \gls{kl} term plus a path-space term (the display in Step 3 of \cref{sec:proofconjugate}). When both drift coefficients are affine, as for a linear prior process and every posterior process considered here, the integrand of the path-space term is a Gaussian expectation of a quadratic form, and we evaluate the gap in closed form on a time grid of step $5 \cdot 10^{-4}$.

The \emph{gauge excess} of \cref{tab:lorenz} is $\mathcal{J}(\gauge) - \mathcal{J}(\gauge^\star)$ in nats per unit time, where $\mathcal{J}$ is the integrand of \cref{prop:optimalgauge} at the marginals of the trained model, $\gauge$ is the gauge the method uses at each time $t$, and $\gauge^\star$ is the solution of \cref{eq:optimalgauge} with the expected Jacobian estimated from $10^3$ Monte Carlo samples; $\mathcal{J}$ itself is estimated from the same samples. The result is averaged over $t$ on a grid of $100$ points and over $64$ held-out trials; every gauge excess reported below uses this estimator. Since $\mathcal{J}$ is a strictly convex quadratic function of $\gauge$, the excess is nonnegative, and it is zero exactly when the method's gauge is the optimal one for its own marginals.

\subsection{Linear Dynamics}
\label{sec:expdetails:linear}

\paragraph{Learning the Gauge}
\textbf{Data.} The prior process is \cref{eq:linearSDE} with $A = -\alpha I + \omega J$, $u = 0$, $\vol = I$, $\alpha = 0.2$, $\omega = 2 \pi$ and the stationary initial law $\Norm(0, I / (2 \alpha))$ on $[0, T]$ with $T = 3$. There are no observations and hence no encoder.

\textbf{Posterior.} $m_\phi$, $L_\phi$ and the Householder vectors $v_i(t)$ of $O(t)$ are MLPs of $t$ with two hidden layers of width 64, and the gauge $\gauge = \dot O O^\top$ is obtained by forward-mode differentiation.

\textbf{Training.} We minimize the initial and path-space \gls{kl} terms of \cref{eq:elbo} with the estimator of \cref{sec:simfree} for $10^4$ iterations, with $2^{10}$ samples of $(t, \varepsilon)$ per iteration and learning rate $10^{-3}$. The initial term is needed to identify the initial law: without observations, any initial law is compatible with a zero path-space \gls{kl} term once the process has relaxed to stationarity.

\textbf{Metrics.} The gauge angle in \cref{fig:gauge} is $\theta(t) = \operatorname{atan2}(O_{21}(t), O_{11}(t))$, unwrapped, with $\theta(0)$ subtracted since the initial rotation is not identifiable. We call the path-space \gls{kl} term per unit time the \emph{rate}. The variance $0.080$ quoted in \cref{sec:exp:gauge} is that of the trained fixed-gauge process averaged over $t \in [T/6, 5T/6]$; we exclude the ends because the process has boundary layers there, where the initial \gls{kl} term at $t = 0$ and the absence of a future cost at $t = T$ let the variance rise to about $0.2$. Its rate averaged over $[0, T]$ is $6.06$ nats per unit time, which gives the $18$ nats over $[0, 3]$ quoted there. Both values agree with the closed-form optimum derived next ($0.080$ and $6.09$).

\textbf{Marginal distortion of the fixed gauge.} For stationary isotropic marginals $\Norm(0, v I)$ and $\gauge = 0$, \cref{eq:driftwhitened} gives the posterior drift coefficient $\postdrift_0(z) = -z / (2v)$, and with $Z = \sqrt{v} \, \varepsilon$ the rate is
\begin{align*}
    \frac{1}{2} \E \left[ \| A Z - \postdrift_0(Z) \|^2 \right] &= \frac{v}{2} \E \left[ \left\| \left( A + \tfrac{1}{2v} I \right) \varepsilon \right\|^2 \right] = \frac{v}{2} \tr \left( A^\top A + \tfrac{1}{v} A + \tfrac{1}{4v^2} I \right) \\
    &= (\alpha^2 + \omega^2) v - \alpha + \frac{1}{4v},
\end{align*}
using $A^\top A = (\alpha^2 + \omega^2) I$ and $\tr A = -2\alpha$. The minimizer is $v = 1 / (2 \sqrt{\alpha^2 + \omega^2})$ with minimal rate $\sqrt{\alpha^2 + \omega^2} - \alpha$, whereas at the stationary variance $v = 1 / (2\alpha)$ the rate is $\omega^2 / (2\alpha)$. For $\alpha = 0.2$ and $\omega = 2\pi$ the minimizing variance is $v = 0.080$ against the stationary $2.5$, and the corresponding rates are $6.09$ against $98.7$ nats per unit time; the learned gauge attains rate zero at the stationary variance.

The restriction to isotropic marginals is for the closed form only: the trained model is free to learn any $m_\phi$, $L_\phi$, and a numerical minimization over lower-triangular $L(t)$ confirms that the isotropic optimum is the global one.

\paragraph{Conjugate Setting}
\textbf{Data.} We follow \citet[Appendix C.2]{smith2026closing}: the prior process of \cref{sec:exp:gauge} with $T = 5$, $\alpha = 0.2$ and $\omega \in \{0, \pi/4, \pi/2, \pi, 2\pi\}$; observations $x_{t_i} \sim \Norm(C z_{t_i} + c, \sigma^2 I)$ at $N \in \{5, 10, 20, 50\}$ uniformly random times with $\sigma \in \{0.05, 0.3, 1.0\}$, where $C$ has standard normal entries normalized to unit determinant and $c$ is standard normal; 16 independent trials per configuration. The generative model is fixed at its true value, so only the posterior process is trained.

\textbf{Deviation from \citet{smith2026closing}.} Their observation noise is $\sigma^2 C C^\top$ rather than $\sigma^2 I$. Under their choice the precision $C^\top R^{-1} C = I / \sigma^2$ that an observation adds is isotropic, so the covariance of the true posterior process stays isotropic, its gauge is the constant $\omega J$, and the square root gauge fails for the reason of \cref{sec:exp:gauge} alone. Under isotropic noise the added precision $C^\top C / \sigma^2$ is anisotropic and the gauge of the true posterior process is time-dependent: by \cref{sec:proofconjugate} it jumps by $B - \ltri(B + B^\top)$ with $B = L^{\dagger -1} C^\top C L^\dagger / \sigma^2$ at every observation time, as \cref{fig:conjugate}B illustrates.

\textbf{True posterior process.} It is the Kalman--Bucy smoother \citep{Särkkä_Solin_2019}. We obtain its drift coefficient by Doob's $h$-transform as in \cref{sec:proofconjugate} on a grid of step $5 \cdot 10^{-3}$, and its gauge, shown in \cref{fig:conjugate}B, by \cref{prop:gauge} from its state matrix and the Cholesky factor of its covariance.

\textbf{Posterior.} Since the posterior process is fitted per trial, the encoder of the shared architecture is replaced by a learned per-trial embedding of dimension 100. The smoother has kinks at every observation time, which an MLP of $t$ alone resolves poorly (every posterior process stalled at a gap of about $100$ nats), so we add Fourier features of time, $(\sin(2 \pi k t / T), \cos(2 \pi k t / T))_{k \le 160}$, to the input $t$; with $k \le 8$ instead, the gap of Gauge Matching at $N = 10$, $\sigma = 0.3$, $\omega = 2\pi$ is $9.2$ nats rather than $1.5$ at the shorter budget below. The time derivatives $\dot m_\phi$, $\dot L_\phi$ pass through these features by forward-mode differentiation.

\textbf{Training.} $2^8$ samples of $t$ and of $i$ per trial and iteration; Adam with learning rate $10^{-3}$ decayed exponentially to $10^{-5}$ over $5 \cdot 10^4$ iterations for Helmholtz-\gls{sde} and Gauge Matching, and to $10^{-4}$ over $2 \cdot 10^4$ iterations for the three fixed gauges. The longer budget lowers the gap of Gauge Matching from $1.5$--$5.2$ to $0.8$--$3.9$ nats (\cref{fig:conjugate}A) and that of Helmholtz-\gls{sde} from $1.0$--$4.4$ to $0.5$--$2.8$ nats, so the two are compared at equal budget. We did not extend the fixed gauges: at $\sigma \ge 0.3$ they already lie within $0.04$--$2.2$ nats of the optima of their variational families (below), so their $14$--$26$ nats are a property of the family, not of the optimizer. Since the prior Jacobian is the constant $A$, the order-one correction of Helmholtz-\gls{sde} is exact and costs nothing beyond the Sylvester solve; the per-iteration cost comparison of \cref{sec:experiments} does not apply here.

\textbf{Metrics.} The gap is computed in closed form as above. As a reference for what training leaves on the table, we also minimize the closed-form gap directly over each variational family by L-BFGS on the grid, with a Gaussian kernel of width $0.02$ applied to the optimized functions standing in for the smoothness of a network; \cref{fig:conjugategridoptimum} shows these optima next to the trained models of \cref{fig:conjugategrid}. At $\sigma = 0.3$ and $\omega = 2\pi$ the optima are $0.3$--$2.2$ nats for the two gauge-corrected families and $12$--$26$ nats for the fixed gauges, which places Helmholtz-\gls{sde} $0.2$--$0.6$ nats, Gauge Matching $0.3$--$1.7$ nats and the fixed gauges $0.6$--$2.1$ nats above the optimum of their family.

\textbf{Remarks.} At $\sigma = 0.05$ the trained models lie $3$--$41$ nats (Helmholtz-\gls{sde} and Gauge Matching alike) and $4$--$54$ nats (fixed gauges) above their optima, so that row of \cref{fig:conjugategrid} measures the difficulty of fitting a sharply kinked smoother with a smooth network, not the effect of the gauge; the optima themselves (\cref{fig:conjugategridoptimum}) put the gauge-corrected families $2$--$4$ times below the fixed ones there as well. At $\omega = 0$ the fixed gauges retain a small gap in \cref{fig:conjugate}C, which comes from the covariance structure that the observations induce: the square root gauge, with its diagonal $L_\phi$, can represent the true posterior process only if its covariance is isotropic, which it is not for a generic $C$, while the Cholesky and symmetric gauges are close to the gauge-corrected models there (\cref{fig:conjugategrid}).

\begin{figure}[t]
    \centering
    \includegraphics[width=\linewidth]{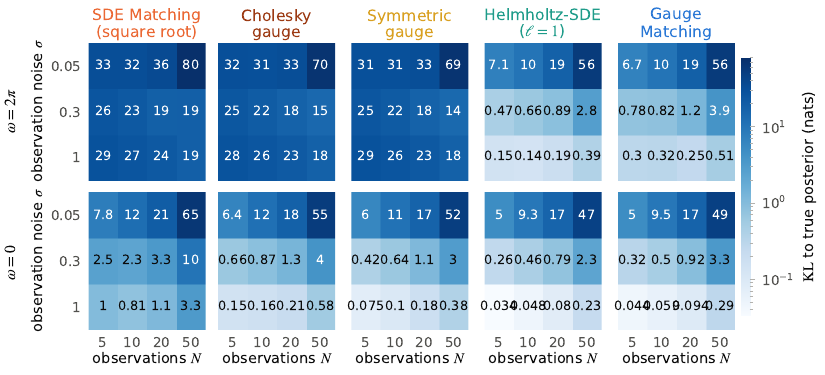}
    \caption{\gls{kl} divergence to the true posterior process in the conjugate setting for every combination of observation noise $\sigma$ and number of observations $N$, for the five posterior processes of \cref{sec:experiments}, at $\omega = 2 \pi$ (top) and $\omega = 0$ (bottom). Median over 16 trials. At $\omega = 0$ the prior drift is a gradient field, so the symmetric gauge is optimal and Helmholtz-\gls{sde} coincides with it; the learned gauge lies within a factor of $1.5$ of both, and the square root gauge retains a gap because the posterior covariance is not isotropic under a generic $C$. The residual error grows with $N$ and shrinks with $\sigma$: it is the error of a continuously differentiable posterior at the observation times. At $\sigma = 0.05$ every method is instead limited by optimization (\cref{fig:conjugategridoptimum} and the text).}
    \label{fig:conjugategrid}
\end{figure}

\begin{figure}[t]
    \centering
    \includegraphics[width=\linewidth]{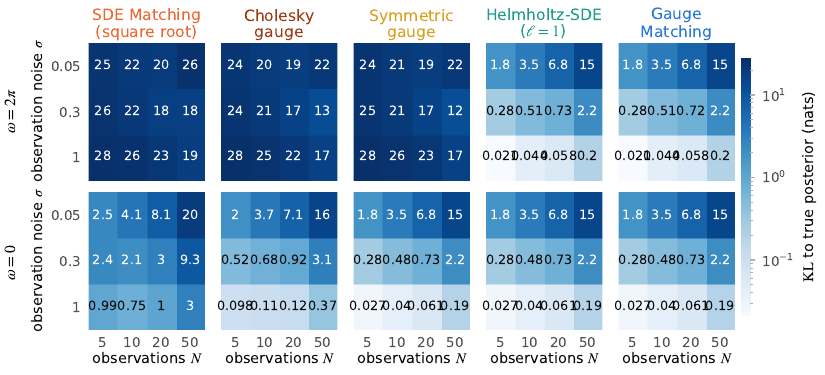}
    \caption{What the variational family admits versus what training reaches. Same grid as \cref{fig:conjugategrid}, but each entry is the minimum of the closed-form \gls{kl} divergence over the variational family of that gauge, found by L-BFGS directly on the time grid (functions smoothed by a Gaussian kernel of width $0.02$), rather than the divergence of a trained network. Median over 16 trials. The gap of the fixed gauges is a property of the family: their optima are $12$--$28$ nats at $\omega = 2\pi$, and the trained models of \cref{fig:conjugategrid} lie within $0.04$--$2.2$ nats of them at $\sigma \ge 0.3$. The gauge-corrected families admit $0.02$--$2.2$ nats there, and at $\sigma = 0.05$ they admit $1.8$--$15$ nats against the $5$--$56$ nats that training reaches. Helmholtz-\gls{sde} at order one and the learned gauge have the same optimum, since the order-one correction is exact for a linear prior drift; at $\omega = 0$ the symmetric gauge coincides with both.}
    \label{fig:conjugategridoptimum}
\end{figure}

\paragraph{Scaling in the Latent Dimension}
\textbf{Data.} The prior process is the $d$-dimensional rotating Ornstein--Uhlenbeck process $\diff Z = (-\alpha I + \omega K) Z \diff t + \diff W$ with $\alpha = 0.2$, $\omega = 2\pi$ and $T = 5$ as in the conjugate setting, where $K = (G - G^\top) / \|G - G^\top\|_2$ for a standard normal $d \times d$ matrix $G$: a random skew-symmetric matrix of unit spectral norm, which rotates $\lfloor d/2 \rfloor$ random planes at angular velocities of at most $\omega$. Since $A + A^\top = -2\alpha I$, the stationary law is $\Norm(0, I / (2\alpha))$ for every $K$, and the transition $e^{A s} = e^{-\alpha s} e^{\omega K s}$ with covariance $(1 - e^{-2\alpha s}) / (2\alpha) \, I$ lets us sample the latent state exactly at the observation times.

Observations are $x = C z + c + \Norm(0, \sigma^2 I)$ with $\sigma = 0.3$, $C$ a $\lceil d/2 \rceil \times d$ matrix with orthonormal rows (the transpose of the $Q$ factor of a standard normal $d \times \lceil d/2 \rceil$ matrix: a random half of the state is observed, the other half is inferred through the dynamics) and $c$ standard normal, at $N = 10$ distinct uniformly random times per trial on a grid of step $5 \cdot 10^{-3}$. One system $(K, C, c)$ is drawn per seed and shared by all trials and all five posterior processes; $d \in \{2, 4, 8, 16, 32, 64\}$, 3 seeds, the seed of the system being also that of the model.

\textbf{Protocol 1 (known generative model).} As in the conjugate setting, the prior process, the initial law and the observation model are fixed at their true values, and the posterior process is fitted to 16 trials at once with the architecture of the conjugate setting (per-trial embedding, Fourier time features) and heads for $m_\phi$, $L_\phi$ and, for Gauge Matching, the $d(d-1)/2$ entries of $\gauge_\phi$. Training: $2^8$ samples of $t$ and of $i$ per trial and iteration, Adam with learning rate $10^{-3}$ decayed exponentially to $10^{-5}$ over $3 \cdot 10^4$ iterations, the same budget for all five gauges. Metric: the closed-form gap to the Kalman--Bucy smoother in $d$ dimensions, as median and interquartile range over the $3 \times 16$ trials.

\textbf{Protocol 2 (learned generative model).} The protocol of the Lorenz system with a $d$-dimensional latent state: a Gaussian initial law with learned mean and diagonal covariance, the shared prior drift network, the fixed diffusion coefficient $\vol = I$, a linear-Gaussian observation model with learned $C$ and $c$ and the fixed noise $\sigma$, and the amortized GRU posterior of width 100; 1024 training and 2048 test trials; $3 \cdot 10^4$ iterations, batches of 256 trials, learning rate $10^{-3}$. Metrics on the test trials as in \cref{tab:lorenz}: the negative \gls{elbo} on a grid of 500 times with 4 noise samples, the latent log-likelihood of the noise-free $C z + c$ under the posterior predictive at the observation times, and the gauge excess.

\textbf{Sylvester solve.} The symmetric gauge, Helmholtz-\gls{sde} and the gauge excess solve \cref{eq:optimalgauge}. For $d \le 4$ we solve the $d^2 \times d^2$ Kronecker-sum system directly; for $d \ge 8$ we use the eigendecomposition $G = V \Lambda V^\top$ of the symmetric positive definite $G$, $\gauge = V \bigl[ (V^\top R V)_{ij} / (\lambda_i + \lambda_j) \bigr] V^\top$ with $R$ the right-hand side of \cref{eq:optimalgauge}, at cost $\mathcal{O}(d^3)$ per sample, and obtain its gradient from the adjoint Sylvester equation with the same eigendecomposition rather than by differentiating through the eigendecomposition, which is singular for repeated eigenvalues. The two solvers agree to machine precision.

\textbf{Timing.} \Cref{fig:dimsweep}C times one training iteration of the models of Protocol 2 (forward and backward pass, gradient clipping and Adam step, batch of 256 trials) for $d \in \{2, 4, \ldots, 128\}$ on one otherwise idle GPU, as the median and interquartile range over 50 iterations after 5 warm-up iterations with device synchronization, and on a CPU for $d \le 64$ (10 iterations).

\begin{figure}[t]
    \centering
    \includegraphics[width=\linewidth]{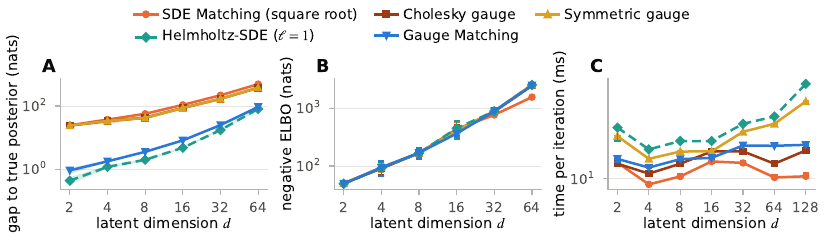}
    \caption{Scaling in the latent dimension on the rotating Ornstein--Uhlenbeck system. \emph{A:} gap to the true posterior process, generative model known (median and interquartile range over 16 trials and 3 seeds). \emph{B:} negative \gls{elbo} per trial, generative model learned (mean $\pm 2$ standard errors over 3 seeds). \emph{C:} GPU time per training iteration, models as in B up to $d = 128$ (batch of 256 trials; median and interquartile range over 50 iterations).}
    \label{fig:dimsweep}
\end{figure}

\textbf{Results.} With the generative model known (\cref{fig:dimsweep}A), the gap of Helmholtz-\gls{sde}, whose gauge is exact here, grows from $0.4$ nats at $d = 2$ to $81$ at $d = 64$ ($0.4$/$1.2$/$2.0$/$4.8$/$17$/$81$ at $d = 2$/$4$/$8$/$16$/$32$/$64$) and that of Gauge Matching from $0.9$ to $94$ ($0.9$/$1.8$/$3.6$/$8.2$/$25$/$94$), while the fixed gauges lie at $24$--$25$ and $381$--$496$ (the Cholesky and symmetric gauges agree at every $d$, since the symmetric gauge is nearly zero for a drift that is an isotropic contraction plus a rotation). The part of the gap due to the mean is the same for the two gauge-corrected processes at every $d$ ($0.06$ at $d = 2$, $63$ at $d = 64$; it is the training error of the fit to 16 trials). The part due to the covariance, which the gauge controls, is $0.4$--$17$ for Helmholtz-\gls{sde} and $0.9$--$32$ for Gauge Matching against $24$--$393$ for the fixed gauges: the learned gauge removes $89$--$97\%$ of the fixed gauges' covariance gap at every $d$, and its norm is within $2\%$ of that of the exact gauge.

With a learned generative model (\cref{fig:dimsweep}B) the five posterior processes reach the same negative \gls{elbo} within seed noise for $d \le 16$. The fitted prior drift recovers little of the rotation from ten partial observations per trial (the imaginary parts of the eigenvalues of its mean Jacobian are $0.02$--$1.1$ against the true $2\pi$), so it is close to a gradient field, for which every gauge is near-optimal, and the gauge excess of the fixed gauges is only $0.02$--$0.4$ nats per unit time. Pilot runs with $N = 20$ or $40$ observations, full observation of the state, or slower rotation ($\omega = \pi/2$, $\pi$) at $d = 8$ did not change this.

For $d \ge 32$ the diagonal posterior of \gls{sde} Matching reaches a lower negative \gls{elbo} than the four full-covariance posteriors, which agree with each other ($1554 \pm 78$ against $2444$--$2538$ at $d = 64$). The difference lies entirely in the reconstruction term: the learned observation matrix of every full-covariance posterior is rank deficient ($5$--$8$ of its $32$ singular values below $0.05$ at $d = 64$, $1$--$3$ of $16$ at $d = 32$, none for the diagonal posterior), so these models settle in a local optimum in which a subset of latent directions is not observed and reverts to the prior, which lowers the path-space \gls{kl} term and raises the reconstruction term. This is common to all four full-covariance gauges and not an effect of the gauge; training three times longer at $d = 64$ (one seed) halves the difference in training loss, and the same collapse appears in the identifiable setting below when the observation model is learned.

Per iteration (\cref{fig:dimsweep}C), the GPU step is dominated by kernel launches up to $d \approx 32$, so the Sylvester solve costs Helmholtz-\gls{sde} and the symmetric gauge a factor of $1.3$--$1.6$ over \gls{sde} Matching there and $2.2$ ($3.2$) at $d = 64$ ($128$), while Gauge Matching costs $1.05$--$1.5$; on the CPU the $\mathcal{O}(d^3)$ solve dominates from $d = 32$ on, with Helmholtz-\gls{sde} at $6.4$--$7.3$ times \gls{sde} Matching against $1.4$--$1.7$ for Gauge Matching.

\paragraph{Scaling in the Latent Dimension: Identifiable Setting}
Protocol 2 above measures the five posterior processes on a prior that the data barely identify. To measure them when the rotation is identified, we repeat it at the setting of \cref{tab:rousd}, with additive noise so that Helmholtz-\gls{sde} applies.

\textbf{Data.} The rotating Ornstein--Uhlenbeck process above with $\alpha = 0.5$, $\omega = \pi$ and $\vol = I$ (angular velocities of at most $\pi$, eight observations per period of the fastest plane), $d \in \{8, 16, 32\}$, observed in full: $C$ is a random $d \times d$ orthogonal matrix, $c$ standard normal, $N = 21$ distinct uniformly random observation times per trial on $[0, 5]$, noise $\sigma = 0.3$; 1024 training and 2048 test trials, one system per seed, 3 seeds.

\textbf{Model.} As in Protocol 2, except that the observation model $(C, c, \sigma)$ is fixed at its true value, as for the Lotka--Volterra system and \cref{tab:rousd}: the latent basis is then pinned by the data, and the learned prior drift, initial law and posterior process must explain every latent direction. Five posterior processes: \gls{sde} Matching, the Cholesky gauge, Helmholtz-\gls{sde}, Gauge Matching, and Gauge Matching with a diagonal $L_\phi$ (\cref{sec:ablationgauge}); the symmetric gauge is omitted, since it coincides with the Cholesky gauge on this prior (Protocol 1).

\textbf{Training.} $3 \cdot 10^4$ iterations, batches of 256 trials, learning rate $10^{-3}$, as in Protocol 2; at $d = 16$ and $32$ all five posterior processes were also trained for $6 \cdot 10^4$ iterations.

\textbf{Metrics.} As in Protocol 2. Since the seed-to-seed spread of the negative \gls{elbo} is that of the random system ($457$--$489$ nats at $d = 16$ for one gauge), we report paired differences to Helmholtz-\gls{sde} per seed as the median over the 2048 held-out trials of the per-trial difference (mean and $2$ standard errors over seeds). The median is used because a few held-out trials, for which the amortized posterior is far off, have a negative \gls{elbo} above $10^4$ nats (at most $15$ of $2048$ per seed, plus one non-finite trial for Gauge Matching): at $3 \cdot 10^4$ iterations this occurs only for the gauge-corrected models at $d = 32$, at $6 \cdot 10^4$ iterations for every full-covariance posterior and on single trials for the two diagonal ones. We also report the rotation rates of the learned drift, the imaginary parts of the eigenvalues of the skew part of its mean Jacobian over posterior samples, against the true $\omega |\lambda_j(K)|$.

\textbf{Results.} \Cref{fig:dimsweepident}. At $d = 8$ the negative \gls{elbo} is $218$--$219$ nats for the two fixed gauges against $212$--$214$ for the three gauge-corrected posterior processes; paired per trial and per seed, \gls{sde} Matching is $3$--$9$ nats and the Cholesky gauge $4$--$7$ nats above Helmholtz-\gls{sde} on every seed, Gauge Matching is $0.5$--$3$ above it and Gauge Matching with a diagonal $L_\phi$ within $1.5$ of it. The difference is the path-space \gls{kl} term ($112$--$113$ for the fixed gauges against $104$--$107$), of which Gauge Matching gives $3$--$5$ nats back in the reconstruction term. The gauge excess of the fixed gauges is $0.54$--$0.80$ nats per unit time against $0.01$--$0.02$ for the two learned gauges, whose $\|\gauge_\phi\|_F$ matches $\|\gauge^\star\|_F$ within $2\%$. Unlike in Protocol 2, the fitted prior rotates, at $83$--$86\%$ of the true rates for the gauge-corrected posteriors and $69$--$75\%$ for the fixed gauges, which as in \cref{tab:rousd} absorb part of the rotation into the drift they are trained with.

At $d = 16$ and $32$ the comparison at equal $L_\phi$ still favors the gauge, and more so at the longer budget: the Cholesky gauge is $1.4$--$11.6$ nats above Helmholtz-\gls{sde} at $3 \cdot 10^4$ iterations, and $5$--$21$ above Helmholtz-\gls{sde} and $5$--$30$ above Gauge Matching at $6 \cdot 10^4$, on every seed at both $d$. Gauge Matching with a diagonal $L_\phi$ is $3$--$14$ nats below \gls{sde} Matching at $d = 16$ (on two of three seeds at $3 \cdot 10^4$ iterations, on every seed at $6 \cdot 10^4$; gauge excess $0.04$--$0.05$ against $0.9$--$1.1$), but not consistently at $d = 32$: at $6 \cdot 10^4$ iterations it is $26$ nats above on one seed and $4$--$13$ below on the other two, its $\|\gauge_\phi\|_F$ exceeds $\|\gauge^\star\|_F$ by $7$--$9\%$, and its gauge head diverges near the end of the interval on one seed.

Across parameterizations, however, the diagonal $L_\phi$ is fitted faster than a full one. At $3 \cdot 10^4$ iterations \gls{sde} Matching is within $8$ nats of Helmholtz-\gls{sde} at $d = 16$, the sign depending on the seed, Gauge Matching with a diagonal $L_\phi$ has the lowest seed mean there, and the five models agree within $12$ nats at $d = 32$. At $6 \cdot 10^4$ iterations Gauge Matching with a diagonal $L_\phi$ is the best model at $d = 16$ on every seed, $8$--$9$ nats below Helmholtz-\gls{sde}, which \gls{sde} Matching and Gauge Matching tie within $5$ and $2$ nats; at $d = 32$ \gls{sde} Matching is $24$--$27$ nats below Helmholtz-\gls{sde} and $9$--$26$ below Gauge Matching on every seed, in test and training loss alike, although its gauge excess is $1.7$ nats per unit time, $8$ nats over the trajectory, against $0.15$ for the learned gauge.

\textbf{Remarks.} With the observation model learned instead of fixed, the same sweep gives a different picture (\cref{fig:dimsweepident}C): all four full-covariance posteriors converge to a rank-deficient observation matrix, with $3$--$5$ of $16$ ($d = 16$) and $13$--$15$ of $32$ ($d = 32$) singular values below $0.05$ against $0$--$2$ for the diagonal one, and their reconstruction term is $650$ and $1950$ nats against $218$ and $660$; the ordering of the gauges within the full-covariance family is unchanged (Cholesky $7.1$ and $1.1$ nats above Helmholtz-\gls{sde}, paired medians). This is the mechanism behind the lag of the full-covariance posteriors at $d \ge 32$ in Protocol 2, and the reason why the identifiable setting fixes $C$.

\begin{figure}[t]
    \centering
    \includegraphics[width=\linewidth]{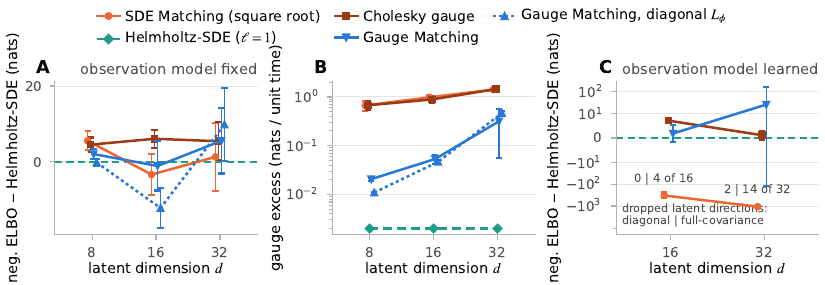}
    \caption{Dimension sweep at an identifiable setting (learned prior drift, full observation, $N = 21$, $\omega = \pi$; see the text). A: negative \gls{elbo} minus that of Helmholtz-\gls{sde}, paired per seed as the median over held-out trials of the per-trial difference (mean and $2$ standard errors over 3 seeds), with the observation model fixed. B: gauge excess (Helmholtz-\gls{sde} is zero by construction and drawn at the floor of the axis). C: as A, with the observation model learned: the full-covariance posteriors drop latent directions (numbers: mean over seeds for the diagonal $\mid$ full-covariance posteriors) and lose by hundreds of nats to the diagonal one.}
    \label{fig:dimsweepident}
\end{figure}

\subsection{Nonlinear Dynamics}

\paragraph{Noisy Lorenz Attractor}
\textbf{Data.} As in \citet[Appendix D.1]{smith2026closing}: the stochastic Lorenz system with parameters $(10, 28, 8/3)$, additive noise of scale $5$ and initial state $\Norm(0, I)$ is simulated on $[0, 5]$ by Euler--Maruyama with step $10^{-3}$; the standardized state is observed every $\Delta t = 0.25$ with Gaussian noise of standard deviation $0.3$; 1024 training and 2048 test trials.

\textbf{Model.} The latent \gls{sde} has $d = 4$; a Gaussian initial distribution with learned mean and diagonal covariance; the shared prior drift network; the fixed diffusion coefficient $\vol = I$ for all methods, including \gls{sde} Matching, which in \citet{bartosh2025sde} uses a state-dependent one; and a linear-Gaussian observation model $\Norm(x \mid C z + c, R)$ with learned $C$ and $c$ and fixed $R$.

\textbf{Training.} $3 \cdot 10^4$ iterations, batches of 256 trials, learning rate $10^{-3}$, for all methods.

\textbf{Metrics.} The \emph{negative \gls{elbo}} is \cref{eq:elbo} per test trial. Its initial \gls{kl} term is $0.5$--$0.6$ nats for every method; its path-space \gls{kl} term is $57.3$--$59.6$ for the three fixed gauges against $33.5$ and $33.7$ for Helmholtz-\gls{sde} and Gauge Matching, which pay about one nat more in the reconstruction term ($19.0$ and $18.9$ against $17.6$--$17.7$) for a $24$--$26$ nat lower path-space term.

The \emph{latent log-likelihood} follows \citet{smith2026closing}: writing $y^{(k)}(t_i)$ for the noise-free standardized true state of test trial $k$ at observation time $t_i$, and $m_\phi$, $P_\phi = L_\phi L_\phi^\top$ for the posterior marginal of that trial at $t_i$, it is $\frac{1}{N K} \sum_{k=1}^{K} \sum_{i=1}^{N} \log \Norm\bigl(y^{(k)}(t_i) \mid C m_\phi(t_i) + c, \, C P_\phi(t_i) C^\top + R\bigr)$ over the $N$ observation times and $K = 2048$ test trials, with the learned observation model $(C, c, R)$: the log-density of the true state under the posterior predictive.

The \emph{autocorrelation error} is computed from 2048 sample paths of the learned prior process started from the posterior at $t = 0$. For lags $\tau$ on a grid of step $0.05$ in $[0, 2]$, we form the $3 \times 3$ correlation matrix of the observed coordinates between times $t$ and $t + \tau$, average the squared Frobenius distance to the corresponding matrix of the true prior process over $t$ and $\tau$, and report the root mean square over entries. The Monte Carlo floor of this estimator at this sample size is $0.024$, and the true dynamics started from the fitted posterior at $t = 0$ score $0.030$; the gauge-corrected models, at $0.052$--$0.056$, are therefore within $0.03$ of what the estimator can resolve. \Cref{fig:lorenzsamples} shows sample paths of each learned prior process against the true one.

The \emph{gauge excess} is defined above. The \emph{training time} quoted in \cref{sec:exp:dimsweep} is the wall-clock time to $3 \cdot 10^4$ iterations, averaged over seeds; the five methods shared one GPU, so only ratios are meaningful: relative to \gls{sde} Matching, the Cholesky gauge, the symmetric gauge, Helmholtz-\gls{sde} and Gauge Matching took $1.06$, $1.27$, $1.43$ and $1.17$ times as long. On the Stuart--Landau oscillator at $\kappa = 3$, measured the same way, Helmholtz-\gls{sde} took $1.57$ times as long as Gauge Matching and Gauge Matching $1.29$ times as long as \gls{sde} Matching.

\textbf{Comparison with published numbers.} Our negative \gls{elbo} values are $6$--$10$ nats above those reported by \citet{smith2026closing} for the same methods, uniformly across methods, so the differences between \gls{sde} Matching and Helmholtz-\gls{sde} agree with theirs. The offset is not overfitting, since training and test values agree to within $1$ nat; it is consistent with a difference in batch size or data draw and, for \gls{sde} Matching, with our use of the fixed diffusion coefficient.

\textbf{Robustness to observation spacing and noise.} We repeat the experiment for the observation spacings $\Delta t \in \{0.125, 0.25, 0.5\}$, that is $N = 41$, $21$ or $11$ observations per trial on $[0, 5]$, and the observation noise levels $\sigma \in \{0.15, 0.3, 0.6\}$; $(0.25, 0.3)$ is the protocol of \cref{tab:lorenz}. The same 1024 training and 2048 test trajectories underlie every protocol, so only the sampling of the observations changes; the model, the training and the metrics are those of \cref{tab:lorenz}, with the fixed observation noise $R = \sigma^2 I$ set to the true value, for \gls{sde} Matching, the Cholesky gauge, Helmholtz-\gls{sde} and Gauge Matching (the symmetric gauge, which tracks the Cholesky gauge in \cref{tab:lorenz}, is omitted), three seeds per protocol.

The negative \gls{elbo} is a trimmed mean over the test trials that excludes trials more than ten interquartile ranges above the median; this excludes no trial in most runs and never changes a mean by more than $0.5$ nats except where the plain mean is infinite (see the remarks). \Cref{tab:lorenzrobustness} and \cref{fig:lorenzrobustness} report the results.

For $\Delta t \le 0.25$ and $\sigma \le 0.3$ the fixed gauges trail Helmholtz-\gls{sde} by $20$--$25$ nats in every protocol (their \emph{deficit}, the difference of the negative \glspl{elbo}), all of it in the path-space \gls{kl} term ($+25$ to $+29$ nats against $-1$ to $-6$ in the reconstruction term), while Gauge Matching is within $-0.7$ to $+1.8$ nats of Helmholtz-\gls{sde} in every protocol; the gauge excess of the fixed gauges is $2.5$--$3.1$ nats per unit time and that of Gauge Matching $0.04$--$0.07$, and the norm of the learned gauge matches that of the optimal gauge at its own marginals to within $1\%$.

The deficit of the fixed gauges does not grow with sparser or noisier observations: it disappears. At $\sigma = 0.6$ it is $18$--$21$ nats for $\Delta t = 0.125$ but $9$ nats for $\Delta t = 0.25$, and at $\Delta t = 0.5$ it is $-0.5$ to $+4.8$ nats, within two standard errors of zero in five of the six cases. The autocorrelation error explains why: it is $0.03$--$0.06$ for the gauge-corrected models and $0.04$--$0.13$ for the fixed gauges where the deficit is large, but $0.23$--$0.25$ for all four posterior processes at $\Delta t = 0.5$, so with eleven observations per trial the fitted prior drift no longer recovers the rotation within the lobes; the optimal gauge at the fixed-gauge marginals shrinks from $\|\gauge^\star\| \approx 12$ to $0.3$--$4$ and their gauge excess to below $0.8$ nats per unit time.

\textbf{Remarks.} On rare test trials ($1$--$6$ of $2048$ in twelve of the $96$ new runs) the amortized posterior returns a marginal covariance with one eigenvalue of order $10^{-9}$. For Helmholtz-\gls{sde} the Sylvester system of \cref{eq:optimalgauge} is then nearly singular, the gauge it returns has norm $10^{6}$--$10^{12}$, and the path-space \gls{kl} term of that trial is $10^{8}$--$10^{12}$ nats, which is why the plain mean of two runs is $10^{8}$ or infinite; the same trials are unremarkable for Gauge Matching, whose $\gauge_\phi$ is a bounded network output. Two training runs diverged, Helmholtz-\gls{sde} at $(0.5, 0.3)$ and \gls{sde} Matching at $(0.5, 0.6)$, in both cases a posterior standard deviation collapsing within a few iterations, reproducibly for the seed; they were replaced by a further seed. Nothing of this kind occurred for $\Delta t \le 0.25$.

\begin{figure}[t]
    \centering
    \includegraphics[width=\linewidth]{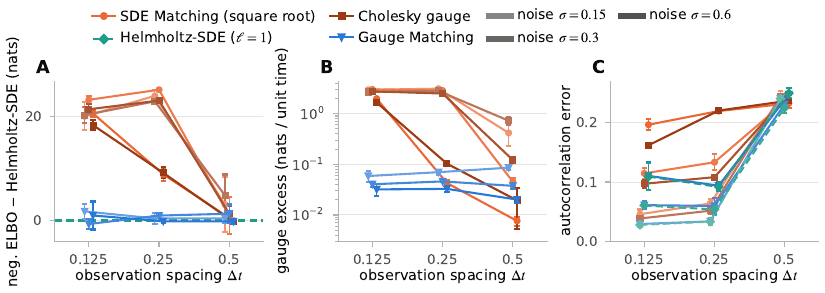}
    \caption{Noisy Lorenz attractor under nine observation protocols: spacing $\Delta t \in \{0.125, 0.25, 0.5\}$ ($N = 41, 21, 11$ observations per trial) and noise $\sigma \in \{0.15, 0.3, 0.6\}$ (light to dark), $d = 4$, the model and training of \cref{tab:lorenz}, three seeds per protocol. A: negative \gls{elbo} relative to Helmholtz-\gls{sde}, paired over seeds. B: gauge excess (log scale). C: autocorrelation error of the learned prior process. Mean $\pm 2$ standard errors over seeds.}
    \label{fig:lorenzrobustness}
\end{figure}

\begin{table}[t]
    \centering
    \caption{Noisy Lorenz attractor under nine observation protocols (spacing $\Delta t$, $N$ observations per trial, noise $\sigma$); the row $(0.25, 21, 0.3)$ is the protocol of \cref{tab:lorenz}, here over its first three seeds. Negative \gls{elbo} per test trial and gauge excess in nats per unit time, mean $\pm 2$ standard errors over three seeds; the gauge excess of the Cholesky gauge is $10\%$ below the square root's where the deficit is large, and that of Helmholtz-\gls{sde} is $0.000$--$0.005$.}
    \label{tab:lorenzrobustness}
    \small
    \setlength{\tabcolsep}{3pt}
    \begin{tabular}{ccccccccc}
        \toprule
        & & & \multicolumn{4}{c}{Negative \gls{elbo} $\downarrow$} & \multicolumn{2}{c}{Gauge excess $\downarrow$} \\
        \cmidrule(lr){4-7} \cmidrule(lr){8-9}
        $\Delta t$ & $N$ & $\sigma$ & Sq.\ root & Cholesky & Helmholtz & Gauge M. & Sq.\ root & Gauge M. \\
        \midrule
        $0.125$ & $41$ & $0.15$ & $45.8 \pm 2.1$ & $46.0 \pm 1.1$ & $25.7 \pm 0.7$ & $27.5 \pm 2.1$ & $2.97 \pm 0.02$ & $0.059 \pm 0.014$ \\
        $0.125$ & $41$ & $0.3$ & $98.2 \pm 0.4$ & $96.4 \pm 0.8$ & $75.0 \pm 0.3$ & $74.3 \pm 0.7$ & $3.02 \pm 0.09$ & $0.040 \pm 0.005$ \\
        $0.125$ & $41$ & $0.6$ & $164.6 \pm 0.1$ & $162.4 \pm 0.2$ & $144.1 \pm 0.8$ & $145.0 \pm 2.1$ & $1.98 \pm 0.06$ & $0.032 \pm 0.009$ \\
        $0.25$ & $21$ & $0.15$ & $60.8 \pm 1.1$ & $59.7 \pm 0.5$ & $36.8 \pm 0.5$ & $37.2 \pm 1.4$ & $3.12 \pm 0.04$ & $0.070 \pm 0.007$ \\
        $0.25$ & $21$ & $0.3$ & $77.8 \pm 0.5$ & $75.7 \pm 0.3$ & $52.6 \pm 0.4$ & $53.5 \pm 0.8$ & $2.86 \pm 0.10$ & $0.046 \pm 0.001$ \\
        $0.25$ & $21$ & $0.6$ & $92.7 \pm 0.6$ & $93.0 \pm 1.7$ & $83.9 \pm 0.7$ & $83.7 \pm 0.5$ & $0.04 \pm 0.01$ & $0.033 \pm 0.004$ \\
        $0.5$ & $11$ & $0.15$ & $49.3 \pm 5.2$ & $51.1 \pm 3.2$ & $46.3 \pm 1.0$ & $46.7 \pm 0.9$ & $0.42 \pm 0.19$ & $0.086 \pm 0.009$ \\
        $0.5$ & $11$ & $0.3$ & $41.6 \pm 1.2$ & $42.4 \pm 0.2$ & $40.6 \pm 1.4$ & $42.4 \pm 0.0$ & $0.05 \pm 0.01$ & $0.038 \pm 0.005$ \\
        $0.5$ & $11$ & $0.6$ & $47.6 \pm 0.5$ & $47.9 \pm 0.2$ & $48.1 \pm 0.5$ & $48.0 \pm 0.0$ & $0.01 \pm 0.00$ & $0.020 \pm 0.003$ \\
        \bottomrule
    \end{tabular}
\end{table}

\begin{figure}[t]
    \centering
    \includegraphics[width=\linewidth]{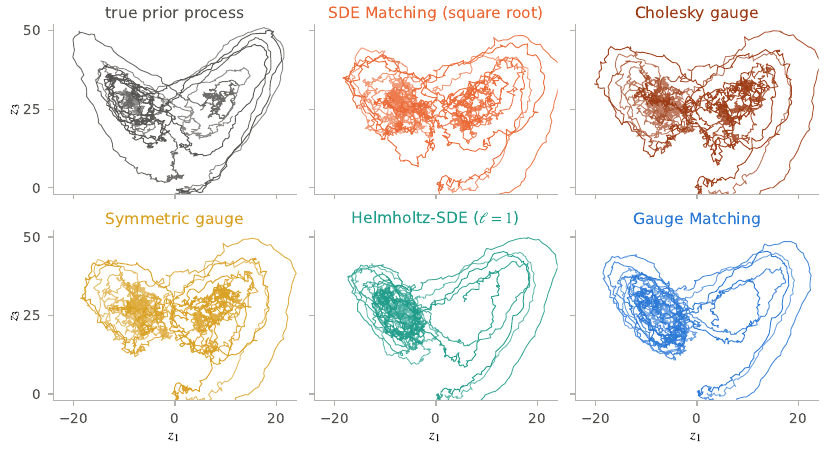}
    \caption{Three samples from the true prior process of the noisy Lorenz attractor and from the prior processes learned with each posterior process. The fixed gauges miss the outward spiral within each lobe of the attractor; the learned gauge and Helmholtz-\gls{sde} recover it.}
    \label{fig:lorenzsamples}
\end{figure}

\paragraph{Known Nonlinear Drift: Stuart--Landau Oscillator}
To isolate the effect of the expected Jacobian in \cref{eq:optimalgauge} from the learning of the prior, we take a two-dimensional oscillator with a cubic drift coefficient and fix the generative model at the truth, as in the conjugate setting, so that only the posterior process is learned.

\textbf{Why the drift must be known.} With a learned prior drift, Gauge Matching and Helmholtz-\gls{sde} coincide on every system we tried, including van der Pol and Stuart--Landau oscillators: a drift network fitted through posteriors as wide as those encountered here is close to affine over the posterior support, its expected Jacobian equals its Jacobian at the mean to $10^{-3}$, and the order-one correction is exact for it. This, and not the degree of the true drift, is why the Lorenz and Lotka--Volterra systems do not separate the two methods.

\textbf{Which oscillator.} With the true drift fixed, the separation depends on the oscillator. For $d = 2$ and isotropic $\Sigma = s^2 I$ the gauge excess of the order-one correction has the closed form $\delta^2 \det P / (2 s^2 \tr P)$, where $P$ is the posterior covariance and $\delta = \Delta J_{21} - \Delta J_{12}$ is twice the off-diagonal entry of the skew-symmetric part of $\Delta J = \E_{q(z)}[\nabla_z \priordrift(Z)] - \nabla_z \priordrift(m)$, the difference between the expected Jacobian and the Jacobian at the mean; the symmetric part of $\Delta J$ does not enter.

For the van der Pol oscillator $\priordrift(z) = (z_2, \, \mu (1 - z_1^2) z_2 - z_1)$ one finds $\delta = -2 \mu P_{12}$, which vanishes for the nearly isotropic posteriors of a well-observed limit cycle (in pilot runs with $\mu \in \{2, 3\}$ the excess of Helmholtz-\gls{sde} stayed below $0.001$ nats), and the Duffing oscillator behaves alike. We therefore use the Stuart--Landau oscillator with shear, $\priordrift(z) = (a_0 - |z|^2) z + (\omega + \kappa |z|^2) J z$, for which $\delta = 4 \kappa \tr P$ and the excess is $8 \kappa^2 \tr P \det P / s^2$.

\textbf{Data.} $a_0 = \omega = 1$, $\kappa \in \{2, 3\}$, additive noise $s = 0.5$, $z(0) \sim \Norm(0, I)$, on $[0, 10]$ by Euler--Maruyama with step $10^{-3}$; observations of both coordinates with Gaussian noise of standard deviation $0.5$ at $t = 0, 1, \ldots, 10$ ($N = 11$); 1024 training and 2048 test trials.

\textbf{Model.} The prior drift, the diffusion coefficient $s I$, the initial law $\Norm(0, I)$ and the observation model $C = I$, $c = 0$ are fixed at their true values.

\textbf{Posterior.} The shared architecture, with a single gauge entry $\gauge_{12}$ for Gauge Matching.

\textbf{Training.} $3 \cdot 10^4$ iterations, batches of 256 trials, learning rate $10^{-3}$, 5 seeds.

\textbf{Metrics.} Those of \cref{tab:lorenz} on the 2048 test trials. The negative \gls{elbo} is evaluated on a grid of 500 times with 4 noise samples each and excludes the rare test trials whose path-space \gls{kl} term exceeds $10^3$ nats: the cubic drift makes this term explode for a trial when the amortized posterior is badly off for it (one of $5 \times 2048$ trials for Helmholtz-\gls{sde} at $\kappa = 3$; three in total at $\kappa = 2$). The latent log-likelihood is evaluated at the observation times. The autocorrelation error uses lags in $[0, 4]$ and 2048 prior sample paths started from the posterior at $t = 0$; its Monte Carlo floor is $0.022$, and the true dynamics started from the fitted posterior score $0.040$--$0.044$ at $\kappa = 3$. The gauge excess uses the expected Jacobian of the true drift.

\textbf{Results.} At $\kappa = 3$ the mean posterior variance is $P_{11} \approx P_{22} \approx 0.06$, for which the closed form predicts a gauge excess of the order-one correction of $0.12$--$0.14$ nats per unit time; we measure $0.135 \pm 0.006$, against $0.014 \pm 0.003$ for Gauge Matching and $0.66$--$0.69$ for the fixed gauges (\cref{tab:hopf}, \cref{fig:hopf}). The negative \gls{elbo} of Gauge Matching is lower than that of Helmholtz-\gls{sde} by $1.6 \pm 0.3$ nats, paired over seeds and in the same direction for all five: it gains $3.0$ nats in the path-space \gls{kl} term and $0.1$ in the initial \gls{kl} term, and gives $1.5$ back in the reconstruction term through a slightly wider posterior, whose latent log-likelihood is accordingly $0.025$ lower. At $\kappa = 2$ the gauge excess of the order-one correction drops to $0.052 \pm 0.004$ ($0.028 \pm 0.003$ for Gauge Matching), roughly the factor $(2/3)^2$ that the closed form predicts, and the \gls{elbo} difference, $0.1 \pm 0.8$ nats, is within seed noise.

\begin{table}[t]
    \centering
    \caption{Stuart--Landau oscillator with shear $\kappa = 3$, $d = 2$, generative model known, posterior process learned. Columns as in \cref{tab:lorenz} without the time, with the autocorrelation error over lags in $[0, 4]$; the negative \gls{elbo} excludes held-out trials whose path-space \gls{kl} term exceeds $10^3$ nats (one of $5 \times 2048$ for Helmholtz-\gls{sde}, none otherwise). Mean $\pm 2$ standard errors over 5 seeds; the seed-paired \gls{elbo} difference between Gauge Matching and Helmholtz-\gls{sde} is $1.6 \pm 0.3$ nats.}
    \label{tab:hopf}
    \small
    \setlength{\tabcolsep}{5pt}
    \begin{tabular}{lcccc}
        \toprule
        Posterior process & Negative \gls{elbo} $\downarrow$ & Latent LL $\uparrow$ & Autocorr.\ error $\downarrow$ & Gauge excess $\downarrow$ \\
        \midrule
        \gls{sde} Matching (square root) & $64.5 \pm 0.1$ & $-1.89 \pm 0.01$ & $0.054 \pm 0.001$ & $0.685 \pm 0.020$ \\
        Cholesky gauge & $64.3 \pm 0.1$ & $-1.89 \pm 0.02$ & $0.053 \pm 0.002$ & $0.658 \pm 0.015$ \\
        Symmetric gauge & $64.5 \pm 0.2$ & $-1.89 \pm 0.01$ & $0.054 \pm 0.001$ & $0.686 \pm 0.017$ \\
        Helmholtz-\gls{sde} ($\ell = 1$) & $54.1 \pm 0.1$ & $-1.67 \pm 0.01$ & $0.043 \pm 0.002$ & $0.135 \pm 0.006$ \\
        Gauge Matching & $52.5 \pm 0.3$ & $-1.69 \pm 0.01$ & $0.044 \pm 0.004$ & $0.014 \pm 0.003$ \\
        \midrule
        Gauge Matching, diagonal $L_\phi$ & $52.9 \pm 0.2$ & $-1.71 \pm 0.01$ & $0.045 \pm 0.003$ & $0.012 \pm 0.003$ \\
        \bottomrule
    \end{tabular}
\end{table}

\begin{figure}[t]
    \centering
    \includegraphics[width=\linewidth]{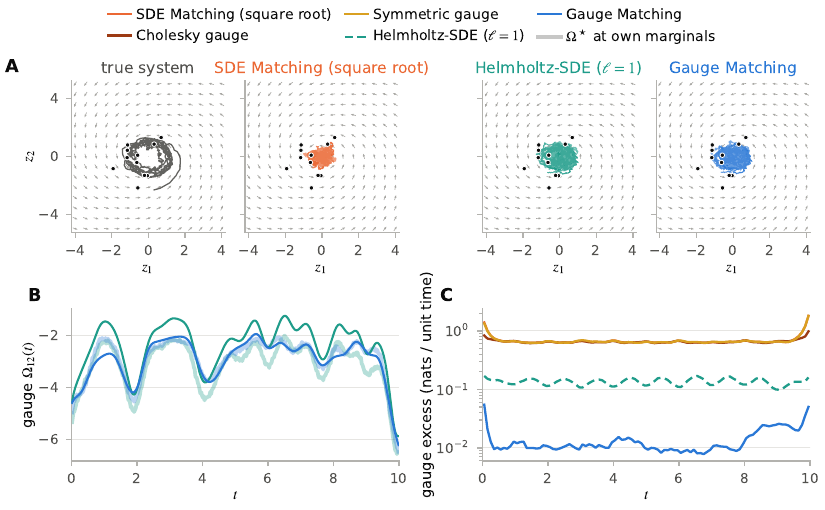}
    \caption{Stuart--Landau oscillator with shear $\kappa = 3$, known generative model. \emph{A:} the true drift field with the trajectory and observations (dots) of one test trial, and three sample paths of the true prior process started from the initial marginal $q_\phi(0, z \mid \obs)$ fitted by \gls{sde} Matching, Helmholtz-\gls{sde} and Gauge Matching; since the generative model is fixed at the truth, the panels differ only through the fitted initial marginal. \emph{B:} the gauge $\gauge_{12}(t)$ that each of the two gauge-corrected methods uses for this trial, against the optimal gauge \cref{eq:optimalgauge} at its own marginals (thick, light). \emph{C:} gauge excess $\mathcal{J}(\gauge) - \mathcal{J}(\gauge^\star)$ over time, averaged over 64 test trials and all seeds, for the five posterior processes (log scale): the order-one correction is off by the closed-form amount $8 \kappa^2 \tr P \det P / s^2 \approx 0.13$ nats per unit time, the learned gauge by a tenth of it.}
    \label{fig:hopf}
\end{figure}

\paragraph{State-Dependent Noise: Stochastic Lotka--Volterra}
\textbf{Data.} The system is $\diff x = (\alpha x - \beta x y) \diff t + \sigma x \diff W^1$, $\diff y = (\delta x y - \gamma y) \diff t + \sigma y \diff W^2$ with $(\alpha, \beta, \delta, \gamma) = (2/3, 4/3, 1, 1)$, $\sigma = 0.15$ and initial state $(0.9, 0.2)$ perturbed by $\Norm(0, 0.05^2 I)$, as in \citet{course2023amortized} and \citet{bartosh2025sde}. Trajectories are simulated by Euler--Maruyama in log coordinates with step $10^{-3}$ on $[0, 40]$, about five periods of the deterministic cycle through the initial state, which has period $8.0$. Observations are the two populations with Gaussian noise of standard deviation $0.1$ at $N = 30$ uniformly random times per trial; 2048 training and 512 test trials. In \emph{log coordinates} the same trajectories are transformed to $(\ell_1, \ell_2) = (\log x, \log y)$; by It\^o's lemma the diffusion coefficient is then the constant $\sigma I$ and the drift coefficient is $(\alpha - \beta e^{\ell_2} - \sigma^2/2, \; \delta e^{\ell_1} - \gamma - \sigma^2/2)$, and the observation noise is added to the log-populations.

\textbf{Model.} The latent \gls{sde} has $d = 2$ and the shared prior drift network. The observation model is fixed at the identity, $C = I$ and $c = 0$, with the true noise level, so that the latent state is the (log-)population vector. In log coordinates the diffusion coefficient is fixed at its true value $\sigma I$, as $\vol = I$ is for the Lorenz system. In the original coordinates it is learned and diagonal, $\vol_{ii}(z) = \sqrt{z_i^2 + \epsilon^2} \, s_i(z_i)$ with $\epsilon = 0.1$ and $s_i$ the sigmoid-bounded network of \citet{bartosh2025sde} (one hidden layer of width 100, softplus activations, output in $(0, 1)$, initialized at $0.3$). The noise thus scales with the population as the true $\sigma z_i$ does, and the floor $\epsilon$, of the size of the observation noise, keeps the path-space \gls{kl} term bounded where the Gaussian posterior puts mass at non-positive populations; without it training diverges. Helmholtz-\gls{sde} and the symmetric gauge are not applicable in the original coordinates.

\textbf{Posterior.} The shared architecture; since $T = 40$, the posterior MLP receives $t / T$ alongside $t$.

\textbf{Training.} $3 \cdot 10^4$ iterations, batches of 128 trials, learning rate $10^{-3}$, 10 seeds.

\textbf{Metrics.} All three metrics are computed from 1024 sample paths of the learned prior process, started from the learned initial distribution and simulated with step $2 \cdot 10^{-3}$ over a horizon of $80$ time units, about eight cycles. In the original coordinates the learned diffusion coefficient keeps the populations positive only approximately, and sample paths that leave the box $[-1, 20]^2$ are excluded ($7$--$8\%$ on average, up to $31\%$ for one seed, most of them beyond the training horizon $T = 40$; clipping the populations at zero instead changes no reported value).

The \emph{period} is the location of the first nonzero maximum of the autocorrelation of the prey coordinate, averaged over paths. The \emph{lag} is the location, within one period, of the maximum of the cross-correlation between predator and prey. Both are reported as relative errors against the values for the true system computed by the same procedure, which are period $10.2$ and lag $2.9$ in log coordinates and $9.8$ and $1.5$ in the original coordinates; these periods exceed the deterministic $8.0$ because the process noise drives trajectories onto wider orbits of the conserved-quantity family, whose periods are longer. The \emph{autocorrelation error} is computed as for the Lorenz system, with lags in $[0, 10]$.

\textbf{Results.} \Cref{tab:lv} reports the metrics. In log coordinates the gauge excess is $0.215 \pm 0.006$ for the fixed gauge, $0.000$ for Helmholtz-\gls{sde} and $0.004 \pm 0.001$ for Gauge Matching: the posterior covariance is small relative to the curvature of the exponential drift terms, so the Jacobian at the mean already gives the optimal gauge, and the two gauge-corrected methods agree within seed noise. In the original coordinates Gauge Matching improves all three metrics over the fixed gauge on every seed, with a gauge excess of $0.003$ against $0.081$ nats per unit time; the Cholesky gauge and Gauge Matching with a diagonal $L_\phi$ (\cref{sec:ablationgauge}) confirm that the gain is the gauge.

\textbf{Remarks.}
\begin{itemize}[nosep,leftmargin=*]
    \item An independent draw of the true system scores $0.01$--$0.02$ on all three metrics, which is the noise floor of the estimators.
    \item The negative \gls{elbo} on the training trials, evaluated with the test estimator, lies about $30$ nats below the test value for every method (about $45$ with 512 training trials), so the held-out metrics are not limited by overfitting.
    \item In the original coordinates the learned relative noise level $s_i$ is $0.25$--$0.30$ at population $1$, against the true $\sigma = 0.15$, which widens the sample paths relative to the true system (\cref{fig:lv}).
\end{itemize}

\begin{table}[t]
    \centering
    \caption{Stochastic Lotka--Volterra system with sparse noisy observations, $d = 2$, in log coordinates, where the diffusion coefficient is state-independent and fixed at its true value, and in the original coordinates, where it is multiplicative and learned, and Helmholtz-\gls{sde} does not apply. Columns: relative error of the period of the cycle; relative error of the lag of the predator behind the prey; autocorrelation error as in \cref{tab:lorenz}, lags in $[0, 10]$, all for the learned prior process against the true system; gauge excess in nats per unit time as in \cref{tab:lorenz}, in the original coordinates from the state-dependent linear system of \cref{sec:optimalgaugeconsequences} estimated by Monte Carlo ($10^3$ samples at $100$ times on $64$ test trials). Mean $\pm 2$ standard errors over 10 seeds.}
    \label{tab:lv}
    \small
    \setlength{\tabcolsep}{4pt}
    \begin{tabular}{llcccc}
        \toprule
        Coord. & Posterior process & Period err.\ $\downarrow$ & Lag err.\ $\downarrow$ & Autocorr.\ err.\ $\downarrow$ & Gauge excess $\downarrow$ \\
        \midrule
        \multirow{3}{*}{log}
        & \gls{sde} Matching (square root) & $0.12 \pm 0.04$ & $0.09 \pm 0.05$ & $0.119 \pm 0.019$ & $0.215 \pm 0.006$ \\
        & Helmholtz-\gls{sde} ($\ell = 1$) & $0.09 \pm 0.03$ & $0.07 \pm 0.03$ & $0.100 \pm 0.015$ & $0.000 \pm 0.000$ \\
        & Gauge Matching & $0.07 \pm 0.02$ & $0.05 \pm 0.03$ & $0.093 \pm 0.018$ & $0.004 \pm 0.001$ \\
        \midrule
        \multirow{4}{*}{original}
        & \gls{sde} Matching (square root) & $0.22 \pm 0.05$ & $0.42 \pm 0.03$ & $0.191 \pm 0.022$ & $0.081 \pm 0.002$ \\
        & Cholesky gauge & $0.22 \pm 0.05$ & $0.44 \pm 0.04$ & $0.192 \pm 0.024$ & $0.082 \pm 0.004$ \\
        & Gauge Matching, diagonal $L_\phi$ & $0.13 \pm 0.05$ & $0.35 \pm 0.03$ & $0.139 \pm 0.022$ & $0.002 \pm 0.001$ \\
        & Gauge Matching & $0.13 \pm 0.03$ & $0.34 \pm 0.02$ & $0.140 \pm 0.015$ & $0.003 \pm 0.001$ \\
        \bottomrule
    \end{tabular}
\end{table}

\begin{figure}[t]
    \centering
    \includegraphics[width=\linewidth]{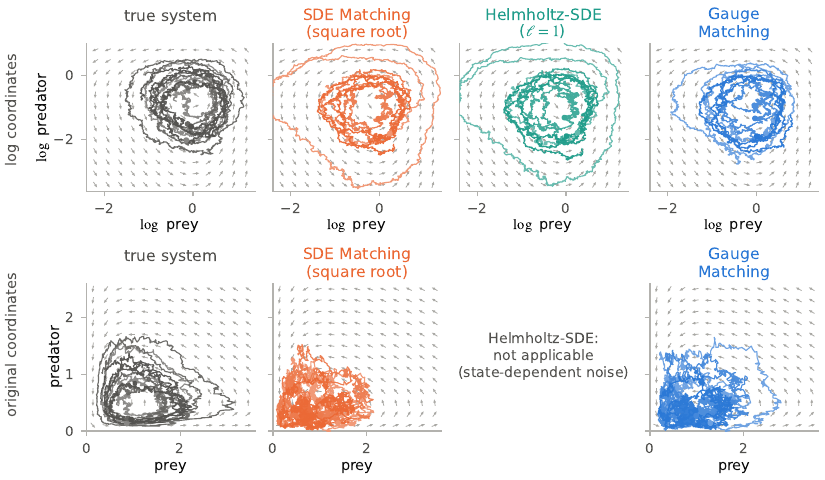}
    \caption{Learned prior dynamics of the stochastic Lotka--Volterra system. Drift field and three sample paths of the true system and of the models learned with each posterior process, in log coordinates with the diffusion coefficient fixed at its true value (top) and in the original coordinates with a learned state-dependent diffusion coefficient (bottom). In log coordinates all three posterior processes learn a drift field that sustains the cycle. In the original coordinates the learned multiplicative noise is larger than the true one, which widens the sample paths, and the cycle of the fixed gauge is smaller and less regular than that of the learned gauge.}
    \label{fig:lv}
\end{figure}

\paragraph{State-Dependent Noise in Higher Dimension}
To test the state-dependent case beyond $d = 2$, where the optimal gauge is not a Jacobian but the solution of the $d(d-1)/2$-dimensional linear system of \cref{sec:optimalgaugeconsequences}, we use the rotating Ornstein--Uhlenbeck process of the dimension sweep with a multiplicative diffusion coefficient.

\textbf{Data.} $\diff Z = (-\alpha I + \omega K) Z \diff t + \sigma \operatorname{diag}\bigl(\sqrt{1 + Z_i^2}\bigr) \diff W$ in $d = 8$ with $\alpha = 0.5$, $\omega = \pi$, $\sigma = 0.6$ and $K = (G - G^\top)/\|G - G^\top\|_2$ for a standard normal $G$, which for the system used rotates four planes at angular velocities $\pi |\lambda_j(K)| = 3.14, 1.70, 0.69, 0.02$. Since $Z^\top K Z = 0$, the process is stationary for $\sigma^2 < 2\alpha$ with second moment $s_0^2 = \sigma^2 / (2\alpha - \sigma^2) = 0.5625$ per coordinate, and $Z_0 \sim \Norm(0, s_0^2 I)$. Trajectories are simulated on $[0, 5]$ by Euler--Maruyama with step $10^{-3}$; all $d$ coordinates are observed through the identity every $\Delta t = 0.25$ ($N = 21$) with Gaussian noise of standard deviation $0.3$; 1024 training, 2048 test trials and 4096 independent reference trajectories, one system $K$ shared by all methods and seeds.

\textbf{Model.} Two protocols. \emph{Known generative model:} drift $A z$, diffusion coefficient $\sigma \operatorname{diag}(\sqrt{1 + z_i^2})$, initial law $\Norm(0, s_0^2 I)$ and observation model $C = I$, $c = 0$ fixed at their true values, so that only the posterior process is learned, as for the Stuart--Landau oscillator. \emph{Learned generative model:} the shared prior drift network, the state-dependent diagonal diffusion network of \citet{bartosh2025sde}, $\vol_{ii}(z) = 2 \operatorname{sigmoid}(\mathrm{MLP}_i(z_i))$ with one hidden layer of width 100 and softplus activations, a Gaussian initial law with learned mean and diagonal covariance, and the observation model fixed at the identity with the true noise level, as for the Lotka--Volterra system. Helmholtz-\gls{sde} and the symmetric gauge do not apply in either protocol.

\textbf{Posterior.} The shared architecture; four posterior processes: \gls{sde} Matching (diagonal $L_\phi$), the Cholesky gauge (full $L_\phi$), Gauge Matching (full $L_\phi$ and the 28 entries of $\gauge_\phi$) and Gauge Matching with a diagonal $L_\phi$ (\cref{sec:ablationgauge}).

\textbf{Training.} $3 \cdot 10^4$ iterations, batches of 256 trials, learning rate $10^{-3}$, 5 seeds.

\textbf{Metrics.} Negative \gls{elbo} and latent log-likelihood as in \cref{tab:lorenz}; the gauge excess with $\gauge^\star$ from the linear system $Q \gamma = b$ of \cref{sec:optimalgaugeconsequences}, whose coefficients and $\mathcal{J}$ are estimated from the same Monte Carlo samples as above, the residual $r_0$ including the divergence term $\frac{1}{2} \nabla \cdot \Sigma$ of \cref{eq:driftwhitened}. For the learned generative model also the autocorrelation error of 2048 prior sample paths over lags in $[0, 2]$ against the reference trajectories, the relative root-mean-square error of the learned diffusion coefficient against $\sigma \sqrt{1 + z_i^2}$ over $10^4$ stationary states, and the rotation rates of the learned drift, the imaginary parts of the eigenvalues of its Jacobian averaged over stationary states, against $\pi |\lambda_j(K)|$.

\textbf{Results.} \Cref{tab:rousd}. With the generative model known, Gauge Matching gains $6.6 \pm 0.8$ nats of negative \gls{elbo} over \gls{sde} Matching and $6.9 \pm 0.6$ over the Cholesky gauge, paired over seeds and on every seed. The gain is in the path-space \gls{kl} term ($94$--$95$ for the fixed gauges against $84$--$85$ for the learned ones), of which the gauge-corrected models give $1$--$2$ nats back in the reconstruction term ($53$--$54$ against $55$) through a slightly wider posterior, whose latent log-likelihood is accordingly $0.05$ lower. The gauge excess of the fixed gauges is $1.29$--$1.40$ nats per unit time against $0.041$--$0.049$ for the learned ones, whose $\|\gauge_\phi\|_F = 5.2$--$5.3$ matches $\|\gauge^\star\|_F = 5.3$ at their own marginals; the optimal gauge at the fixed gauges' marginals has the same norm, so the fixed gauges miss a rotation of that size.

With the generative model learned, the picture is the same at a smaller scale: a paired gain of $4.8 \pm 0.3$ nats over \gls{sde} Matching and $5.1 \pm 0.5$ over the Cholesky gauge ($8$ in the path-space term, $3$ given back in the reconstruction term), and a gauge excess of $0.85$--$0.88$ against $0.040$--$0.046$. The learned dynamics improve with the gauge: the autocorrelation error of the prior process drops from $0.072$ to $0.040$ (Monte Carlo floor $0.029$), and the rotation rates of the learned drift are $2.94, 1.60, 0.61$ against the true $3.14, 1.70, 0.69$ for the learned gauges but $2.55$--$2.58, 1.41, 0.54$ for the fixed ones, which absorb part of the rotation into a more contractive drift, as in the dimension sweep. The learned diffusion coefficient is the same for all four posterior processes and only approximately right (relative root-mean-square error $0.15$): the sigmoid network captures the level of the noise at $z_i = 0$ but underestimates its growth with $|z_i|$, equally for every gauge. Diagonal and full $L_\phi$ agree within a nat in both protocols, as for the Stuart--Landau and Lotka--Volterra systems (\cref{sec:ablationgauge}).

\textbf{Remarks.}
\begin{itemize}[nosep,leftmargin=*]
    \item The negative \gls{elbo} excludes test trials whose total exceeds $10^3$ nats (the criterion of \cref{tab:hopf}, applied to the total instead of the path-space \gls{kl} term): with $d = 8$ and a wide posterior, the amortized posterior is far off for $0$--$3$ of the $2048$ test trials per seed (one trial of the Cholesky gauge with a learned model at $9 \cdot 10^4$ nats), which dominates the plain mean without changing the ranking; the plain means are $158.5$, $156.3$, $149.0$, $148.4$ (known) and $150.5$, $159.7$, $145.7$, $146.1$ (learned) in the row order of \cref{tab:rousd}. The path-space and reconstruction terms quoted above are plain means and sum to the plain negative \gls{elbo}.
    \item Since the rotation is fully observed ($C = I$, eight observations per period of the fastest plane), the learned prior recovers most of it, unlike in the partially observed dimension sweep; the gain over the fixed gauges is smaller than with the known model ($4.8$ against $6.6$ nats) because the learned drift co-adapts to the gauge it is trained with.
    \item The state-dependent optimal gauge is never computed during training, only for the gauge-excess metric.
\end{itemize}

\begin{table}[t]
    \centering
    \caption{Rotating Ornstein--Uhlenbeck process in $d = 8$ with multiplicative noise, where Helmholtz-\gls{sde} and the symmetric gauge do not apply. Top: generative model fixed at the truth, posterior process learned. Bottom: prior drift, diffusion coefficient and initial law learned jointly with the posterior process. Columns as in \cref{tab:lorenz} without the time, with the gauge excess evaluated with the state-dependent optimal gauge of \cref{sec:optimalgaugeconsequences} and, for the learned model, the autocorrelation error over lags in $[0, 2]$; the negative \gls{elbo} excludes held-out trials above $10^3$ nats ($0$--$3$ of $2048$ per seed). Mean $\pm 2$ standard errors over 5 seeds.}
    \label{tab:rousd}
    \small
    \setlength{\tabcolsep}{3pt}
    \begin{tabular}{llcccc}
        \toprule
        Model & Posterior process & Neg.\ \gls{elbo} $\downarrow$ & Latent LL $\uparrow$ & Autocorr.\ err.\ $\downarrow$ & Gauge excess $\downarrow$ \\
        \midrule
        \multirow{4}{*}{known}
        & \gls{sde} Matching (square root) & $154.8 \pm 0.8$ & $-1.03 \pm 0.04$ & --- & $1.40 \pm 0.02$ \\
        & Cholesky gauge & $155.1 \pm 0.8$ & $-1.04 \pm 0.04$ & --- & $1.29 \pm 0.02$ \\
        & Gauge Matching & $148.2 \pm 0.8$ & $-1.09 \pm 0.03$ & --- & $0.049 \pm 0.002$ \\
        & Gauge Matching, diagonal $L_\phi$ & $147.3 \pm 0.5$ & $-1.08 \pm 0.03$ & --- & $0.041 \pm 0.001$ \\
        \midrule
        \multirow{4}{*}{learned}
        & \gls{sde} Matching (square root) & $150.5 \pm 0.5$ & $-1.05 \pm 0.02$ & $0.072 \pm 0.003$ & $0.88 \pm 0.02$ \\
        & Cholesky gauge & $150.8 \pm 0.2$ & $-1.06 \pm 0.01$ & $0.071 \pm 0.003$ & $0.85 \pm 0.02$ \\
        & Gauge Matching & $145.7 \pm 0.3$ & $-1.06 \pm 0.01$ & $0.040 \pm 0.002$ & $0.046 \pm 0.001$ \\
        & Gauge Matching, diagonal $L_\phi$ & $145.4 \pm 0.2$ & $-1.06 \pm 0.02$ & $0.040 \pm 0.002$ & $0.040 \pm 0.002$ \\
        \bottomrule
    \end{tabular}
\end{table}

\paragraph{Ablation: Gauge versus Covariance}
\label{sec:ablationgauge}
Gauge Matching differs from \gls{sde} Matching in two respects: the covariance factor $L_\phi$ is a full lower-triangular matrix rather than a diagonal one, and the gauge $\gauge_\phi$ is learned rather than fixed at zero. To attribute the improvement, we add the missing corner of the $2 \times 2$ design, \emph{Gauge Matching with a diagonal $L_\phi$}: the marginals of \gls{sde} Matching ($d$ means and $d$ positive standard deviations) together with the $d(d-1)/2$ entries of a learned skew-symmetric $\gauge_\phi$, all read off the same posterior network; with $L_\phi$ diagonal, the triangular solves in \cref{eq:driftwhitened} are divisions. The Cholesky gauge of the main tables is the other corner, a full $L_\phi$ with $\gauge \equiv 0$.

\textbf{Data, Model, Training.} Identical to the rows the variant joins: the Lorenz protocol of \cref{tab:lorenz}, the Stuart--Landau protocol of \cref{tab:hopf} ($\kappa = 3$), the Lotka--Volterra protocol of \cref{tab:lv} in the original coordinates, the two protocols of \cref{tab:rousd}, the identifiable dimension sweep of \cref{fig:dimsweepident}, and the conjugate setting at $\sigma = 0.3$, $\omega = 2\pi$, $N \in \{5, 10, 20, 50\}$ with the $5 \cdot 10^4$-iteration budget of the two gauge-corrected models. The variant has the parameter count of the Cholesky gauge ($54\,141$ at $d = 4$ against $53\,535$ for \gls{sde} Matching and $54\,747$ for Gauge Matching).

\textbf{Results.} On the Lorenz system the variant reaches a negative \gls{elbo} of $55.3 \pm 1.9$ nats (one seed at $58.8$, the others $53.5$--$55.6$), against $77.9$ for \gls{sde} Matching, $75.5$ for the Cholesky gauge and $53.2$ for Gauge Matching: of the $24.7$ nats between \gls{sde} Matching and Gauge Matching, the gauge accounts for $22.5$ and the covariance shape for $2.2 \pm 1.8$, paired over seeds and entirely in the path-space \gls{kl} term. Its gauge excess is $0.022 \pm 0.001$. On the Stuart--Landau oscillator the variant is within $0.4 \pm 0.2$ nats of Gauge Matching, the difference lying in the reconstruction term, with a gauge excess of $0.012 \pm 0.003$, while the Cholesky gauge stays with the fixed gauges at $64.3$. In the original coordinates of the Lotka--Volterra system, where the diffusion coefficient is learned and state-dependent, the Cholesky gauge scores like \gls{sde} Matching and the variant like Gauge Matching on all three metrics (paired differences within $0.02$ over 10 seeds; \cref{tab:lv}), the variant beating the Cholesky gauge on every seed.

In the conjugate setting the picture differs: the gap of the variant is $2.4$, $2.6$, $2.9$ and $8.6$ nats at $N = 5, 10, 20, 50$, against $0.8$, $0.8$, $1.2$, $3.9$ for Gauge Matching and $19$--$26$ for \gls{sde} Matching; the mean part of the gap is the same for the two learned-gauge models, and the whole difference is in the covariance part ($2.3$--$7.9$ against $0.7$--$3.0$).

\textbf{Remarks.} The smoother covariance of the rotating Ornstein--Uhlenbeck process is anisotropic in the fixed frame, and a diagonal $L_\phi$ can rotate its samples but not its covariance, which is why the gap is largest in the conjugate setting and at large $N$, where the posterior is narrowest and most anisotropic; on the nonlinear systems the learned prior and observation model absorb most of the anisotropy. The gauge head is insensitive to its initialization and learning rate: on the Lorenz system, initializing it at the library default instead of zero, or training it with a learning rate of $0.3$ or $3$ times that of the rest of the network, changes the negative \gls{elbo} of Gauge Matching by $+0.3$ to $+0.6$ nats (3 seeds, paired, within $2$ standard errors of zero) and leaves the gauge excess at $0.045$--$0.049$ and $\|\gauge_\phi\|_F$ within $0.5\%$ of $\|\gauge^\star\|_F$.

\paragraph{Motion Capture}
\textbf{Data.} We follow \citet{li2020scalable} and \citet{bartosh2025sde} exactly: the 50-dimensional recordings of subject 35 of \citet{gan2015deep} with the preprocessing of \citet{wang2007gaussian}; 16 training, 3 validation and 4 test sequences of 300 observations each; $d = 6$; and the architectures and hyperparameters of \citet{li2020scalable}.

\textbf{Training.} 10 random seeds instead of the default 5.

\textbf{Metrics.} The test mean squared error is defined as in \citet[Table 2]{bartosh2025sde}; Gauge Matching reaches $4.52 \pm 0.29$ (mean $\pm$ standard deviation over the 10 seeds), while the values quoted for \gls{sde} Matching ($4.50 \pm 0.32$) and the latent \gls{sde} of \citet{li2020scalable} ($4.03 \pm 0.20$) are those reported by \citet{bartosh2025sde} under the same protocol.

\textbf{Remarks.} With 300 nearly noise-free observations per sequence the posterior concentrates on the trajectory and the path-space \gls{kl} term is small for every gauge, so the gauge has little to contribute; the experiment checks that the additional $d(d-1)/2 = 15$ gauge outputs cost nothing in accuracy.

\end{document}